%% file: tuning.tex
\documentclass[11pt,a4paper]{article}
\usepackage{times}
\usepackage{configuration/ato_report}

\usepackage[hyperfootnotes=false]{hyperref}
\usepackage{url}
\usepackage{xcolor}
\usepackage{booktabs}       
\definecolor{color1}{HTML}{406058}
\definecolor{color2}{HTML}{78A0C8}
\definecolor{color3}{HTML}{3850A0}
\definecolor{color4}{HTML}{6860A8}
\definecolor{color5}{HTML}{A03850}
\definecolor{color6}{HTML}{AE7694}
\definecolor{color7}{HTML}{8488B4}
\hypersetup{
    pdftitle={When Can LLM Reasoning Reliably Reuse Intermediate Results?},
    pdfauthor={Deyuan Liu, Zecheng Wang, Xuanyi Liu, Jungang Li, Yihao Hu, Mingjie Tang, Bryan Dai, Jie Fu},
    colorlinks=true,
    linkcolor=black,
    citecolor=color1,
    urlcolor=color2,
    pdfborder={0 0 0}
}
\usepackage[T1]{fontenc}

\input{resources/preamble.tex}

\title{When Can LLM Reasoning Reliably\\Reuse Intermediate Results?}

\author{
  Deyuan Liu$^{1}$ \quad Zecheng Wang$^{2}$ \quad
  Xuanyi Liu$^{3}$ \quad Jungang Li$^{4}$ \\
  Yihao Hu$^{1,5}$ \quad Mingjie Tang$^{6}$ \quad
  Bryan Dai$^{6}$ \quad Jie Fu$^{6}$\thanks{Corresponding author.} \\[5pt]
  \footnotesize $^{1}$Westlake University \quad $^{2}$Wechat AI \quad $^{3}$Peking University \\
  \footnotesize $^{4}$HKUST \quad $^{5}$Ant Group \quad $^{6}$IQuest Research
}

\begin{document}

\maketitle

\input{resources/main.tex}

\section*{AI Use Statement}
\addcontentsline{toc}{section}{AI Use Statement}
\label{sec:ai_use}
We used generative AI tools to assist manuscript writing, revision, and the development and checking of mathematical derivations. The language models used to generate candidate solutions and continuations, together with the answer-selection procedures, are part of the experimental protocols described in \secref{sec:experiments}. We take responsibility for the final manuscript and its scientific claims.

\clearpage

\phantomsection
\addcontentsline{toc}{section}{References}
\bibliography{resources/reference}
\bibliographystyle{plainnat}

\clearpage
\appendix
\addtocontents{toc}{\protect\setcounter{tocdepth}{1}}

\input{resources/appendix.tex}

\end{document}

%% file: resources/preamble.tex
\usepackage{amsmath,amssymb,amsthm}

\providecommand{\mtheta}{\boldsymbol{\theta}}

\providecommand{\cG}{\mathcal{G}}

\providecommand{\cV}{\mathcal{V}}

\newenvironment{talign*}
{\csname align*\endcsname}
{\endalign}

\usepackage{algorithm}
\usepackage{algorithmic}
\usepackage{graphicx}
\usepackage{subcaption}
\usepackage[flushleft]{threeparttable}
\usepackage{float}
\usepackage{afterpage} \usepackage{multirow}
\usepackage{xspace}
\usepackage{natbib}
\usepackage{enumitem}
\usepackage[font=small]{caption}
\usepackage{autobreak}
\usepackage{sidecap}
\usepackage{wrapfig}
\usepackage{placeins} \usepackage{needspace} \usepackage{bbding}
\usepackage[toc, page, header]{appendix}
\usepackage{tikz}
\usetikzlibrary{arrows.meta,positioning,calc}
\usepackage{pifont}
\usepackage{mdframed}
\usepackage{tcolorbox}
\tcbuselibrary{skins,breakable}
\usepackage{colortbl}
\usepackage{mathrsfs}
\usepackage{footmisc}

\definecolor{coral}{RGB}{255,127,80}
\definecolor{darkgreen}{RGB}{0,100,0}
\definecolor{darkyellow}{RGB}{204,153,0}
\definecolor{salmon}{RGB}{250,128,114}
\definecolor{champagne}{RGB}{247,231,206}
\definecolor{kleinblue}{RGB}{0,47,167}
\definecolor{bauhiniapurple}{RGB}{193,84,193}
\definecolor{darkred}{RGB}{150,0,0}

\newcommand{\darkredtext}[1]{{\color{darkred}#1}}

\newcommand{\thmref}[1]{\hyperref[#1]{\darkredtext{Thm.~\ref*{#1}}}}
\newcommand{\defref}[1]{\hyperref[#1]{\darkredtext{Def.~\ref*{#1}}}}
\newcommand{\propref}[1]{\hyperref[#1]{\darkredtext{Prop.~\ref*{#1}}}}
\newcommand{\assumpref}[1]{\hyperref[#1]{\darkredtext{Assump.~\ref*{#1}}}}
\newcommand{\remarkref}[1]{\hyperref[#1]{\darkredtext{Rem.~\ref*{#1}}}}
\newcommand{\hypref}[1]{\hyperref[#1]{\darkredtext{Hyp.~\ref*{#1}}}}
\newcommand{\conjref}[1]{\hyperref[#1]{\darkredtext{Conj.~\ref*{#1}}}}
\newcommand{\lemref}[1]{\hyperref[#1]{\darkredtext{Lem.~\ref*{#1}}}}
\newcommand{\corref}[1]{\hyperref[#1]{\darkredtext{Cor.~\ref*{#1}}}}
\newcommand{\noteref}[1]{\hyperref[#1]{\darkredtext{Nota.~\ref*{#1}}}}
\newcommand{\claimref}[1]{\hyperref[#1]{\darkredtext{Clm.~\ref*{#1}}}}
\newcommand{\obsref}[1]{\hyperref[#1]{\darkredtext{Obs.~\ref*{#1}}}}

\newcommand{\figref}[1]{\hyperref[#1]{\darkredtext{Figure~\ref*{#1}}}}
\newcommand{\tabref}[1]{\hyperref[#1]{\darkredtext{Table~\ref*{#1}}}}
\newcommand{\secref}[1]{\hyperref[#1]{\darkredtext{Section~\ref*{#1}}}}
\newcommand{\appref}[1]{\hyperref[#1]{\darkredtext{Appendix~\ref*{#1}}}}
\newcommand{\algref}[1]{\hyperref[#1]{\darkredtext{Algorithm~\ref*{#1}}}}
\newcommand{\equref}[1]{\hyperref[#1]{\darkredtext{Equation~\ref*{#1}}}}

\newtheoremstyle{custom}
{1pt} {1pt} {\itshape} {} {\bfseries} {} { } {\thmname{#1} \thmnumber{#2} \thmnote{(#3)} . } 
\theoremstyle{custom}

\newtheorem{innerdefinition}{Definition}
\newtheorem{innerproposition}{Proposition}
\newtheorem{innerassumption}{Assumption}
\newtheorem{innerremark}{Remark}
\newtheorem{innertheorem}{Theorem}
\newtheorem{innerhypothesis}{Hypothesis}
\newtheorem{innerconjecture}{Conjecture}
\newtheorem{innerlemma}{Lemma}
\newtheorem{innercorollary}{Corollary}
\newtheorem{innerexample}{Example}
\newtheorem{innernotation}{Notation}
\newtheorem{innerclaim}{Claim}
\newtheorem{innerproblem}{Problem}

\newtheorem{innerobservation}{Observation}

\tcbset{
  theoremframebase/.style={
    enhanced,
    breakable,
    sharp corners,
    boxrule=0pt,
    leftrule=2pt,
    top=6pt,
    bottom=5pt,
    left=5pt,
    right=5pt,
    before skip=4pt,
    after skip=4pt,
  }
}

\newtcolorbox{definitionframe}{theoremframebase, colback=black!3, colframe=black!55}
\newtcolorbox{propositionframe}{theoremframebase, colback=black!3, colframe=black!55}
\newtcolorbox{assumptionframe}{theoremframebase, colback=black!3, colframe=black!55}
\newtcolorbox{remarkframe}{theoremframebase, colback=black!3, colframe=black!55}
\newtcolorbox{theoremframe}{theoremframebase, colback=black!3, colframe=black!55}
\newtcolorbox{hypothesisframe}{theoremframebase, colback=black!3, colframe=black!55}
\newtcolorbox{conjectureframe}{theoremframebase, colback=black!3, colframe=black!55}
\newtcolorbox{lemmaframe}{theoremframebase, colback=black!3, colframe=black!55}
\newtcolorbox{corollaryframe}{theoremframebase, colback=black!3, colframe=black!55}
\newtcolorbox{exampleframe}{theoremframebase, colback=black!3, colframe=black!55}
\newtcolorbox{notationframe}{theoremframebase, colback=black!3, colframe=black!55}
\newtcolorbox{claimframe}{theoremframebase, colback=black!3, colframe=black!55}
\newtcolorbox{problemframe}{theoremframebase, colback=black!3, colframe=black!55}
\newtcolorbox{observationframe}{theoremframebase, colback=black!3, colframe=black!55}

\newenvironment{definition}
{\begin{definitionframe}\begin{innerdefinition}}
{\end{innerdefinition}\end{definitionframe}}

\newenvironment{proposition}
{\begin{propositionframe}\begin{innerproposition}}
{\end{innerproposition}\end{propositionframe}}

\newenvironment{remark}
{\begin{remarkframe}\begin{innerremark}}
{\end{innerremark}\end{remarkframe}}

\newenvironment{theorem}
{\begin{theoremframe}\begin{innertheorem}}
{\end{innertheorem}\end{theoremframe}}

\newenvironment{corollary}
{\begin{corollaryframe}\begin{innercorollary}}
{\end{innercorollary}\end{corollaryframe}}

\usepackage[textwidth=3.5cm]{todonotes}

\usepackage{makecell}

\usepackage{bm}

\definecolor{ForestGreen}{RGB}{34,139,34}
\definecolor{OrangeRed}{RGB}{255,69,0}

%% file: resources/main.tex
\definecolor{trenddn}{RGB}{0,128,128}
\definecolor{trendup}{RGB}{180,50,50}
\newcommand{\tmark}[3]{\raisebox{0.5em}{\scalebox{0.6}{\textcolor{#1}{#2}}}\hspace{0.05em}#3}

\begin{abstract}
Sampling complete solutions gives large language models more chances to solve a problem, but each attempt starts again.
For tasks with several subproblems, all required answers must be correct in the same attempt.
Retaining intermediate results can avoid repeating correct work, but errors in those results may affect later attempts.
We study when this reuse is reliable and how much computation it requires.
For local retries that stop when no candidate passes within the retry budget, we give conditions for correct completion and bound the average number of attempts.
We also show how checking cost and the chance of passing a check affect total computation.
Experiments across eight reasoning models examine composite mathematics and arithmetic.
Across four DeepSeek models, average success on the second subproblem is 36.2 percent without first-subproblem context, 57.2 percent with its answer, and 58.3 percent with its reasoning and answer.
These scores measure second-subproblem coverage and include continuations made when selection fails.
Arithmetic experiments show that similar first-attempt accuracies can lead to very different coverage after repeated attempts.
Code is available at \mbox{\url{https://github.com/SempraETY/Intermediate-State-Reuse}}.
\end{abstract}

\begin{figure}[H]
    \centering
    \includegraphics[width=\linewidth]{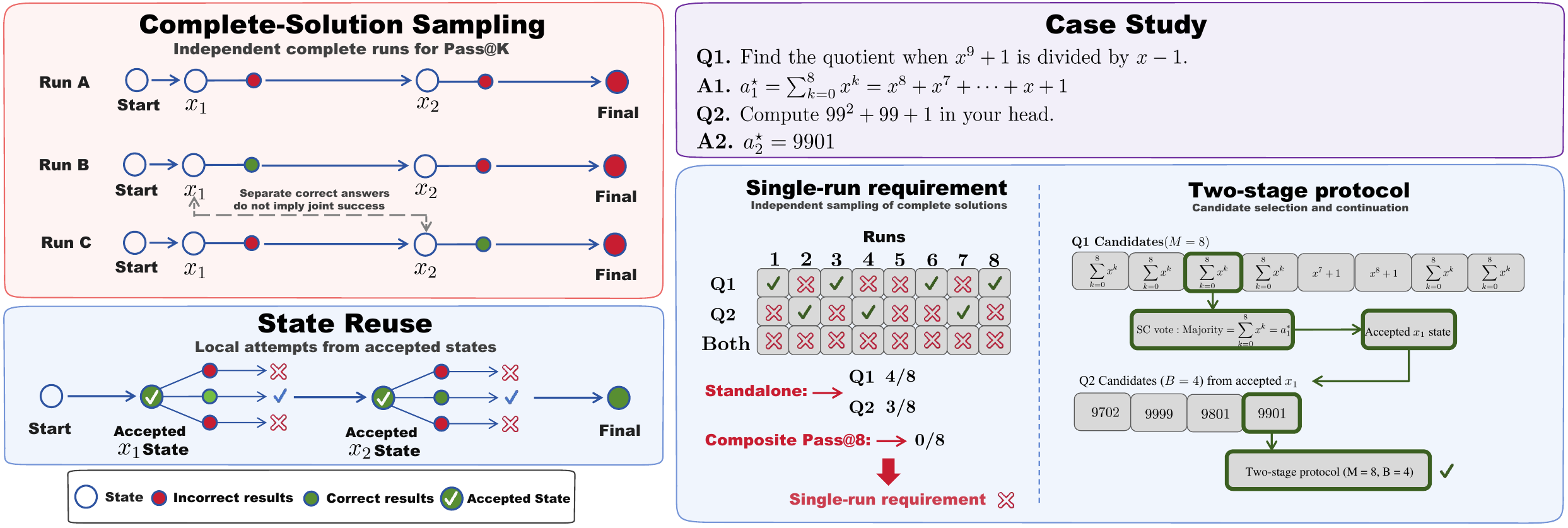}
    \caption{\textbf{Complete-solution sampling and intermediate-state reuse.}
    \textbf{Left:} Independent sampling requires a successful complete run. The analyzed reuse loop preserves earlier accepted work, retries rejected candidates, and stops when a step's budget is exhausted. Passing a check does not guarantee correctness.
    \textbf{Right:} An illustrative two-subproblem candidate-selection example, with SC denoting self-consistency, $M$ the first-subproblem candidate budget, and $B$ the continuation cap. Fractions count correct candidates among those shown; the example is not an experimental measurement or the single-state loop used in the proof.}
    \label{fig:framework_comparison}
\end{figure}
\clearpage
\begingroup
\small
\setlength{\parskip}{0pt}
\tableofcontents
\endgroup
\clearpage

\section{Introduction}
\label{sec:intro}

Many practical reasoning tasks require several distinct subproblems to be solved together. Solving each subproblem in isolation does not ensure that a model can solve the complete task~\citep{zhang2025reasoning}.
With model weights fixed, a system can spend more computation at inference time to improve its chance of finding a solution.
A system can generate more complete answers, compare candidates, or check intermediate work~\citep{brown2024large,wang2022self,lightman2023let}.
How this computation is allocated matters alongside how much is available~\citep{snell2025scaling}.

Under complete-solution sampling, all required subproblem answers must be correct in one run.
One run may solve the first subproblem and another the second, while neither solves both.
The sample pool contains correct answers to both subproblems, but no run has solved the complete task.
As tasks get longer, more intermediate work can be lost when an attempt fails near the end.
A restart retries the failed subproblem but repeats earlier work that may already be correct.
Existing methods already use intermediate work: Least-to-Most prompting solves subproblems in sequence using earlier answers~\citep{zhou2022least}, while Tree of Thoughts searches over partial reasoning states~\citep{yao2023tree}.

A subproblem may contain several reasoning steps; our analysis uses steps whose intermediate states can be checked and retained.
Consider a three-step calculation that derives a formula, substitutes the given values, and computes a final quantity.
Suppose the first two steps are correct but the final calculation fails.
A complete restart generates all three steps again.
If the formula and substitution can be checked and retained, a new attempt can repeat only the final calculation.
The retained results become the retry's input.

The difficulty is deciding what can safely be kept.
A check may reject a correct candidate or accept an incorrect one; benchmarks document the difficulty of identifying errors in intermediate reasoning~\citep{zheng2025processbench,li2025socratic}.
An error passed forward may then affect later attempts, as studies of self-conditioning illustrate~\citep{sinha2025illusion}.
Even a correct intermediate answer may leave out assumptions or other information needed to continue.
Atom of Thoughts constructs self-contained reasoning states~\citep{teng2025atom}, while MEM1 learns to retain compact information across interaction steps~\citep{zhou2025mem1}.
Keeping a full reasoning trace instead supplies more context, but may also retain unchecked mistakes and increases the context to process.
Checking itself consumes computation, so avoiding a restart does not by itself establish a saving.
This report studies the conditions under which intermediate results can be reused reliably and the cost of doing so.

Recent search methods also avoid unnecessary branching: Chain-in-Tree continues routine steps sequentially and reuses generated children when branching is needed~\citep{li2026chain}.
Process supervision evaluates intermediate reasoning~\citep{lightman2023let}, and MAKER combines fine-grained decomposition with voting to execute long deterministic tasks~\citep{meyerson2025solving}.
MAKER derives near-linear expected computation under its voting model, including the cost of discarded responses.
We consider finite local retries, where the first candidate to pass a check is retained and may carry an error into later steps.

The analyzed procedure checks each new result and retries from the last retained state after a rejection.
It stops when every candidate within the local budget is rejected.
For fixed generation and checking, retries reduce exhaustion without improving the reliability of accepted states.
We give an attainable reliability bound and matching worst-case bounds on expected attempts.
The work bound includes paths after an incorrect acceptance, where correct-prefix acceptance guarantees no longer apply.
Exhaustion limits the work spent at frequently rejected states.
Sufficient precision and acceptance probability yield linear expected attempts; total cost also includes checking and state processing.

We also report sampling experiments across eight reasoning models.
Arithmetic tasks let us ask how much repeated attempts add beyond first-attempt accuracy.
Pairs of mathematics problems let us compare candidate selection and the context supplied to the second subproblem: no first-subproblem context, its answer alone, or its reasoning and answer.
The paired subproblems are mathematically separate. Their continuation prompts differ in both content and instructions, so the comparison concerns entire prompt configurations.
Budget sweeps examine how success and generated tokens change when more computation is spent on \mbox{first-subproblem candidates or subsequent continuations}.

Across four DeepSeek models, second-subproblem success is 36.2\% without first-subproblem context, 57.2\% with its answer, and 58.3\% with its reasoning and answer.
Paired uncertainty is unavailable for these prompt comparisons.
These scores include fallback continuations and measure success on the second subproblem, rather than joint success of both answers.
Arithmetic results show that similar first-attempt accuracies can lead to very different coverage after repeated attempts.
A separate EdgeBench study compares two curves for predicting later agent scores from early observations and observations of the same agents on other tasks~\citep{edgebench2026}.\looseness=-1 

\Needspace{5\baselineskip}
\textbf{Report overview.}
\begin{enumerate}[label=(\alph*), nosep, leftmargin=12pt]
    \item \textbf{Reliability of reuse.} We give an attainable trajectory bound and the exact accepted-state error threshold for a finite cap to guarantee target reliability.
    \item \textbf{Expected work and checking cost.} We bound worst-case expected work, including erroneous paths, and connect reliable completion to acceptance probability and attempt cost.
    \item \textbf{Empirical findings.} We compare retries, answer selection, retained context, and budget allocation across eight reasoning models, and evaluate two curves for predicting later agent scores.
\end{enumerate}

\section{Preliminaries}
\label{sec:prelim}

We first define complete-task success, the state passed between steps, and the sampling metrics. These definitions distinguish finding a complete answer from continuing successfully after an intermediate result.

\subsection{Tasks, States, and Sampling Metrics}
\label{sec:ideal_laws}

A composite task $x_1\oplus x_2$ presents two subproblems in one prompt and is correct only when both returned subproblem answers are correct.
Let $a^\star(x)$ be the target answer to $x$, with $a^\star(x_1\oplus x_2)=[a^\star(x_1),a^\star(x_2)]$.
Let $\mtheta$ denote the model parameters and $p_{\mtheta}(\cdot\mid s)$ its distribution over generated text under a fixed decoding configuration and input context $s$.
For a problem $x$, fix a decomposition into $N\geq 1$ steps, where $N$ counts the steps in this chosen decomposition.
Write $z_t$ for the text generated at step $t\in\{1,\ldots,N\}$ and $s_t=[x,z_1,\ldots,z_t]$ for the resulting state, with $s_0=x$.
Each step is sampled from $p_{\mtheta}(\cdot\mid s_{t-1})$.
For reuse, $s_t$ denotes the state actually passed forward: task information plus retained generated text or extracted intermediate answers. The prompt specifies how this state is presented to the model. Correctness concerns the information carried forward; a correct extracted answer need not imply that discarded reasoning was correct.
We call a task \emph{stepwise decomposable} if it can be split into steps whose intermediate states admit a task-specific accept-or-reject check.
A state is \emph{accepted} if it passes the check; acceptance need not imply correctness. A \emph{stage} groups steps, such as those used to solve one subproblem.
Let $y$ be the final answer extracted from a completed run and $L$ its total number of generated tokens.
The single-run metrics are:
\begin{equation}
A_{\mtheta,1}(x)=P\!\left(y=a^\star(x)\right),\qquad C_{\mtheta,1}(x)=\mathbb{E}[L].
\label{eq:prelim_single_metrics}
\end{equation}

For an integer $K\geq 1$, independent sampling draws $K$ independent complete runs from the same distribution and counts success if any run returns the correct answer.
\begin{equation}
A_{\mtheta,K}(x)=1-\left(1-A_{\mtheta,1}(x)\right)^K,\qquad C_{\mtheta,K}(x)=K\,C_{\mtheta,1}(x).
\label{eq:prelim_passk}
\end{equation}
Let $\delta\in(0,1)$ denote the target failure probability, so the desired reliability is $1-\delta$.

\subsection{Complete-Solution Sampling}
In a common pool of composite runs, joint success implies that each subproblem is covered, while separately correct subproblem answers may occur in different runs (\propref{prop:breakdown}).
If standalone and composite subproblem marginals agree at $a_1,a_2\in(0,1)$, with independence within and across runs, then for $K\geq2$,
\begin{equation}
\begin{aligned}
1-(1-a_1a_2)^K
&<[1-(1-a_1)^K]\\
&\quad\cdot[1-(1-a_2)^K].
\end{aligned}
\label{eq:main_sampling_comparison}
\end{equation}
One run must solve both subproblems, whereas separate coverage can use different runs.
This sampling constraint also applies to other stochastic solvers; it does not equate token costs across prompt formats.
Under the additional assumptions of exponentially declining complete-run accuracy and linear generation cost~\citep{zhang2025reasoning}, complete restarts require exponentially growing generation cost at fixed reliability.
\appref{app:sampling_background} gives the product comparison, numerical example, and scaling calculation.
We next examine the reliability and cost of retaining intermediate work.

\section{Reliability and Cost of State Reuse}
\label{sec:connected_solution}

The reuse loop maintains one accepted state at a time. We analyze the probability that every retained state is correct, the number of executed attempts, and the resulting computational cost.

\subsection{Local Retries and Reliable Completion}
\label{sec:vsr_procedure}
\label{sec:failure_decomposition}

At step $t$, generate a candidate from the retained state $s_{t-1}^{\checkmark}$ and check the state to be passed forward.
Let $\cV_t$ indicate acceptance and $\cG_t$ its correctness.
Retain the first accepted state; if all $R$ attempts are rejected, terminate the entire task.
Conditional on the complete history at entry to each reached step, attempts are independent and identically distributed, including at incorrect inputs.
Rejected candidates do not change the state or retry law; different steps may depend on earlier accepted results.
Checking an extracted answer does not establish that a retained reasoning trace is also correct.

At a correct entry history $s$, let $\rho_t(s)$ be the rejection probability and $w_t(s)$ the probability of incorrect acceptance in one attempt.
Assume $\rho_t(s)\leq\rho\in[0,1)$ and accepted-state error $\alpha_t(s)\leq\alpha\in[0,1)$ at every correct entry history.
Conditioning on the first acceptance gives
\begin{equation}
\alpha_t(R;s)=\frac{w_t(s)}{1-\rho_t(s)}=\alpha_t(s).
\label{eq:fixed_check_contamination}
\end{equation}
Accepted-state precision is $1-\alpha_t(s)$; it does not improve with $R$.
The local trajectory fails if the retry budget is exhausted or an incorrect candidate is accepted:
\begin{equation}
\begin{aligned}
\beta_t(R;s)&=\rho_t(s)^R\\
&\quad+\bigl(1-\rho_t(s)^R\bigr)\alpha_t(s).
\end{aligned}
\label{eq:vsr_local_failure}
\end{equation}
More retries reduce exhaustion; improving accepted-state precision requires a different generation or checking rule.
Let $H_t$ mean that the first $t$ steps complete with every retained state correct, with $H_0$ certain.
Assume $H_N$ yields the target answer; later repair can also succeed outside $H_N$.

\begin{theorem}[Trajectory reliability]
\label{thm:reliability}
Under the preceding assumptions,
\begin{equation}
\Pr(H_N)\geq[(1-\rho^R)(1-\alpha)]^N.
\label{eq:sharp_reliability}
\end{equation}
An admissible process attains equality for every $N,R$.
For $0<\rho<1$, a finite common cap guarantees $1-\delta$ over this class if and only if
\begin{equation}
\alpha<1-(1-\delta)^{1/N}.
\label{eq:precision_boundary}
\end{equation}
\end{theorem}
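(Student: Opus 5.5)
The plan is to prove the lower bound by induction on $t$, using the conditional step-success probability at correct entry histories. Fix a correct entry history $s$ realizing $H_{t-1}$. Attempts at step $t$ are i.i.d. given $s$. The probability that some attempt among the first $R$ is accepted is $1-\rho_t(s)^R$. Given that the first acceptance happens, the accepted state is correct with probability $1-\alpha_t(s)$ by \eqref{eq:fixed_check_contamination}. Hence
\[
\Pr(H_t\mid s)=(1-\rho_t(s)^R)(1-\alpha_t(s))\ge(1-\rho^R)(1-\alpha),
\]
since both factors are monotone in $\rho_t$ and $\alpha_t$ and the bounds hold at every correct entry history. Taking expectation over correct histories gives $\Pr(H_t)=\mathbb{E}[\Pr(H_t\mid s_{t-1})\mathbf{1}_{H_{t-1}}]\ge(1-\rho^R)(1-\alpha)\Pr(H_{t-1})$. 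Iterating from $\Pr(H_0)=1$ yields \eqref{eq:sharp_reliability}. Behaviour at incorrect histories is irrelevant, because $H_N$ requires every retained state to be correct.

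For attainment, I would build a process in which every step, at every correct history, rejects with probability exactly $\rho$. It accepts an incorrect state with probability $w=\alpha(1-\rho)$ and a correct one with probability $(1-\alpha)(1-\rho)$. Incorrect histories may be specified arbitrarily, for instance by always accepting. Each inequality above is then an equality, so $\Pr(H_N)=[(1-\rho^R)(1-\alpha)]^N$ for every $N,R$. This process is admissible because it meets the i.i.d.\ and bound assumptions.

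For the threshold \eqref{eq:precision_boundary}, the guarantee is required over the whole class, so the worst case over the class matters. By the bound and its attainment, the worst-case reliability with cap $R$ equals exactly $f(R)=[(1-\rho^R)(1-\alpha)]^N$.
\begin{itemize}
\item If $\alpha<1-(1-\delta)^{1/N}$, then $(1-\alpha)^N>1-\delta$. Because $0<\rho<1$, $f(R)\uparrow(1-\alpha)^N$ strictly as $R\to\infty$, so some finite $R$ gives $f(R)\ge1-\delta$.
\item Conversely, if $\alpha\ge1-(1-\delta)^{1/N}$, then for every finite $R$ we have $1-\rho^R<1$ since $\rho>0$. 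So $f(R)<(1-\alpha)^N\le1-\delta$, and the extremal process violates the target.
\end{itemize}

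The main obstacle is the measure-theoretic care in the induction. One must condition on the full entry history, and $\rho_t$ and $\alpha_t$ vary with it; $\alpha_t(s)$ is well defined whenever $\rho_t(s)<1$, which the assumption $\rho<1$ guarantees. The argument must also justify that "first acceptance" has the conditional law $w_t/(1-\rho_t)$ independent of $R$, which comes from summing the geometric series truncated at $R$. The boundary case in the converse also needs $\rho>0$ to make the inequality strict; I would state that use explicitly.
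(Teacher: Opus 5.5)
Your proof is correct and is essentially the paper's argument. It uses the exact correct-advancement probability $(1-\rho_t(s)^R)(1-\alpha_t(s))$ at each correct entry history, obtained from the truncated geometric sum $\alpha_t(R;s)=w_t(s)/(1-\rho_t(s))$, together with iterated conditioning on $H_{t-1}$, a homogeneous process attaining equality, and the fact that $1-\rho^R<1$ for every finite $R$ when $\rho>0$. One refinement from the paper is worth adopting: in the extremal process, make errors irreversible instead of leaving incorrect histories arbitrary, so that final-answer success also equals $[(1-\rho^R)(1-\alpha)]^N$. This matters if ``guarantees $1-\delta$'' is read as final-answer reliability, as the paper reads it. A process that repairs errors after an incorrect acceptance can succeed outside $H_N$, and then it would not witness the converse.
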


Each correct prefix advances correctly with conditional probability at least $(1-\rho^R)(1-\alpha)$; the chain rule gives the product without independence between steps.
A homogeneous process with irreversible errors after acceptance attains it.
At equality in Eq.~\eqref{eq:precision_boundary}, the target is approached only with unbounded retries; for $\rho=0$, equality is allowed and one attempt suffices.
This is a worst-case guarantee, not a necessary precision threshold for every system that can repair errors.
\appref{app:new_reliability} gives the proof, exact minimum cap, and the simpler sufficient allocation $N\alpha<\delta$ used in \propref{prop:logarithmic_retry}.

\subsection{Expected Work under the Stopping Rule}
\label{sec:compositionality_restore}
\label{sec:work}

We measure work by $W$, the number of executed candidate attempts.
The pathwise cap $W\leq NR$ reserves the full budget at every step. Actual work can be smaller because a step ends as soon as a candidate passes.
Let $\mathcal P_{N,R}$ contain all processes following the preceding retry and stopping rules for at most $N$ steps, with arbitrary history-dependent rejection probabilities and no correctness requirement.
If acceptance were at least $h>0$ at every reached state, the geometric waiting-time bound would give $\mathbb E[W]\leq N/h$ directly.
After an incorrect acceptance, the process may reach states where the correct-prefix acceptance bound no longer holds.
The next theorem includes work at these states.

\begin{theorem}[Worst-case expected attempts]
\label{thm:work}
For every pair of integers $N,R\geq1$,
\begin{equation}
\sup_{P\in\mathcal P_{N,R}}\mathbb E_P[W]
=\Theta\!\left(N+\frac{NR}{\log(N+1)}\right),
\label{eq:matching_work}
\end{equation}
with positive implicit constants independent of both the process length $N$ and the retry cap $R$.
\end{theorem}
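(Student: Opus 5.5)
The plan is to reduce $W$ to per-step conditional expectations and compare the per-step work with the rejection probability at that step. Fix $P\in\mathcal P_{N,R}$. Let $q_t$ be the (history-dependent, random) rejection probability at entry to step $t$, and let $E_t$ be the event that step $t$ is reached. Conditional on the entry history, attempts at step $t$ are i.i.d. rejections with probability $q_t$. Hence the conditional expected number of attempts at step $t$ is
\begin{equation*}
f(q_t)=\sum_{k=0}^{R-1}q_t^k=\frac{1-q_t^R}{1-q_t}\le\min\Bigl(R,\tfrac{1}{1-q_t}\Bigr),
\end{equation*}
and the conditional probability of exhaustion at step $t$ is $q_t^R$. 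By the tower property,
\begin{equation*}
\mathbb E_P[W]=\sum_{t=1}^N\mathbb E\bigl[\mathbf 1_{E_t}f(q_t)\bigr],
\qquad
\sum_{t=1}^N\mathbb E\bigl[\mathbf 1_{E_t}q_t^R\bigr]=\Pr(\text{exhaustion})\le 1 .
\end{equation*}
These two identities drive both bounds, and they need no independence across steps. Write $L=\log(N+1)$.

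\textbf{Lower bound.} Use a homogeneous process with a constant rejection probability $q$ defined by $q^R=1/(N+1)$. Correctness plays no role in $\mathcal P_{N,R}$, so this process is admissible. Every step $t\le N$ is reached with probability at least $(1-\tfrac1{N+1})^{N}\ge e^{-1}$, so $\mathbb E[W]\ge e^{-1}N f(q)$. We have $1-q=1-e^{-L/R}\le\min(1,L/R)$. Together with $1-q^R\ge\tfrac12$, this gives $f(q)\ge\tfrac12\max(1,R/L)$. Therefore $\mathbb E[W]\ge c\,(N+NR/L)$ with an absolute constant $c$, since $\max(a,b)\ge(a+b)/2$.

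\textbf{Upper bound.} Split each reached step according to whether $q_t^R> L/N$. If $q_t^R>L/N$, then $f(q_t)\le R< (NR/L)\,q_t^R$. Summed over steps via the exhaustion identity, these steps contribute at most $NR/L$. If $q_t^R\le L/N$, then $1-q_t\ge 1-(L/N)^{1/R}=1-e^{-\log(N/L)/R}$. For $N\ge N_0$ (an absolute constant) we have $\log(N/L)\ge\tfrac12\log(N+1)$. Then $1-q_t\ge 1-e^{-L/(2R)}\ge c'\min(1,L/R)$, so $f(q_t)\le C(1+R/L)$. Summed over at most $N$ steps, this gives $C(N+NR/L)$. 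For $N<N_0$, the pathwise cap $W\le NR\le \log(N_0+1)\,NR/L$ suffices. Combining the cases gives $\sup_P\mathbb E_P[W]\le C''(N+NR/L)$ with $C''$ independent of $N$ and $R$.

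\textbf{Main obstacle.} The delicate step is the choice of threshold in the upper bound. The naive threshold $q^R>1/(N+1)$ only charges $R(N+1)$ per unit of exhaustion probability, which is too much. Raising the threshold to $L/N$ makes the charge per unit exactly $NR/L$. It keeps $1-q_t$ of order $L/R$ on the complementary event only because $\log(N/\log(N+1))$ is still comparable to $\log(N+1)$. The constants for small $N$ and the regime $R<L$ then need to be checked, which the $\min(1,\cdot)$ forms above are meant to handle uniformly.
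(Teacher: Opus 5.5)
Your proposal is correct and follows the paper's proof. It uses the same per-step tower identity and the same split of reached steps at a rejection threshold, with a geometric waiting-time bound below the threshold and mutual exclusivity of exhaustion events above it. It also uses the same homogeneous witness with $q^R=1/(N+1)$ for the lower bound.

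The only difference is your threshold $q^R=\log(N+1)/N$ in place of the paper's $(N+1)^{-1/2}$. The inequality you then need, $\log\bigl(N/\log(N+1)\bigr)\ge\tfrac12\log(N+1)$, is equivalent to $\log(N+1)\sqrt{N+1}\le N$. The paper proves this for every $N\ge1$ by showing $x-x^{-1}-2\log x\ge0$ at $x=\sqrt{N+1}$, so you can take $N_0=1$ and drop your separate small-$N$ case.
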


Frequently rejected states are expensive to leave but likely to exhaust the cap, ending the run.
The proof uses this tradeoff to bound their expected visits; a homogeneous process gives the matching lower bound (\appref{app:work_proof}).
That lower bound concerns the unrestricted class, not every reliable process.
The stopping pattern is related to classical start-up demonstration tests~\citep{gera2013startup}.
\appref{app:stopping_context} gives the exact dynamic recursion and separates this uniform bound from fixed-law waiting-time calculations.
Together with \thmref{thm:reliability}, this yields reliable completion with an $O(\log N)$ cap and linear expected work.
Here the target failure probability is fixed, acceptance at correct prefixes has a positive constant lower bound, and accepted-state error decreases with chain length as specified in \corref{cor:coverage_cost}.

\paragraph{Example: reserved budget and executed work.}
For 100 steps, a cap of 12 attempts reserves at most 1,200 attempts.
Suppose every reached state has rejection probability $1/2$, and every correct prefix has accepted-state error at most $0.0002$.
The reliability bound exceeds $0.956$, while each reached step uses fewer than two attempts in expectation, giving fewer than 200 over the task.
The probabilities in this example are hypothetical.
The example distinguishes the reserved budget from expected work; \thmref{thm:work} additionally permits arbitrary rejection probabilities after an incorrect acceptance.

\subsection{Checking Precision and Computation}
\label{sec:checking_cost}

As a chain grows, the reliability bound requires a smaller error rate among accepted states. Reaching this precision may make checking more expensive or cause it to reject more candidates.
Let $h_N$ lower-bound acceptance probability at every correct prefix, and let $g_N$ uniformly bound per-attempt cost, including generation, checking, and state processing at every reached state.

\begin{corollary}[Precision, acceptance, and cost]
\label{cor:coverage_cost}
Fix $\delta\in(0,1)$ and $c\in[0,1)$.
If correct prefixes have acceptance probability at least $h_N\in(0,1]$ and accepted-state error at most $c\delta/N$, choose
\begin{equation}
R_N=\left\lceil\frac{1}{h_N}\log\frac{N}{(1-c)\delta}\right\rceil.
\label{eq:coverage_cap}
\end{equation}
Then $\Pr(H_N)\geq1-\delta$ and
\begin{equation}
\mathbb E[W]=O(N/h_N),\qquad
C_{\mathrm{vsr}}=O(Ng_N/h_N).
\label{eq:coverage_cost}
\end{equation}
The attempt order $N/h_N$ is attained within this reliable class for this cap.
\end{corollary}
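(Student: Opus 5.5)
The plan has three parts: reliability from \thmref{thm:reliability}, expected attempts from a per-step geometric count, and cost by multiplying by $g_N$.

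\textbf{Reliability.} Apply \thmref{thm:reliability} with $\rho=1-h_N$ and $\alpha=c\delta/N$. This gives
\[
\Pr(H_N)\geq\bigl[(1-\rho^{R_N})(1-c\delta/N)\bigr]^N .
\]
Using $1-\rho\geq h_N$ and $\log(1/\rho)\geq 1-\rho\geq h_N$, we get $\rho^{R_N}\leq e^{-h_NR_N}\leq (1-c)\delta/N$. The per-step failure is therefore at most $\rho^{R_N}+\alpha\leq (1-c)\delta/N+c\delta/N=\delta/N$. Bernoulli's inequality $(1-\delta/N)^N\geq 1-\delta$ finishes this part. The factor $(1-\rho^R)(1-\alpha)$ is bounded below by $1-\rho^R-\alpha$, so the two error sources just add. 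If $\rho=0$, then $\rho^{R_N}=0$ and the bound is immediate.

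\textbf{Expected attempts.} \thmref{thm:work} is uniform over $\mathcal P_{N,R}$ and would only give $O(NR_N/\log N)$. That carries an extra $\log(N/\delta)/(h_N\log N)$ factor, which is $O(1/h_N)$ only for fixed $\delta$. I would instead split the work by whether the current prefix is correct. While every retained state is correct, each reached step's attempt count is dominated by a geometric variable with success probability $h_N$, truncated at $R_N$. Its conditional mean is at most $1/h_N$, so the correct-prefix contribution is at most $N/h_N$.

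\textbf{Main obstacle: work after an incorrect acceptance.} There acceptance probabilities are unconstrained, so each subsequent step may cost up to $R_N$ attempts. An incorrect acceptance occurs with total probability at most $N\alpha\leq c\delta$, and after it at most $N$ further steps run. The naive bound on this contribution is $c\delta\,NR_N$, which is $O(N/h_N)$ only if $\delta\log(N/\delta)$ stays bounded. For fixed $\delta$ that fails by a $\log N$ factor. To remove it, I would reuse the exhaustion tradeoff behind \thmref{thm:work}. A state with acceptance probability $q$ costs expected $(1-(1-q)^{R})/q$ attempts but survives with probability $1-(1-q)^R$. Optimizing over $q$ should bound the expected post-error work by $O(N+NR_N/\log(N+1))$ times the error probability. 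I would then check that $c\delta\,R_N/\log(N+1)=O(1/h_N)$, which holds because $\delta\log(N/\delta)/\log N$ is bounded uniformly in $\delta$.

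\textbf{Cost and tightness.} Multiplying the attempt count by the uniform per-attempt bound $g_N$ gives $C_{\mathrm{vsr}}=O(Ng_N/h_N)$, by Wald-type linearity over attempts. For attainment, take the homogeneous process with acceptance exactly $h_N$ at every state and no errors ($\alpha=0$, which is allowed since $c\geq0$). This process is reliable, and each step costs $(1-(1-h_N)^{R_N})/h_N\geq(1-\delta/N)/h_N$ in expectation. Completion has probability at least $1-\delta$. Summing over the steps reached gives $\Omega(N/h_N)$.
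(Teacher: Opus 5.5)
Your proof is correct. Your reliability part matches the paper's in substance: you use the product bound of \thmref{thm:reliability} with Bernoulli's inequality, where the paper uses the first-failure union bound $N\bigl((1-c)\delta/N+c\delta/N\bigr)=\delta$. Your lower construction is also the paper's: acceptance $h_N$ everywhere, only correct states accepted, every step reached with probability at least $1-\delta$, giving $N(1-\delta)^2/h_N$.

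The genuine difference is the upper bound on $\mathbb E[W]$. The paper applies \thmref{thm:work} to the whole run. Since $\delta$ and $c$ are fixed in the statement, $\log(N/((1-c)\delta))/\log(N+1)$ is bounded, so $R_N/\log(N+1)=O(1/h_N)$. Hence $\mathbb E[W]\leq N+3NR_N/\log(N+1)=O(N/h_N)$ in one line, and the obstacle you describe does not arise for the corollary as stated.

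You instead split the run at the first incorrect acceptance. Steps entered on $H_{t-1}$ contribute at most $\sum_t\Pr(H_{t-1})/h_N\leq N/h_N$. The rest contributes at most $c\delta$ times the worst-case work of the continuation. For this second piece you do not need to re-optimize over the acceptance probability. Conditional on the history at a first incorrect acceptance at step $t$, the continuation lies in $\mathcal P_{N-t,R_N}$. So \thmref{thm:work}, or the monotone recursion $V_n$ of \appref{app:stopping_context}, bounds it by $N+3NR_N/\log(N+1)$ directly.

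What your route buys is an implicit constant that depends only on $c$, not on $\delta$, via $\delta\log(1/\delta)\leq e^{-1}$. So the $\log(1/\delta)$ in the cap does not inflate expected attempts, and for $c=0$ you get exactly $\mathbb E[W]\leq N/h_N$. The paper's route is shorter, but the constant it yields grows like $\log(1/((1-c)\delta))$.

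Minor: write $\log(N+1)$ rather than $\log N$ so that $N=1$ is covered.
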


The cap controls exhaustion; \thmref{thm:work} bounds all executed attempts, including those after an incorrect acceptance (\appref{app:reliable_cost}).
For constant $h_N$, expected attempts are linear, whereas the deterministic cost cap remains $O(N\log N\,g_N)$.
Since $W\geq N\mathbf1_{H_N}$ and $\mathbb E[W\mid H_N]\leq\mathbb E[W]/(1-\delta)$, this conclusion is compatible with reliable completion, rather than relying only on cheap failures.

At a common precision target, the relevant cost scale is $g_N/h_N$.
Under independent binary checks, requiring unanimity can make acceptance polynomially rare, yielding $\Theta(N^{1+\gamma}\log N)$ expected cost for some $\gamma>0$ when every check is executed.
With an additional separation condition, majority checking keeps acceptance bounded away from zero and gives $O(N\log N)$ expected cost~\citep{hoeffding1963probability}.
\appref{app:checking_proofs} specifies the noise model and derives both cases; these repeatedly check one candidate rather than compare different answers by self-consistency.

The work bound relies on the execution rule: feedback-dependent retries or continuation after exhaustion can each use $NR$ attempts.
The experiments below use batch selection, multiple continuations, and fallback with different success criteria. They do not directly test these guarantees.

\section{Experiments}
\label{sec:experiments}

We compare coverage across budgets, selection rules, and continuation prompts.
The tasks cover composite mathematics, complete-answer arithmetic retries, and agent-score prediction.

\subsection{Experimental Setup}
\label{sec:exp_setup}

\paragraph{Models and tasks.}
We evaluate DeepSeek-R1-Distill (1.5B, 7B, 8B, 14B)~\citep{DeepSeekR125}, Qwen3-Thinking (4B, 30B-A3B)~\citep{qwen3report}, and GPT-OSS (20B, 120B)~\citep{openai2025gptoss}.
LoRe-Compo contains 250 pairs of distinct-subject MATH500 problems, using each of the 500 subproblems once~\citep{zhang2025reasoning,hendrycksmath2021}.
The second subproblem is a separate mathematical problem that does not require the first answer.
Arithmetic uses 50 problems for each base (7 or 10), operation (addition or multiplication), and digit count $n\in\{1,\ldots,10\}$, with 32 complete-answer attempts per problem.

\paragraph{Procedures and success criteria.}
Composite Pass@$K$ requires both correct answers in one of $K$ complete runs.
The two-stage protocol instead generates $M$ answers to $x_1$, retains at most $B$ candidates, and continues on $x_2$.
Oracle selects reference-matching answers; self-consistency (SC) uses the modal answer when its vote share reaches 0.5.
If selection returns none, fallback candidates are retained for continuation.
Both procedures generate whole subproblem answers without checking individual reasoning steps.\looseness=-1 

The reported second-subproblem success rate $A_2^{\mathrm{ret}}$ counts any correct answer to $x_2$ among the retained continuations, including fallbacks and regardless of $x_1$ correctness.
Joint retained-path success would require both answers to be correct along one shared path, with all 250 pairs still in the denominator.
Thus $A_2^{\mathrm{ret}}$ upper-bounds joint coverage but does not determine it.
It measures coverage, not final-answer selection accuracy.
The theoretical loop returns one first-accepted state and stops at exhaustion; the experimental protocol uses multiple continuations and fallback.
\appref{app:metric_alignment} defines these events.

\paragraph{Sampling, matching, and cost.}
All mathematical generations have a 10,240-token cap.
A composite run shares this cap across two answers, whereas staged solving grants one cap per subproblem.
LoRe-Compo uses family-specific decoding settings and direct prompt strings without model-specific chat templates.
Decoding differs between Pass@1 and larger budgets, some Pass@1 values come from prior work, and cross-budget nesting is unspecified for the composite results.
Answer matching uses numeric tolerance or normalized strings, with separate extraction rules for single and paired answers (\appref{app:lore_compo_impl}).
We report recorded generated-token counts and success percentage points per thousand tokens.
Equal attempt counts allow different generation budgets. The token counts exclude some checking and state processing costs and are not matched for whole-task reliability.

\begin{table}[t]
    \centering
    \begin{minipage}[t]{\dimexpr0.5\textwidth-5pt\relax}
        \vspace{0pt}
        \captionsetup{font=footnotesize,justification=raggedright,singlelinecheck=false,skip=3pt}
\caption{\textbf{Second-subproblem success after selection.} $A_2^{\mathrm{ret}}$ counts a correct $x_2$ answer on any retained path, including fallbacks, regardless of $x_1$ correctness. Oracle and SC share $M=8$, $B=4$. This continuation metric differs from the complete-solution joint-success criterion in Table~\ref{table:compositionality_breakdown}.}
        \label{table:connected_restoration}
    \end{minipage}\hfill
    \begin{minipage}[t]{\dimexpr0.5\textwidth-5pt\relax}
        \vspace{0pt}
        \captionsetup{font=footnotesize,justification=raggedright,singlelinecheck=false,skip=3pt}
\caption{\textbf{Composite Pass@$K$ under Independent Sampling.} Success requires both correct answers in one of $K$ complete runs. DeepSeek $K=1$ values are from~\citet{zhang2025reasoning}; other cells are reported aggregates from this study. Superscripts give percentage-point changes from $K=1$; dashes indicate unreported cells. Decoding varies with $K$, and cross-budget sample nesting is unspecified.}
        \label{table:compositionality_breakdown}
    \end{minipage}\par
    \nointerlineskip
    \begin{minipage}[t]{\dimexpr0.5\textwidth-5pt\relax}
        \vspace{0pt}
        \centering
        \fontsize{8}{10}\selectfont
        \renewcommand{\arraystretch}{1.2}
        \setlength{\tabcolsep}{0.8pt}
\begin{tabular*}{\linewidth}{@{\extracolsep{\fill}}lcc@{}}
    \toprule[0.95pt]
    \multirow{2}{*}{\textbf{Model}} & \multicolumn{2}{c}{\textbf{$A_2^{\mathrm{ret}}$ (\%)}} \\
    \cmidrule(lr){2-3}
    & \textbf{Oracle} & \textbf{SC} \\
    \midrule[0.6pt]
    DeepSeek-R1-1.5B & 55.6 & 55.2 \\
    DeepSeek-R1-7B & 61.6 & 61.6 \\
    DeepSeek-R1-8B & 51.2 & 52.8 \\
    DeepSeek-R1-14B & 60.4 & 62.0 \\
    \midrule[0.4pt]
    Qwen3-4B & 53.6 & 54.4 \\
    Qwen3-30B-A3B & 58.4 & 58.0 \\
    \midrule[0.4pt]
    GPT-OSS-20B & 47.2 & 48.4 \\
    GPT-OSS-120B & 62.4 & 62.4 \\
    \bottomrule[0.95pt]
\end{tabular*}
    \end{minipage}\hfill
    \begin{minipage}[t]{\dimexpr0.5\textwidth-5pt\relax}
        \vspace{0pt}
        \centering
        \fontsize{8}{10}\selectfont
        \renewcommand{\arraystretch}{1.2}
        \setlength{\tabcolsep}{2.0pt}
        \renewcommand{\tmark}[3]{#3\hspace{0.03em}\raisebox{0.5em}{\fontsize{6}{6}\selectfont\textcolor{#1}{#2}}}
\begin{tabular*}{\linewidth}{@{\extracolsep{\fill}}lcccc@{}}
    \toprule[0.95pt]
    \multirow{2}{*}[-0.7ex]{\textbf{Model}} & \multicolumn{4}{c}{\vphantom{\shortstack{\textbf{$x_2$ success}\\[-1pt]\textbf{(\%)}}}\textbf{Pass@$K$ Coverage (\%)}} \\
    \cmidrule(lr){2-5}
    & \textbf{1} & \textbf{8} & \textbf{32} & \textbf{128} \\
    \midrule[0.6pt]
    DeepSeek-R1-1.5B & 4.0 & \tmark{trenddn}{$\uparrow$13.2}{17.2} & \tmark{trenddn}{$\uparrow$24.4}{28.4} & -- \\
    DeepSeek-R1-7B & 2.4 & \tmark{trenddn}{$\uparrow$12.8}{15.2} & \tmark{trenddn}{$\uparrow$23.6}{26.0} & \tmark{trenddn}{$\uparrow$30.0}{32.4} \\
    DeepSeek-R1-8B & 13.2 & \tmark{trenddn}{$\uparrow$17.2}{30.4} & \tmark{trenddn}{$\uparrow$20.8}{34.0} & \tmark{trenddn}{$\uparrow$24.0}{37.2} \\
    DeepSeek-R1-14B & 2.0 & \tmark{trenddn}{$\uparrow$12.4}{14.4} & \tmark{trenddn}{$\uparrow$19.2}{21.2} & \tmark{trenddn}{$\uparrow$23.2}{25.2} \\
    \midrule[0.4pt]
    Qwen3-4B & 4.8 & \tmark{trenddn}{$\uparrow$22.4}{27.2} & -- & \tmark{trenddn}{$\uparrow$33.2}{38.0} \\
    Qwen3-30B-A3B & 2.8 & \tmark{trenddn}{$\uparrow$12.0}{14.8} & \tmark{trenddn}{$\uparrow$24.4}{27.2} & \tmark{trenddn}{$\uparrow$28.8}{31.6} \\
    \midrule[0.4pt]
    GPT-OSS-20B & 16.8 & \tmark{trenddn}{$\uparrow$18.0}{34.8} & \tmark{trenddn}{$\uparrow$16.4}{33.2} & \tmark{trenddn}{$\uparrow$18.4}{35.2} \\
    GPT-OSS-120B & 19.6 & \tmark{trenddn}{$\uparrow$17.2}{36.8} & \tmark{trenddn}{$\uparrow$18.4}{38.0} & \tmark{trenddn}{$\uparrow$18.8}{38.4} \\
    \bottomrule[0.95pt]
    \end{tabular*}
    \end{minipage}
\end{table}

\subsection{Complete-Solution Sampling on Composite Tasks}
\label{sec:topological_failure}

\tabref{table:compositionality_breakdown} reports joint composite Pass@$K$.
The eight-model mean is 8.2\% at $K=1$ and 33.3\% at the largest recorded budget ($K=32$ for R1-1.5B and $K=128$ otherwise).
Coverage improves across these endpoint configurations, but the gains and model ordering vary.
Qwen3-4B starts at 4.8\%, below R1-8B at 13.2\%, and reaches 38.0\% compared with 37.2\% at $K=128$.
GPT-OSS-120B has the highest Pass@1 and Pass@128, at 19.6\% and 38.4\%.
Model rankings depend on the sampling budget.

GPT-OSS-20B changes from 34.8\% at $K=8$ to 33.2\% at $K=32$.
Because nesting and decoding are not held fixed across all configurations, this decrease does not show that adding samples to an unchanged pool reduces coverage.
The table reports how often a complete run answers both subproblems correctly.
Measuring unused partial success requires the statistic in \appref{app:metric_alignment}.

\subsection{Candidate Selection and Retained Context}
\label{sec:restoration}

\paragraph{Intermediate-answer selection.}
\tabref{table:connected_restoration} compares Oracle and SC at $M=8$, $B=4$ using the same second-subproblem criterion.
The four-DeepSeek SC mean is 57.9\%; across all eight models, SC and Oracle differ by at most 1.6 percentage points.
Here, Oracle and SC yield similar second-subproblem success despite using different selection rules.
These scores include fallback continuations, so they do not show that the selectors retain equally accurate first answers.
The subproblems are mathematically separate, so a wrong first answer need not prevent success on the second.

The reported SC-to-Pass@8 generated-token ratios are approximately 1.01--1.12 for R1-1.5B, R1-7B, R1-14B, and GPT-OSS-20B, and 1.4--1.5 for the other models.
These procedures use different success criteria, so the ratios cannot establish savings at matched whole-task success rates.

\Needspace{18\baselineskip}
\begin{wraptable}{r}{0.4\textwidth}
    
    \centering
    \captionsetup{font=footnotesize,justification=raggedright,singlelinecheck=false,skip=4pt}
\caption{\textbf{Context for the Second Subproblem.} Four-DeepSeek means at the same continuation budget. $A_2^{\mathrm{ret}}$ counts second-subproblem success, including fallbacks. Change is the relative percentage increase over \texttt{no\_context}. Strategies supply no first-subproblem context, its answer only, or its reasoning and answer.}
    \label{table:prompt_ablation}
    \renewcommand{\arraystretch}{1.15}
    \setlength{\tabcolsep}{2.5mm}
    \centering\small
\begin{tabular*}{\linewidth}{@{\extracolsep{\fill}}lcc@{}}
    \toprule[0.95pt]
    
    \textbf{Strategy} & \textbf{$A_2^{\mathrm{ret}}$} & \textbf{Change} \\
    \midrule[0.6pt]

    \textit{no\_context} & 36.2\% & 0\% \\
    \textit{answer\_only} & 57.2\% & +58\% \\
    \textit{full\_context} & 58.3\% & +61\% \\
    \bottomrule[0.95pt]
    \end{tabular*}
\end{wraptable}

\paragraph{Continuation-prompt configurations.}
\label{sec:synergy_deep_dive}
At the same continuation budget, the four-DeepSeek means are 36.2\% for \texttt{no\_context}, 57.2\% for \texttt{answer\_only}, and 58.3\% for \texttt{full\_context} (\tabref{table:prompt_ablation}).
The answer-only prompt scores 21.0 percentage points above no context; full context scores another 1.1 points higher.
This small gap does not establish that the two prompts are equivalent.
The answer-only mean matches the Oracle mean in Table~\ref{table:connected_restoration}; the SC mean reported above is a separate result. The available prompt-comparison summary does not identify the selector configuration.

The prompts differ in instructions as well as content; these scores compare complete configurations and do not isolate the contribution of the first answer.
The answer-only protocol retains the same extracted answer assessed during selection, whereas full context also retains an unchecked reasoning trace truncated to fit the context budget.

\FloatBarrier
\begin{table*}[t]
    
    \caption{\textbf{Budget Allocation Sweep.} $M$ is the first-subproblem candidate budget and $B$ the cap retained for continuation. Succ. is second-subproblem success (\%), including fallbacks, regardless of first-answer correctness. Tokens reports recorded counts; Eff. divides Succ. by Tokens in thousands. Average-row efficiencies divide mean success by mean tokens; they are not means of model-level efficiencies. Superscripts give percentage-point differences from $(4,2)$. Orange marks the reference configuration $(8,4)$, not an efficiency optimum; blue marks configurations with higher success and a larger measured token cost.}
    \label{table:budget_allocation}
    
    \centering
    \renewcommand{\arraystretch}{1.2}
    \setlength{\tabcolsep}{1.6mm}
    \resizebox{0.95\textwidth}{!}{
    \begin{tabular}{lcccccccccccc}
    \toprule[0.95pt]
    \multirow{2}{*}[-0.7ex]{\textbf{Model}} & \multicolumn{3}{c}{$M{=}4, B{=}2$} & \multicolumn{3}{c}{$M{=}8, B{=}2$} & \multicolumn{3}{c}{$M{=}8, B{=}4$} & \multicolumn{3}{c}{$M{=}16, B{=}8$} \\
    \cmidrule(lr){2-4}\cmidrule(lr){5-7}\cmidrule(lr){8-10}\cmidrule(lr){11-13}
    & \textbf{Succ.} & \textbf{Tokens} & \textbf{Eff.} & \textbf{Succ.} & \textbf{Tokens} & \textbf{Eff.} & \textbf{Succ.} & \textbf{Tokens} & \textbf{Eff.} & \textbf{Succ.} & \textbf{Tokens} & \textbf{Eff.} \\
    \midrule[0.6pt]

    DeepSeek-R1-1.5B & 53.6\% & 25.1K & 2.13 & 54.0\% & 46.8K & 1.15 & \cellcolor[RGB]{250,235,215}\tmark{trenddn}{$\uparrow$2.0}{55.6\%} & \cellcolor[RGB]{250,235,215}50.0K & \cellcolor[RGB]{250,235,215}1.11 & \cellcolor[RGB]{220,235,250}\tmark{trenddn}{$\uparrow$3.2}{56.8\%} & 100.6K & 0.56 \\
    DeepSeek-R1-7B & 61.2\% & 29.2K & 2.09 & 61.2\% & 52.8K & 1.16 & \cellcolor[RGB]{250,235,215}\tmark{gray}{$\uparrow$.4}{61.6\%} & \cellcolor[RGB]{250,235,215}57.4K & \cellcolor[RGB]{250,235,215}1.07 & 61.2\% & 115.0K & 0.53 \\
    DeepSeek-R1-8B & 41.2\% & 59.6K & 0.69 & 44.8\% & 100.7K & 0.44 & \cellcolor[RGB]{250,235,215}\tmark{trenddn}{$\uparrow$10.0}{51.2\%} & \cellcolor[RGB]{250,235,215}116.7K & \cellcolor[RGB]{250,235,215}0.44 & \cellcolor[RGB]{220,235,250}\tmark{trenddn}{$\uparrow$12.4}{53.6\%} & 233.0K & 0.23 \\
    DeepSeek-R1-14B & 59.6\% & 32.6K & 1.83 & 60.0\% & 57.7K & 1.04 & \cellcolor[RGB]{250,235,215}\tmark{trenddn}{$\uparrow$.8}{60.4\%} & \cellcolor[RGB]{250,235,215}63.7K & \cellcolor[RGB]{250,235,215}0.95 & \cellcolor[RGB]{220,235,250}\tmark{trenddn}{$\uparrow$1.6}{61.2\%} & 128.1K & 0.48 \\
    Qwen3-4B & 47.2\% & 58.7K & 0.80 & 48.0\% & 100.3K & 0.48 & \cellcolor[RGB]{250,235,215}\tmark{trenddn}{$\uparrow$6.4}{53.6\%} & \cellcolor[RGB]{250,235,215}117.6K & \cellcolor[RGB]{250,235,215}0.46 & 53.2\% & 234.7K & 0.23 \\
    Qwen3-30B-A3B & 54.4\% & 55.4K & 0.98 & 57.2\% & 95.2K & 0.60 & \cellcolor[RGB]{250,235,215}\tmark{trenddn}{$\uparrow$4.0}{58.4\%} & \cellcolor[RGB]{250,235,215}110.4K & \cellcolor[RGB]{250,235,215}0.53 & \cellcolor[RGB]{220,235,250}\tmark{trenddn}{$\uparrow$6.0}{60.4\%} & 219.1K & 0.28 \\
    GPT-OSS-20B & 42.8\% & 15.9K & 2.69 & 48.4\% & 28.2K & 1.72 & \cellcolor[RGB]{250,235,215}\tmark{trenddn}{$\uparrow$4.4}{47.2\%} & \cellcolor[RGB]{250,235,215}33.2K & \cellcolor[RGB]{250,235,215}1.42 & \cellcolor[RGB]{220,235,250}\tmark{trenddn}{$\uparrow$7.6}{50.4\%} & 68.3K & 0.74 \\
    GPT-OSS-120B & 62.0\% & 14.4K & 4.31 & 62.4\% & 25.4K & 2.46 & \cellcolor[RGB]{250,235,215}\tmark{gray}{$\uparrow$.4}{62.4\%} & \cellcolor[RGB]{250,235,215}29.0K & \cellcolor[RGB]{250,235,215}2.15 & 62.4\% & 57.3K & 1.09 \\
    \midrule[0.4pt]

    \textit{DeepSeek Avg} & 53.9\% & 36.6K & 1.47 & 55.0\% & 64.5K & 0.85 & \cellcolor[RGB]{250,235,215}\tmark{trenddn}{$\uparrow$3.3}{57.2\%} & \cellcolor[RGB]{250,235,215}71.9K & \cellcolor[RGB]{250,235,215}0.80 & \cellcolor[RGB]{220,235,250}\tmark{trenddn}{$\uparrow$4.3}{58.2\%} & 144.2K & 0.40 \\
    \textit{All Avg} & 52.8\% & 36.4K & 1.45 & 54.5\% & 63.4K & 0.86 & \cellcolor[RGB]{250,235,215}\tmark{trenddn}{$\uparrow$3.5}{56.3\%} & \cellcolor[RGB]{250,235,215}72.3K & \cellcolor[RGB]{250,235,215}0.78 & \cellcolor[RGB]{220,235,250}\tmark{trenddn}{$\uparrow$4.6}{57.4\%} & 144.5K & 0.40 \\
    \bottomrule[0.95pt]
    \end{tabular}
    }
\end{table*}

\begin{figure}[t]
    \centering
	    \includegraphics[width=\linewidth]{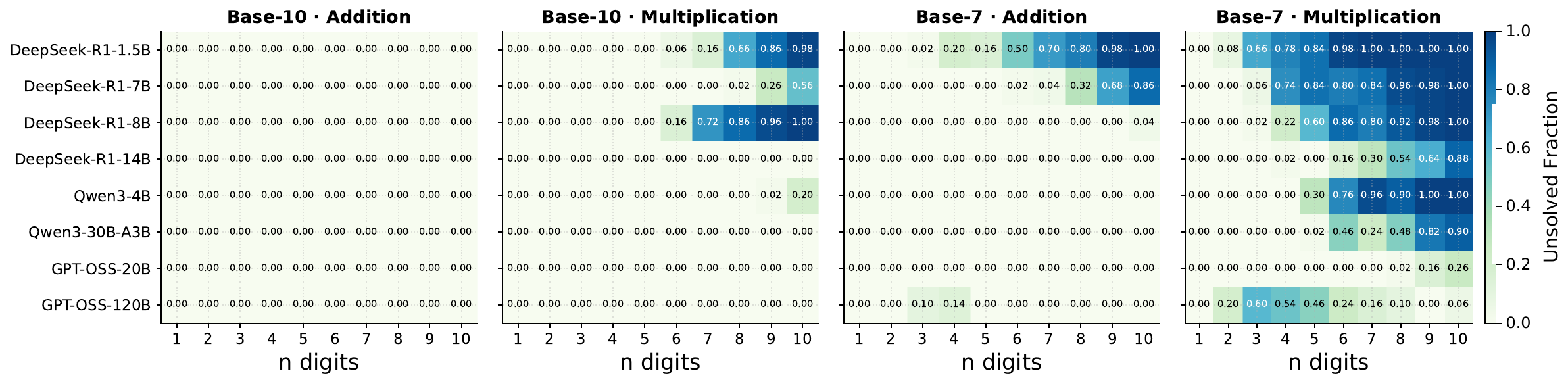}

\caption{\textbf{Unsolved Fraction after 32 Attempts.} Each cell reports $\hat E(32)$, the fraction of 50 problems unsolved after 32 attempts; $n$ is operand digit count. Panels show Base 10 addition/multiplication and Base 7 addition/multiplication. All Base 10 addition problems are solved; Base 7 multiplication has the largest mean unsolved fraction. For R1-7B, the latter rises from 0.06 at $n=3$ to 0.74 at $n=4$. GPT-OSS-120B shows nonmonotonic behavior at low digit counts under the recorded sampling protocol.}
    \label{fig:floor_heatmap}
    
\end{figure}

\paragraph{How candidate and continuation budgets trade off.}
\label{sec:pareto}
\tabref{table:budget_allocation} compares four tested allocations. Its $(8,4)$ success column reproduces the Oracle column of Table~\ref{table:connected_restoration}; the aggregate table alone does not identify the selector used for every other allocation.
The eight-model mean rises from 52.8\% at 36.4K tokens for $(4,2)$ to 56.3\% at 72.3K for $(8,4)$, a 3.5-point gain for approximately twice the generation.
Moving to $(16,8)$ gives 57.4\% at 144.5K, another 1.1 points for approximately twice the tokens again.
The second budget expansion yields a smaller gain in success.
We use $(8,4)$ as an intermediate reference; $(4,2)$ has higher success per thousand generated tokens.

The intermediate allocations separate changes in candidate count from changes in continuation count.
At $B=2$, increasing $M$ from 4 to 8 raises mean success by 1.7 points, from 52.8\% to 54.5\%, for 27.0K additional tokens.
At $M=8$, increasing $B$ from 2 to 4 raises mean success by 1.8 points, from 54.5\% to 56.3\%, for 8.9K additional tokens.
Increasing the continuation budget therefore gives a similar mean gain with fewer additional generated tokens in this comparison.

The model-level responses also vary.
At $M=8$, raising $B$ from 2 to 4 changes R1-8B from 44.8\% to 51.2\%, while GPT-OSS-120B stays at 62.4\%.
GPT-OSS-20B changes from 48.4\% to 47.2\%; without nested pools this cannot be attributed to keeping more candidates.
Across $(4,2)$ to $(8,4)$, R1-8B gains 10.0 points, compared with 0.4 points each for R1-7B and GPT-OSS-120B.
The gains therefore depend on both the model and the allocation of the sampling budget.

\paragraph{Resolution of the comparisons.}
With 250 pairs per model, one changed pair corresponds to 0.4 percentage points.
The largest SC--Oracle difference is a net difference of four successful pairs.
Paired uncertainty is unavailable for these comparisons and for the 1.1-point full-context increment, so we interpret the figures as descriptive comparisons of the evaluated configurations.

\clearpage
\Needspace{23\baselineskip}
\subsection{Finite-Budget Arithmetic Retries}
\label{sec:two_component}

\begin{wrapfigure}{r}{0.4\textwidth}
    
    \centering
    \captionsetup{font=footnotesize,justification=raggedright,singlelinecheck=false,skip=4pt}
    \includegraphics[width=\linewidth]{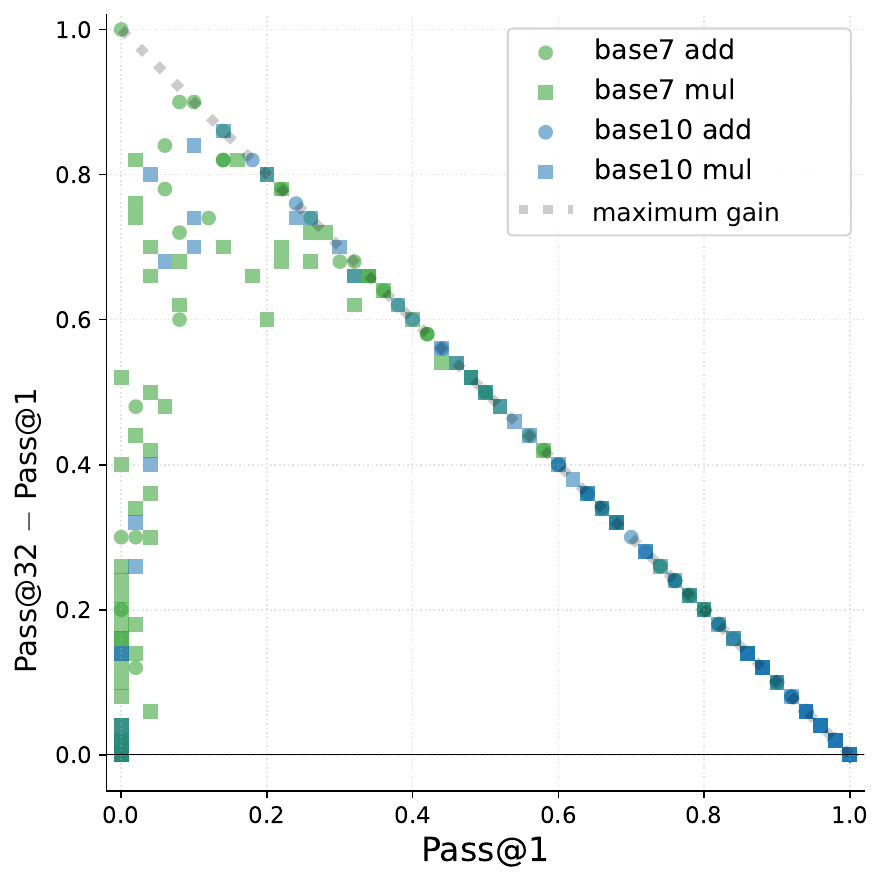}
\caption{\textbf{Gain from Additional Attempts.} Each point is a combination of model, base, operation, and digit count. The gain is Pass@32 minus Pass@1; the dashed line is its maximum $1-\mathrm{Pass}@1$. Large observed gains occur around Pass@1 values of 0.05--0.35 in these evaluated arithmetic conditions.}
    \label{fig:retry_sweet_spot}
    
\end{wrapfigure}

The arithmetic experiments ask how many initially missed problems can be solved with more attempts.
Here the candidate is an entire arithmetic answer, not a checked digit-level state.
We write $\hat E(R)$ for the fraction of problems with no correct answer among their first $R$ attempts; empirical Pass@$R$ is $1-\hat E(R)$.
The checkpoints use prefixes of the same pool of 32 complete-answer attempts.

Across 320 model--base--operation--digit conditions, 235 have no unsolved problems after 32 attempts and 85 have a nonzero unsolved fraction (\figref{fig:floor_heatmap}).
All 80 Base 10 addition conditions have zero unsolved fraction; Base 7 multiplication has a mean of 0.40, with 33 of 80 conditions above 0.5.
The dependence on digit count varies across models, including nonmonotonic behavior at low digit counts for GPT-OSS-120B.
Large retry gains appear around Pass@1 values of 0.05--0.35 (\figref{fig:retry_sweet_spot}), where repeated attempts can cover \mbox{many initially missed problems}.

First-attempt accuracy alone does not determine the gain from retries.
On Base 7 addition at $n=3$, R1-1.5B rises from Pass@1 of 0.08 to Pass@32 of 0.98, leaving $\hat E(32)=0.02$.
On Base 7 multiplication at $n\geq7$, the same model has Pass@1 of zero and $\hat E(32)=1$.
R1-14B at $n=6$ on Base 7 multiplication starts at Pass@1 of 0.14 but solves 84\% of the evaluated problems with the same total sampling budget of 32 complete-answer attempts (\figref{fig:app_pass1_heatmap}).\looseness=-1 

A low Pass@1 can therefore accompany either substantial retry gains or no observed success within the available budget.
The observations concern finite pools of complete answers: $\hat E(32)>0$ does not identify a permanent failure floor, and digit count is not the number of checked intermediate states.
\appref{app:base_comparison} and \appref{app:floor_analysis} give the curves, mixture fits, and token counts.

\subsection{Agent Score Prediction}
\label{sec:agent_prediction}

We also examine whether a repeated-attempt curve can predict later agent scores from early observations.
The retry curve approaches a fitted score ceiling with diminishing gains; a sigmoid in log time provides an alternative model of bounded score growth (\appref{app:agent_derivation}).

EdgeBench~\citep{edgebench2026} supplies complete observations at 2, 4, 6, 8, 10, and 12 hours for 252 task--model pairs across five models and 51 tasks.
For each target pair, we fit its 2-hour and 4-hour scores using category-level priors from other tasks. Priors exclude all model trajectories of the target task.

\begin{figure}[t]
    \centering
    \includegraphics[width=0.72\linewidth]{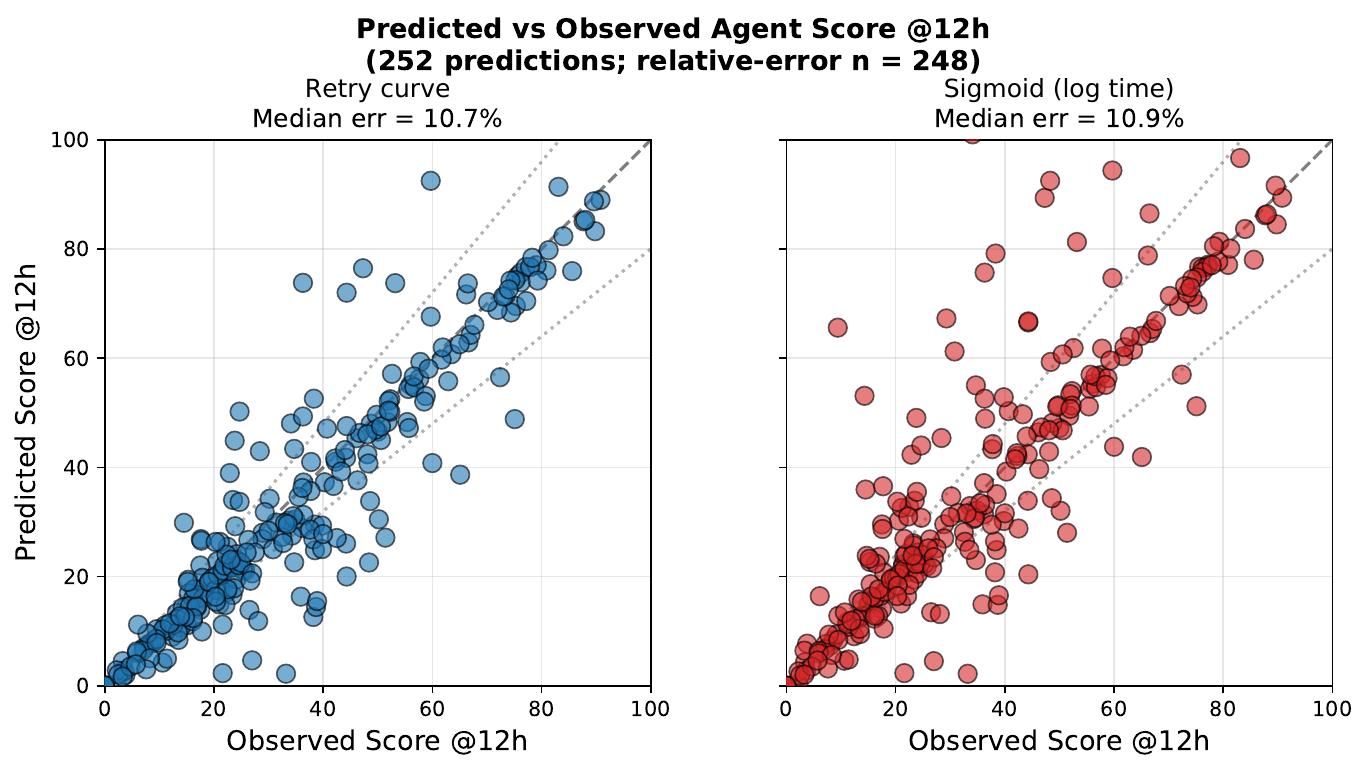}
    
    \caption{\textbf{Agent Score Prediction at 12 Hours.} Each point is a task--model pair. Predictions use its scores at 2 and 4 hours and parameter estimates from other tasks in the same category, excluding the target task. The retry curve models diminishing gains from repeated attempts; the comparison curve is sigmoid in log time (Appendix~\ref{app:agent_derivation}). All 252 pairs are plotted; median relative errors are 10.7\% and 10.9\%, respectively, over 248 pairs with positive observed scores. Dashed and dotted lines mark perfect prediction and $\pm20\%$ bands.}
    \label{fig:agent_scatter}
    
\end{figure}

Median relative errors for the retry and sigmoid curves are 3.8\% and 3.9\% at 6 hours, increasing to 10.7\% and 10.9\% at 12 hours.
Relative-error summaries use 248 pairs with positive observed scores at each prediction time, while all 252 pairs receive predictions.
\tabref{tab:agent_prediction} reports all four horizons and the fractions below 10\% and 20\% relative error.
Neither curve has the lower median error at every horizon; the retry curve has at least as many predictions below 10\% error and more below 20\% at each horizon.
The two curves have similar prediction errors, although their priors constrain different parameters.
The forecast does not identify internal checks or state reuse, or test savings from early stopping.

\FloatBarrier

\section{Related Work}
\label{sec:related_work}

\paragraph{Reasoning compositionality and long-horizon reliability.}
Prior work documents difficulties composing intermediate results~\citep{dziri2023faith}, falling success with task duration~\citep{kwa2025}, and failures at high problem complexity~\citep{shojaee2025illusion}.
\citet{zhang2025reasoning} formulate compute and accuracy laws, measure compositionality gaps, and introduce SFT-Compo as a training intervention.
We use their LoRe-Compo setting to study test-time sampling and state reuse.

\paragraph{Decomposition, search, and local error correction.}
Hierarchical reinforcement learning composes subgoals through options, MAXQ, and skill chaining~\citep{sutton1999between,dietterich2000hierarchical,konidaris2009skill}.
Least-to-Most prompting uses earlier answers to solve later subproblems~\citep{zhou2022least}; Tree of Thoughts searches and backtracks over partial states~\citep{yao2023tree}, and Graph of Thoughts combines and refines intermediate results~\citep{besta2024graph}.
Chain-in-Tree chains routine steps and reuses generated children when branching becomes necessary~\citep{li2026chain}.
Recursive and agentic inference systems similarly pass subcall outputs to later computations~\citep{zhang2025recursive,alzu2026roma,yu2025recode,nielsen2025learning}.
MAKER~\citep{meyerson2025solving} combines extreme decomposition, error detection, and first-to-$k$-ahead voting, with asymptotic foundations in~\citet{meyersonposition}.
Under its voting model, MAKER derives logarithmic voting budgets and $\Theta(N\log N)$ total expected cost for fixed task reliability, assuming local success probability above one half.
Our finite-cap analysis bounds work across correct and incorrect states under first acceptance and global stopping; its cost guarantee requires precision and acceptance bounds only at correct prefixes.

\paragraph{Stopped testing and restart analysis.}
Start-up demonstration tests study total-success and consecutive-failure stopping criteria, including dependent trials~\citep{gera2013startup,balakrishnan2014startup}.
Our result bounds expected work uniformly over the specified retry class and relates it to correctness at retained prefixes.
Classical restart scheduling optimizes repeated complete runs~\citep{luby1993optimal}; the present cap and stopping rule are fixed.
\appref{app:stopping_context} gives the comparison and exact work recursion.

\paragraph{Verification and test-time sampling.}
Independent sampling improves candidate coverage~\citep{brown2024large}, while adaptive test-time computation allocates effort according to task difficulty and verification quality~\citep{snell2025scaling,e3-2025,misaki2025wider}.
Self-consistency uses answer agreement as a proxy for correctness~\citep{wang2022self}; process supervision and reward models assess intermediate reasoning~\citep{uesato2022solving,lightman2023let,zheng2025survey}.
ProcessBench and Socratic-PRMBench document limits of identifying and evaluating reasoning errors~\citep{zheng2025processbench,li2025socratic}.
Formal verification and autoformalization provide stronger, domain-specific checks~\citep{leino2010dafny,szegedy2020promising}.
Our empirical Oracle is reference-answer matching, and SC is thresholded modal-answer selection; neither verifies a full reasoning trace.
\citet{labiad2026beyond} train a search policy to guide a frozen answer model and show that gains over repeated sampling depend on decoding and exploration choices.
Their comparisons match the answer-generation allocation; concept generation adds a small separate inference cost.
Our arithmetic experiments measure repeated-answer coverage under fixed decoding.

\paragraph{Retained context and execution cost.}
Atom of Thoughts decomposes and contracts problems into self-contained states designed to preserve answer equivalence~\citep{teng2025atom}.
MEM1 learns a compact recurrent state and discards redundant interaction history~\citep{zhou2025mem1}.
These methods address what to retain. Our analysis specifies the correctness and continuation conditions that the retained state must satisfy.
Using counterfactual error histories, \citet{sinha2025illusion} show that earlier mistakes can impair later reasoning, with thinking mitigating this effect in their experiments.
Context length and information placement can also affect model behavior~\citep{liu2024lost}.
Adaptive sequential Monte Carlo avoids recomputing shared prefixes by reordering KV caches and particle tensors during resampling~\citep{wang2026cache}.
This reduces execution cost without establishing the correctness of retained reasoning.
\appref{app:cascade_solution} gives a separate qualitative recurrence. The reliability proof instead controls errors at correct prefixes and bounds the probability of retaining an incorrect intermediate state.

\section{Discussion}
\label{sec:discussion}
\label{sec:what_achieved}

\paragraph{Acceptance and correctness.}
At a fixed retained state, more attempts increase the chance that a candidate passes the check.
With the same candidate distribution and check, they leave the error rate among accepted candidates unchanged.
Reliable continuation therefore requires both a sufficiently accurate check and a state that contains the information needed by later steps.
Checking an answer and retaining its full reasoning trace impose different correctness requirements on what is passed forward.

\paragraph{Executed work and total cost.}
The local cap reserves enough attempts for difficult steps, while acceptance often allows the procedure to advance earlier.
With constant acceptance probability and sufficient precision, the stopped loop uses linearly many attempts in expectation.
The cost of generating candidates, checking them, and processing retained states must then be counted at every reached state.
A stricter check can improve precision while increasing the cost of each attempt or reducing the chance of acceptance.
Both effects enter the comparison between checking cost and avoided generation.

\paragraph{Choosing a sampling configuration.}
The arithmetic results show that the benefit of more attempts varies across tasks, even when first-attempt accuracy is low.
On the paired mathematics problems, the answer-only prompt scores closer to full context than to no context, although the comparison changes instructions as well as retained information.
Larger candidate and continuation budgets also give uneven gains across models.
These observations help describe the configurations evaluated here.
Establishing computational savings from reuse still requires a comparison at the same whole-task success rate, with generation, checking, and retained-state processing all included.

\Needspace{13\baselineskip}
\section{Conclusion}
\label{sec:conclusion}

Reliable reuse depends on which candidates a check accepts and how later steps use the retained state.
Under the stated retry and stopping rules, we give conditions for correct completion and bounds on executed attempts.
Additional retries reduce exhaustion, while the candidate distribution and check determine accepted-state precision.
Total cost depends on per-attempt cost and acceptance probability.

The experiments document second-subproblem coverage across continuation prompts and variable gains from repeated arithmetic answers.
They describe practical sampling choices but do not establish the long-chain guarantees.
Assessing reuse on a new task requires measuring retained-state error, continuation reliability, and total cost at matched whole-task success.

\label{page:main_end}

\section*{Limitations}
\addcontentsline{toc}{section}{Limitations}
\label{sec:limitations}
\label{sec:what_not_achieved}
\paragraph{Execution and checking assumptions.}
The work theorem assumes conditionally independent, identically distributed retries at each retained state and termination when the local budget is exhausted.
Feedback that changes the next attempt and fallback after exhaustion require separate analysis.
The binary-check examples further assume that check outcomes are independent conditional on candidate correctness.
The experiments do not directly test these execution assumptions.

\paragraph{Checks and retained information.}
\label{sec:disc_verification}
Agreement can select common correct answers, but shared errors can also receive many votes~\citep{wang2022self}.
Similar SC and Oracle scores on the second subproblem do not tell us how often either selector keeps a wrong first answer.
The correctness condition applies to everything passed forward: checking an extracted answer does not verify a reasoning trace retained alongside it.
Process-verifier benchmarks document errors in learned checks; formal verification uses task-specific correctness criteria~\citep{zheng2025processbench,li2025socratic,leino2010dafny,szegedy2020promising}.\looseness=-1 

\paragraph{Decomposition and empirical scope.}
\label{sec:disc_states}
The theorem takes the decomposition and correctness criterion as inputs.
A smaller step can be easier to check but requires more checks; a larger step may retain information whose correctness is harder to establish.
The LoRe-Compo subproblems are mathematically separate, and arithmetic retries whole answers, so neither experiment estimates uniform reliability at the prefixes of a long dependent chain.
Open-ended tasks may need different representations and task-specific checks before the sufficient conditions become applicable.
The available composite aggregates do not identify accepted-state error or fallback contributions; prompt differences also prevent isolating an information-reuse effect.\looseness=-1 

\paragraph{Cost and auxiliary models.}
\label{sec:disc_capability}
The total-cost bound includes generation, checking, and state processing at every reached state.
Growing contexts or more accurate checks can increase that cost and must remain explicit in the budget expression.
When complete solutions are already reliable, the extra checking may outweigh the work saved by reuse.
The appendix's qualitative error-propagation and mixture models describe possible behaviors; finite retry pools cannot identify a permanent capability boundary.
Agent-score prediction also does not identify internal verification or establish a compute-allocation benefit.

%% file: resources/appendix.tex
\section{Modeling Assumptions and Their Scope}
\label{app:limitations}

The analysis uses explicit assumptions about sampling, checks, and task decomposition. This appendix states their scope and relates them to the experimental protocols.
The \hyperref[sec:limitations]{Limitations} section explains their implications for checking, continuation, and empirical interpretation.

\paragraph{Independence assumption (see~\hyperref[sec:disc_states]{Limitations}).}
\propref{prop:breakdown_product} requires matching subproblem success rates between composite and standalone prompts, within-run independence of subproblem correctness, and independent identically distributed composite runs.
The reuse result in \propref{prop:restoration} requires the same reliability bounds for every correct accepted state passed to later stages, with the total bound on accepted-state errors below the target failure probability.
Our experiments sample subproblem pairs from distinct MATH subjects as a practical way to reduce concept overlap between subproblems.
Subject separation does not establish these assumptions: shared formatting, latent correlations, and position-dependent generation can affect both dependence and marginal success rates.\looseness=-1 

\paragraph{Error propagation assumptions (see~\hyperref[sec:disc_capability]{Limitations}).}
The error model in~\appref{app:cascade_solution} assumes that earlier failures add linearly to later failure probabilities, remain in context, and have a common baseline probability $q_t^{\mathrm{clean}}=\bar q$ when preceding steps are correct.
Real autoregressive models exhibit nonlinear error interactions, heterogeneous step difficulty, and partial attention decay across long contexts~\citep{liu2024lost}, none of which the closed-form solution $\bar q(1{+}c_{\mathrm{ctx}})^{t-1}$ captures.
\appref{app:cascade_solution} gives the closed form; its parameters are not fitted to the experimental data.

\paragraph{Stepwise decomposition requirement (see~\hyperref[sec:disc_states]{Limitations}).}
\propref{prop:restoration} requires each step to have a positive probability of producing a correct accepted state from every correct input state. A suitable decomposition must keep the individual steps within the model's capabilities.
Open-ended generation, planning without intermediate checks, and tasks without a step-level correctness criterion may not admit such a decomposition, so the product lower bound need not apply.

\paragraph{Task length must reflect reasoning steps.}
The scaling variable $N$ counts steps in the chosen decomposition, rather than prompt length or operand digits.
Memorization or familiarity can let a model answer without following that decomposition, changing the relation between input size and executed work.
Comparing Base 7 and Base 10 varies the numeral representation; without observations of internal steps, it cannot determine the process used to obtain an answer.

\paragraph{Reliability of checks (see~\hyperref[sec:disc_verification]{Limitations}).}
On LoRe-Compo, self-consistency (SC) and Oracle selection yield similar second-subproblem success rates. This comparison does not estimate the accuracy of either check, and the arithmetic experiments do not compare these selection rules.
The error rate among accepted SC answers can be measured by comparing them with reference answers, excluding fallback candidates that did not pass the check.
Summaries that include fallback continuations describe all retained paths and cannot directly estimate the error rate among accepted states.

\newpage

\begin{table}[t]
    \caption{\textbf{Core notation used in the paper.}
    Proof, fit, and metric symbols are defined where they appear.}
    \label{tab:notation}
    \centering
    \resizebox{\textwidth}{!}{
    \begin{tabular}{ll}
    \toprule
    \textbf{Symbol} & \textbf{Meaning} \\
    \midrule
    \multicolumn{2}{l}{\textit{Base reasoning quantities}} \\
    $\mtheta$, $p_{\mtheta}$ & model parameters and induced generation distribution \\
    $x$, $x_1\oplus x_2$ & task and composite task \\
    $z_{1:N}$, $z_t$ & generated text sequence and text generated at one step \\
    $s_t$, $y$, $a^\star(x)$ & intermediate state, extracted final answer, and target answer \\
    $N$, $n$ & number of steps in the chosen decomposition and digit count \\
    $A_{\mtheta,1}(x)$, $C_{\mtheta,1}(x)$ & single-run answer accuracy and expected generated token count \\
    $\lambda_{\mtheta}$ & accuracy decay rate per step in the conditional scaling model \\
    \midrule
    \multicolumn{2}{l}{\textit{Independent sampling quantities}} \\
    $K$ & number of sampled runs \\
    $A_{\mtheta,K}(x)$, $C_{\mtheta,K}(x)$ & Pass@$K$ coverage and expected generated token count ($K{=}1$: one run) \\
    $\sigma$ & relative compositionality ratio \\
    \midrule
    \multicolumn{2}{l}{\textit{Error propagation and repeated attempts}} \\
    $\varepsilon$, $\bar q$ & per-step failure probability and common failure probability from correct preceding states \\
    $q_t$, $q_t^{\mathrm{clean}}$, $q_t^{\mathrm{clean}}(R)$ & step-failure probability; from a correct prefix; no correct candidate in $R$ attempts \\
    $q_t^{\mathrm{floor}}$, $c_{\mathrm{ctx}}$ & asymptotic error floor and coefficient for the effect of earlier errors \\
    \midrule
    \multicolumn{2}{l}{\textit{State reuse and retry quantities}} \\
    $z_t^{(r)}$, $s_t^{\checkmark}$ & candidate text and accepted intermediate state \\
    $\cG$, $\cG_t$; $\cV$, $\cV_t$ & correctness indicators; accept-or-reject checks \\
    $\rho_t(s)$, $\alpha_{t}(R;s)$ & rejection probability and accepted-state error, given input state $s$ \\
    $\beta_t(R;s)$ & probability of budget exhaustion or an incorrect returned state \\
    $\varepsilon_{\mathrm{loc},t}$ & single-attempt failure probability on solvable instances in the mixture model \\
    $R$ & local attempt cap; complete-answer attempt count in arithmetic \\
    $H_t$ & first $t$ steps complete with every retained state correct \\
    $W$, $T_t$ & executed candidate attempts in the run and at step $t$ \\
    $h_N$, $g_N$ & acceptance-probability lower bound and per-attempt cost upper bound \\
    $\mathcal P_{N,R}$ & processes obeying the local retry cap and global stopping rule \\
    $c_{\text{gen}}$, $c_{\text{ver}}$ & per-attempt generation and checking cost bounds in a common unit \\
    $A_{\mathrm{vsr}}$, $C_{\mathrm{vsr}}$, $\overline C_{\mathrm{vsr}}$, $C_{\mathrm{indep}}$ & reuse accuracy, expected reuse cost, reuse cost bound, and independent sampling cost \\
    \midrule
    \multicolumn{2}{l}{\textit{Experimental metrics}} \\
    $M$, $B$ & first-subproblem candidate budget and cap on candidates retained for continuation \\
    $\hat{E}(R)$ & empirical fraction of problems unsolved after $R$ attempts \\
    $\hat E(32)$ & unsolved fraction after 32 complete-answer attempts \\
    \bottomrule
    \end{tabular}
    }
\end{table}

\subsection{Error Propagation Model}
\label{app:cascade_solution}

Errors in autoregressive generation may remain in the input context and affect later steps~\citep{zhang2023language, liu2024lost}.
Let $q_t$ be the failure probability at step $t$ under the generated prefix, and $q_t^{\mathrm{clean}}$ its value when all preceding steps are correct.
For a coefficient $c_{\mathrm{ctx}}\geq0$ describing the effect of earlier errors, consider the qualitative approximation
\begin{equation}
q_t\;\approx\;q_t^{\mathrm{clean}}+c_{\mathrm{ctx}}\sum_{j<t}q_j.
\label{eq:volterra}
\end{equation}
If $q_t^{\mathrm{clean}}=\bar q$ at every step, differencing the recurrence gives
\begin{equation}
q_t\;\approx\;\bar q(1+c_{\mathrm{ctx}})^{t-1}.
\label{eq:cascade_solution}
\end{equation}
When $\bar q>0$ and $c_{\mathrm{ctx}}>0$, this expression increases with the step index. If the baseline difficulty varies across steps, $c_{\mathrm{ctx}}>0$ alone does not imply increasing failure probability.
The parameters $c_{\mathrm{ctx}}$ and $\bar q$ are not fitted here. The recurrence is a qualitative approximation with a linear, persistent context effect, valid only while predicted probabilities remain below one.
The recurrence offers an intuition for context-carried errors, while the product comparison and reuse guarantee use their own explicit probability assumptions.
Arithmetic retries do not observe step failures within a run and cannot directly test this recurrence.

\subsection{Correct Candidates before Checking}
\label{app:coverage_diagnostic}

For a fixed step instance and correct input state $s$, let $p_t(s)$ be the probability that a generated candidate is correct before checking.
With $R$ independent identically distributed attempts from this fixed context, the probability of no correct candidate is $(1-p_t(s))^R$, which tends to zero whenever $p_t(s)>0$.
Across a distribution of step instances and correct input states, define the probability of generating no correct candidate and its asymptotic error floor by
\begin{equation}
q_t^{\mathrm{clean}}(R):=\mathbb{E}_{s}\!\left[(1-p_t(s))^R\right],\qquad
q_t^{\mathrm{floor}}:=\lim_{R\to\infty}q_t^{\mathrm{clean}}(R)=P_s(p_t(s)=0).
\label{eq:app_coverage_floor}
\end{equation}
The limit follows by bounded convergence; a nonzero floor can arise from positive mass on locally unsolvable instances, rather than from independent retries at a fixed positive success probability.
A two-component approximation assigns mass $q_t^{\mathrm{floor}}$ to such instances and uses a common failure probability $\varepsilon_{\mathrm{loc},t}\in[0,1)$ for the remaining instances:
\begin{equation}
q_t^{\mathrm{clean}}(R)\;\approx\;q_t^{\mathrm{floor}}+(1-q_t^{\mathrm{floor}})\varepsilon_{\mathrm{loc},t}^{R}.
\label{eq:effective_failure_rate}
\end{equation}
This expression is exact for the stated two-point mixture; independence alone does not imply this mixture or a common failure probability on solvable instances.
Its zero-floor special case is
\begin{equation}
q_t^{\mathrm{floor}}=0\quad\Longrightarrow\quad q_t^{\mathrm{clean}}(R)\approx\varepsilon_{\mathrm{loc},t}^{R}.
\label{eq:app_floor_zero_reduction}
\end{equation}
These probabilities concern generated candidates before checking. They do not include the effect of accepting an incorrect candidate or rejecting a correct one.
The arithmetic experiments in~\secref{sec:two_component} use complete-answer attempts. The unsolved fraction at $R=32$ neither identifies an asymptotic error floor nor estimates the theorem's per-step failure probability.

\section{Proofs and Supporting Analysis}
\label{app:law_derivations}

This appendix gives the formal results and their proofs. It covers complete-solution sampling, retry budgets, stage composition, and stopped state reuse. Each result specifies its assumptions.

\subsection{Setup and Notation}
\label{app:notation}

We use the same model notation throughout the paper: $\mtheta$ denotes the parameters of the reasoning model, and $p_{\mtheta}$ denotes its induced generation distribution.
For a problem $x$, fix a decomposition into $N\geq1$ steps and write a sampled reasoning chain as $z_{1:N}=(z_1,\ldots,z_N)$, where $z_t$ is the text generated at step $t$.
Let $s_t=[x,z_{1:t}]$ denote the full generated history after step $t$, with $s_0=x$. In reuse, $s_t$ instead denotes the task information and intermediate results actually passed forward, as in Section~\ref{sec:prelim}; the prompt presents this state to the model.
Each step is sampled autoregressively from $p_{\mtheta}(\cdot\mid s_{t-1})$.
The final answer $y$ is extracted from the completed run.
The reuse result assumes that correct completion of every stage yields a correct final answer: if every accepted state is correct and all stages terminate, extraction returns the composite target (for two stages, $[a^\star(x_1),a^\star(x_2)]$; likewise for multiple stages below). Correctness refers to the retained state, not to reasoning discarded during answer extraction.
For local step analysis, $\cG_t(s_{t-1},z_t)\in\{0,1\}$ indicates whether the resulting state is correct, $q_t:=P(\cG_t=0)$ is its failure probability, and $q_t^{\mathrm{clean}}$ is the corresponding probability when all preceding states are correct.
The check $\cV_t(s_{t-1},z_t)\in\{0,1\}$ separately records rejection or acceptance and may disagree with $\cG_t$.
We write $N(x)$ for the length of the chosen decomposition of $x$, abbreviated as $N$ when the problem is clear.
The single-run quantities are $C_{\mtheta,1}(x)=\mathbb{E}[L]$ and $A_{\mtheta,1}(x)=P(y=a^\star(x))$, where $L$ is the total generated token count.
For two subproblems $x_1, x_2$, their composition $x_1 \oplus x_2$ is formed by concatenating them with a connector prompt. 
We define its composite target as $a^\star(x_1\oplus x_2):=[a^\star(x_1),a^\star(x_2)]$.
For these chosen decompositions, \emph{step-count additivity} means that the step counts satisfy
\begin{equation}
N(x_1 \oplus x_2) = N(x_1) + N(x_2).
\label{eq:app_independence_def}
\end{equation}
Operationally, our experiments use distinct MATH subjects as a practical proxy for reduced concept overlap across subproblems.
Step-count additivity does not imply the marginal-matching or probabilistic independence assumptions of~\propref{prop:breakdown_product}.
For reuse, a step must admit independent repeated attempts from a fixed input state and an accept-or-reject check $\cV$.
The ideal predicate $\cG$ defines correctness, whereas $\cV$ is the check applied to candidate states; acceptance alone does not imply $\cG=1$.
In the reuse proofs, $0<\rho<1$ uniformly bounds $\rho_t(s)$, and $\alpha$ uniformly bounds the error rate $\alpha_{t}(R;s)$ among accepted states over all relevant steps, correct accepted input states $s$, and attempt budgets $R$.
With $R$ total attempts, equivalently $R{-}1$ retries, all candidates are rejected with probability at most $\rho^R$. Returning an incorrect accepted state is a separate source of failure.
For independent sampling, the accuracy and generation cost of $K$ complete runs satisfy
\begin{align}
A_{\mtheta,K}(x) &= 1 - (1-A_{\mtheta,1}(x))^K \label{eq:passatk_def} \\
C_{\mtheta,K}(x) &= K \cdot C_{\mtheta,1}(x) \label{eq:passatk_cost}
\end{align}
Here $A_{\mtheta,K}$ is Pass@$K$ coverage and $C_{\mtheta,K}$ is expected generation cost; $K=1$ denotes a single complete run.
Across task lengths $N$, the conditional scaling assumptions are
\begin{align}
C_{\mtheta,1}(x) &= \Theta(N) \quad \text{(Compute Law)} \label{eq:app_ideal_compute} \\
A_{\mtheta,1}(x) &= \exp(-\lambda_{\mtheta}N),\quad \lambda_{\mtheta}>0 \quad \text{(Accuracy Law)} \label{eq:app_ideal_accuracy}
\end{align}

\paragraph{Notation conventions.}
All probabilities denoted by $P(\cdot)$ are taken under the relevant sampling process induced by $p_{\mtheta}$; $p_{\mtheta}$ itself denotes the model distribution.
For an integer $K$, $[K]$ denotes $\{1,\ldots,K\}$.
The symbol $\lll$ denotes an informal ``much smaller than'' condition in limiting approximations.
Failure budgets satisfy $\delta,\delta_i,\delta_m\in(0,1)$, with $\delta=\delta_1+\delta_2$ in the two-stage result and $\sum_m\delta_m=\delta$ in the multi-stage result, where the sum ranges over all stages of the task.

\subsection{Ideal Laws Under Per-Step Errors}
\label{app:ideal_failure}

We begin with the success probability and cost implied by independent per-step errors.
This calculation concerns a step decomposition; the arithmetic experiments measure complete-answer coverage.

\begin{proposition}[Exponential Decay of Success Probability]
\label{prop:exponential_decay}
Consider a chosen decomposition into $N$ steps, each with independent failure probability $\varepsilon\in(0,1)$, and assume the final answer is correct if and only if all $N$ steps are correct. Then:
\begin{equation}
A_{\mtheta,1}(x) = (1-\varepsilon)^N \leq e^{-\varepsilon N},
\label{eq:app_exponential_decay_form}
\end{equation}
where the equality $(1-\varepsilon)^N$ is exact and the upper bound $e^{-\varepsilon N}$ follows from $\log(1-\varepsilon) \leq -\varepsilon$.
This gives Eq.~\eqref{eq:single_chain_success} with the per-step accuracy decay rate $\lambda_{\mtheta}=-\log(1-\varepsilon)$.
\end{proposition}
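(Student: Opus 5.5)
The plan is to model the $N$-step chain by the indicators $G_t:=\mathbf 1[\text{step }t\text{ correct}]$, $t=1,\ldots,N$. By hypothesis these are mutually independent Bernoulli variables with $P(G_t=1)=1-\varepsilon$.

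The assumption that the final answer is correct if and only if all $N$ steps are correct identifies the event $\{y=a^\star(x)\}$ with $\bigcap_{t=1}^N\{G_t=1\}$. Independence then factorizes the probability:
\[
A_{\mtheta,1}(x)=P\Bigl(\bigcap_{t=1}^N\{G_t=1\}\Bigr)=\prod_{t=1}^N(1-\varepsilon)=(1-\varepsilon)^N .
\]
This gives the exact equality. No chain rule over histories is needed here, in contrast to \thmref{thm:reliability}, because independence is assumed outright.

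For the upper bound, I would use the elementary inequality $\log(1-\varepsilon)\le-\varepsilon$ for $\varepsilon\in(0,1)$. It follows from concavity of $u\mapsto\log(1-u)$, whose tangent line at $0$ is $-u$, or equivalently from $1-u\le e^{-u}$. Multiplying by $N\ge1$ and exponentiating gives
\[
(1-\varepsilon)^N=\exp\bigl(N\log(1-\varepsilon)\bigr)\le e^{-\varepsilon N}.
\]

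Finally, setting $\lambda_{\mtheta}:=-\log(1-\varepsilon)$ gives $A_{\mtheta,1}(x)=\exp(-\lambda_{\mtheta}N)$, which is the Accuracy Law \eqref{eq:app_ideal_accuracy}. The rate is positive because $\varepsilon>0$ forces $1-\varepsilon<1$, and it is finite because $\varepsilon<1$. The same inequality also gives $\lambda_{\mtheta}\ge\varepsilon$.

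There is no real obstacle. The only point needing care is the modeling step: the ``if and only if'' hypothesis is what makes success exactly the intersection of the step-correctness events. Without the converse direction, repair after an error could make success strictly larger, and only a lower bound would survive. I would state this identification explicitly before applying independence.
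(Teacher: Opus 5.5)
Your proposal is correct and follows the paper's route: independence factors the success probability into $(1-\varepsilon)^N$, and then you apply $\log(1-\varepsilon)\le-\varepsilon$ and exponentiate. The paper obtains that inequality from the tangent line of $\log$ at $1$ (i.e., $\log x\le x-1$) rather than the tangent of $\log(1-u)$ at $u=0$, which is the same bound. It also adds a remark the statement does not require, that $(1-\varepsilon)^N/e^{-\varepsilon N}\to1$ as $\varepsilon\to0$ for fixed $N$.
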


\begin{proof}
\textbf{Step 1: Success requires all steps.}
Let $F_i$ indicate failure at step $i$; each step succeeds with probability $1-\varepsilon$, so independence yields the following product:
\begin{equation}
A_{\mtheta,1}(x) = \prod_{i=1}^N P(F_i=0) = (1-\varepsilon)^N.
\end{equation}

\textbf{Step 2: Logarithmic inequality.}
Since $\log$ is concave and $\log(1) = 0$, the tangent line at $x=1$ gives $\log(x) \leq x-1$ for all $x > 0$.
Substituting $x = 1-\varepsilon$: $\log(1-\varepsilon) \leq (1-\varepsilon)-1 = -\varepsilon$ for all $\varepsilon \in (0,1)$.
Taking the natural logarithm of Step~1:
\begin{equation}
\log A_{\mtheta,1}(x) = N\log(1-\varepsilon) \leq -\varepsilon N.
\end{equation}

\textbf{Step 3: Exponential bound and approximation.}
Exponentiating both sides:
\begin{equation}
A_{\mtheta,1}(x) \leq e^{-\varepsilon N}.
\end{equation}
For fixed $N$ as $\varepsilon\to0$, the Taylor expansion $\log(1-\varepsilon)=-\varepsilon+O(\varepsilon^2)$ yields $A_{\mtheta,1}(x)/e^{-\varepsilon N}\to1$.
\end{proof}

\subsection{The Compositionality Breakdown under Independent Sampling}
\label{app:coupling_proof}

We first give a general event-inclusion bound for a common pool of composite runs, then prove the conditional product comparison used in the main text.

\begin{proposition}[Event-Inclusion Bound]
\label{prop:breakdown}
For any $K\geq1$, sample a common pool of $K$ composite runs for $x_1\oplus x_2$. Let $E_i^K$ denote coverage of a correct answer to $x_i$, and $E_{12}^K$ coverage of both correct answers in one run. These events satisfy
\begin{equation}
P(E_{12}^K)\;\leq\;P(E_1^K\cap E_2^K)\;\leq\;\min\{P(E_1^K),P(E_2^K)\},
\label{eq:app_breakdown_inclusion}
\end{equation}
without independence. All probabilities refer to the common pool of composite runs.
\end{proposition}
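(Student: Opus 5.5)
The plan is to prove both inequalities of \propref{prop:breakdown} by event inclusion on the common sample space of the $K$ composite runs, so no independence is needed. I first write each event as a union over runs. For run $k\in[K]$, let $C_{i,k}$ be the event that run $k$ returns a correct answer to $x_i$, for $i=1,2$. Then
\[
E_i^K=\bigcup_{k=1}^K C_{i,k},\qquad E_{12}^K=\bigcup_{k=1}^K \bigl(C_{1,k}\cap C_{2,k}\bigr).
\]
All of these events are defined on the single probability space that generates the pool.

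\textbf{First inequality.} Take an outcome in $E_{12}^K$. Some run $k$ lies in $C_{1,k}\cap C_{2,k}$. Then $C_{1,k}\subseteq E_1^K$ and $C_{2,k}\subseteq E_2^K$, so the outcome lies in $E_1^K\cap E_2^K$. This gives $E_{12}^K\subseteq E_1^K\cap E_2^K$, and monotonicity of probability yields $P(E_{12}^K)\leq P(E_1^K\cap E_2^K)$.

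\textbf{Second inequality.} Since $E_1^K\cap E_2^K\subseteq E_i^K$ for each $i$, monotonicity gives $P(E_1^K\cap E_2^K)\leq P(E_i^K)$ for $i=1,2$. Taking the smaller of the two bounds gives $\min\{P(E_1^K),P(E_2^K)\}$.

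There is no real obstacle. The only point needing care is that all three events must be defined on the same pool, since the statement concerns a common pool. This is what makes the inclusion valid. With separately drawn standalone pools, the probabilities could not be compared by inclusion alone.
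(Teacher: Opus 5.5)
Your proposal is correct and follows essentially the same approach as the paper: a single jointly correct run witnesses both coverage events, which gives $E_{12}^K\subseteq E_1^K\cap E_2^K$, and monotonicity of probability then yields both inequalities on the common pool. Your explicit use of $E_1^K\cap E_2^K\subseteq E_i^K$ for the second inequality only spells out a step the paper leaves implicit; one wording fix is that it is the outcome, not run $k$, that lies in $C_{1,k}\cap C_{2,k}$ for some $k$.
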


\begin{proposition}[Product Form]
\label{prop:breakdown_product}
Let $a:=A_{\mtheta,1}(x_1)\in(0,1)$ and $b:=A_{\mtheta,1}(x_2)\in(0,1)$. Assume each subproblem has the same correctness probability in a composite run as in its standalone run, the two correctness indicators are independent within each composite run, and the composite runs are i.i.d.\ across $K$ samples. Then $A_{\mtheta,1}(x_1\oplus x_2)=ab$ and:
\begin{equation}
A_{\mtheta,K}(x_1 \oplus x_2) \;<\; A_{\mtheta,K}(x_1)\cdot A_{\mtheta,K}(x_2), \qquad \forall\, K \geq 2.
\label{eq:app_breakdown_product}
\end{equation}
\end{proposition}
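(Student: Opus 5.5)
The proof has two parts. First I compute the single-run joint success probability. Then I compare the Pass@$K$ expressions through a strict inequality between functions of $K$.

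\textbf{Step 1: single run.} Let $G_1,G_2$ be the correctness indicators of the two subproblem answers in one composite run. Marginal matching gives $P(G_1=1)=a$ and $P(G_2=1)=b$. Within-run independence then gives $A_{\mtheta,1}(x_1\oplus x_2)=P(G_1=G_2=1)=ab$. Composite runs are i.i.d., so Eq.~\eqref{eq:passatk_def} applies to the composite task and yields $A_{\mtheta,K}(x_1\oplus x_2)=1-(1-ab)^K$. Applied to the standalone subproblems, the same formula gives $A_{\mtheta,K}(x_1)=1-(1-a)^K$ and $A_{\mtheta,K}(x_2)=1-(1-b)^K$.

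\textbf{Step 2: reduce to a probabilistic comparison.} I avoid expanding the algebra directly. Instead I build a coupling on one probability space with $K$ i.i.d.\ pairs $(G_1^{(k)},G_2^{(k)})$, each having independent Bernoulli$(a)$ and Bernoulli$(b)$ coordinates. On this space:
\begin{itemize}
\item $E_{12}=\{\exists k: G_1^{(k)}=G_2^{(k)}=1\}$ has probability $1-(1-ab)^K$;
\item $E_i=\{\exists k: G_i^{(k)}=1\}$ has probability $1-(1-a)^K$ for $i=1$ and $1-(1-b)^K$ for $i=2$.
\end{itemize}
The families $\{G_1^{(k)}\}_k$ and $\{G_2^{(k)}\}_k$ are independent, because all $2K$ indicators are mutually independent. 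Hence $P(E_1\cap E_2)=A_{\mtheta,K}(x_1)A_{\mtheta,K}(x_2)$. Since $E_{12}\subseteq E_1\cap E_2$, which is the inclusion in \propref{prop:breakdown}, we get the weak inequality. Strictness reduces to showing $P\bigl((E_1\cap E_2)\setminus E_{12}\bigr)>0$.

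\textbf{Step 3: strictness, the main point.} This is the only place where $K\geq2$ and $a,b\in(0,1)$ are used. I exhibit one positive-probability configuration:
\begin{itemize}
\item run 1 has $G_1^{(1)}=1$ and $G_2^{(1)}=0$;
\item run 2 has $G_1^{(2)}=0$ and $G_2^{(2)}=1$;
\item every other run has both indicators equal to $0$.
\end{itemize}
Its probability is $a(1-b)\,(1-a)b\,\bigl((1-a)(1-b)\bigr)^{K-2}$. Every factor is strictly positive because $a,b\in(0,1)$. On this configuration $E_1$ and $E_2$ both hold, while $E_{12}$ fails. This gives the strict inequality.

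For $K=1$ the inequality would be an equality: $ab=ab$. That explains the restriction to $K\geq2$, and it is consistent with Eq.~\eqref{eq:main_sampling_comparison}.
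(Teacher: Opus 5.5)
Your proposal is correct and follows essentially the same route as the paper: you realize the $K$ composite runs as $2K$ mutually independent indicators, combine the inclusion $E_{12}\subseteq E_1\cap E_2$ with independence of the two families to get the weak inequality, and then exhibit a positive-probability event outside $E_{12}$ (run 1 solves only $x_1$, run 2 solves only $x_2$) to get strictness. The only difference is cosmetic: your witness requires the remaining runs to fail both subproblems, giving probability $ab(1-a)(1-b)\bigl((1-a)(1-b)\bigr)^{K-2}$, whereas the paper only requires them not to be jointly correct, giving $ab(1-a)(1-b)(1-ab)^{K-2}$; either witness suffices.
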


\begin{remark}[Experimental proxy]
\label{rem:independence_proxy}
The main-text product comparison relies on the assumptions of~\propref{prop:breakdown_product}; the same-pool event-inclusion bound of~\propref{prop:breakdown} holds without independence.
Our experiments sample subproblem pairs from distinct MATH subjects to encourage reduced concept overlap across subproblems.
The \textit{no\_context} ablation in~\tabref{table:prompt_ablation} gives a descriptive comparison: second-subproblem success without $x_1$ information is 36.2\%. This differs from joint success on both problems, and the available standalone $x_2$ reference covers fewer models than the second-subproblem evaluation.
As discussed in \appref{app:limitations}, subject separation establishes neither independence within a run nor matching success rates across prompt formats.
\end{remark}

\begin{proof}[Proof of~\propref{prop:breakdown}]
For a pool of $K$ composite runs with sample space $\Omega$, define
\begin{equation}
\begin{aligned}
E_{12}^K &:= \{\exists k: \text{run } k \text{ returns a correct answer to both } x_1 \text{ and } x_2\}, \\
E_1^K &:= \{\exists k: \text{run } k \text{ returns a correct answer to } x_1\},
\end{aligned}
\end{equation}
and analogously $E_2^K$ for $x_2$.
The inclusion $E_{12}^K \subseteq E_1^K \cap E_2^K$ holds: if some run $k$ is correct on both subproblems, then that same run witnesses $E_1^K$ and witnesses $E_2^K$. Monotonicity of probability gives $P(E_{12}^K) \leq P(E_1^K \cap E_2^K) \leq \min\{P(E_1^K), P(E_2^K)\}$, which is Eq.~\eqref{eq:app_breakdown_inclusion}. This bound holds for every $K$ without assuming independence of subproblem-answer correctness.
\end{proof}

\begin{proof}[Proof of~\propref{prop:breakdown_product}]
Assume the matching composite/standalone marginals, within-run independence, and across-run i.i.d.\ conditions, and let $a:=A_{\mtheta,1}(x_1)$ and $b:=A_{\mtheta,1}(x_2)$, so that $A_{\mtheta,1}(x_1\oplus x_2)=ab$.
For $K$ independently sampled runs, define indicator random variables: $X_k \in \{0,1\}$ records that sampled run $k$ succeeds on $x_1$ with $P(X_k=1)=a$; $Y_k \in \{0,1\}$ records that sampled run $k$ succeeds on $x_2$ with $P(Y_k=1)=b$; and $Z_k := X_k Y_k$ records that sampled run $k$ succeeds on both with $P(Z_k=1)=ab$.
The two sequences are independent, with i.i.d.\ entries within each sequence.

\medskip\noindent\textbf{Step 1: Event formulation.}
Define success events:
\begin{equation}
E_1^K := \{\exists k: X_k=1\}, \quad E_2^K := \{\exists k: Y_k=1\}, \quad E_{12}^K:=\{\exists k: Z_k=1\}.
\label{eq:app_success_events}
\end{equation}
Independence across runs gives the following probabilities for the three success events:
\begin{equation}
P(E_1^K)=1-(1-a)^K, \quad P(E_2^K)=1-(1-b)^K, \quad P(E_{12}^K)=1-(1-ab)^K.
\label{eq:app_success_probs}
\end{equation}

\medskip\noindent\textbf{Step 2: Set inclusion.}
Key observation: $E_{12}^K \subseteq E_1^K \cap E_2^K$.
To see this, suppose $\zeta \in E_{12}^K$, so there exists $k$ with $Z_k(\zeta)=X_k(\zeta)Y_k(\zeta)=1$.
Then $X_k(\zeta)=1$, hence $\zeta \in E_1^K$; and $Y_k(\zeta)=1$, hence $\zeta \in E_2^K$.
Therefore $\zeta \in E_1^K \cap E_2^K$, completing the inclusion. Monotonicity of probability gives:
\begin{equation}
P(E_{12}^K) \leq P(E_1^K \cap E_2^K).
\label{eq:app_set_inclusion}
\end{equation}

\medskip\noindent\textbf{Step 3: Independence gives exact product.}
Since $\{X_k\}_{k=1}^K$ and $\{Y_k\}_{k=1}^K$ are mutually independent by assumption, any event measurable w.r.t.\ $\{X_k\}$ alone is independent of any event measurable w.r.t.\ $\{Y_k\}$ alone.
$E_1^K = \{\exists k: X_k=1\}$ depends only on $\{X_k\}$, and $E_2^K = \{\exists k: Y_k=1\}$ depends only on $\{Y_k\}$.
Therefore $E_1^K$ and $E_2^K$ are independent:
\begin{equation}
P(E_1^K \cap E_2^K) = P(E_1^K)P(E_2^K) = [1-(1-a)^K][1-(1-b)^K].
\label{eq:app_independent_product}
\end{equation}
Combining this product identity with the event inclusion from Step 2 gives the comparison
\begin{equation}
1-(1-ab)^K \leq [1-(1-a)^K][1-(1-b)^K].
\label{eq:app_breakdown_inequality}
\end{equation}
This establishes the non-strict product comparison in \propref{prop:breakdown_product}; strictness is proved next.

\medskip\noindent\textbf{Step 4: Strictness.}
For $K \geq 2$ and $a,b \in (0,1)$, the inequality is strict. Consider the event:
\begin{equation}
E_{1,2} := \{X_1=1,\; Y_1=0,\; X_2=0,\; Y_2=1,\; \forall\, k \geq 3:\; X_kY_k=0\}.
\label{eq:app_strict_event}
\end{equation}
The following three properties show that this event has positive mass outside joint success.

\emph{(i) $E_{1,2} \subseteq E_1^K \cap E_2^K$.}
$X_1=1$ implies $E_1^K$; $Y_2=1$ implies $E_2^K$.

\emph{(ii) $E_{1,2} \subseteq (E_{12}^K)^c$.}
Under $E_{1,2}$: sampled run 1 has $Z_1=X_1Y_1=1\cdot 0=0$; sampled run 2 has $Z_2=X_2Y_2=0\cdot 1=0$; and $Z_k=0$ for all $k\geq 3$ by construction.
Hence no sampled run succeeds on both subproblems, so $E_{12}^K$ does not occur under this construction.

\emph{(iii) $P(E_{1,2}) > 0$.}
By mutual independence of all $\{X_k\}$ and $\{Y_k\}$:
\begin{equation}
P(E_{1,2}) = a(1-b)(1-a)b\prod_{k=3}^{K}(1-ab) = ab(1-a)(1-b)(1-ab)^{K-2} > 0,
\label{eq:app_strict_prob}
\end{equation}
The strict positivity follows because every factor is positive for $a,b\in(0,1)$.

Properties (i) and (ii) give $E_{1,2} \subseteq (E_1^K\cap E_2^K) \setminus E_{12}^K$, so $P((E_1^K\cap E_2^K)\setminus E_{12}^K) \geq P(E_{1,2}) > 0$, establishing the required strict comparison $P(E_{12}^K)<P(E_1^K\cap E_2^K)$ for every $K\geq2$.
\end{proof}

\paragraph{Low accuracy behavior in the difficult problem range.}
Under the assumptions of~\propref{prop:breakdown_product}, consider the ratio between composite accuracy and the product of standalone accuracies when $Ka,Kb\lll1$. The approximation $1-(1-x)^K\approx Kx$ for $Kx\lll1$ follows from $(1-x)^K=1-Kx+O(K^2x^2)$.
Applying this to each term:
\begin{align*}
A_{\mtheta,K}(x_1) &= 1-(1-a)^K \approx Ka, \quad A_{\mtheta,K}(x_2) = 1-(1-b)^K \approx Kb, \\
A_{\mtheta,K}(x_1\oplus x_2) &= 1-(1-ab)^K \approx K(ab),
\end{align*}
where the last line uses $K(ab) \leq Ka \cdot b \leq Ka \lll 1$, ensuring the approximation is valid.
Define $\sigma := A_{\mtheta,K}(x_1 \oplus x_2) / [A_{\mtheta,K}(x_1) A_{\mtheta,K}(x_2)]$. Then $\sigma=1$ means accuracy compositionality, and $\sigma<1$ means composite accuracy falls below the product of standalone accuracies.
The resulting first-order ratio quantifies the relative gap before either accuracy saturates:
\begin{equation}
\sigma = \frac{A_{\mtheta,K}(x_1 \oplus x_2)}{A_{\mtheta,K}(x_1) \cdot A_{\mtheta,K}(x_2)} \approx \frac{K(ab)}{(Ka)(Kb)} = \frac{Kab}{K^2ab} = \frac{1}{K}.
\label{eq:app_synergy_coeff_app}
\end{equation}
This $1/K$ degradation scales inversely with the number of samples in the low-accuracy regime, showing that the relative compositionality breakdown of independent sampling worsens before saturation.
This first-order approximation requires $Ka,Kb\lll1$. Under the assumptions of~\propref{prop:breakdown_product}, the product inequality~\eqref{eq:app_breakdown_product} holds for all $K\geq2$ and $a,b\in(0,1)$; the common-pool inclusion in Eq.~\eqref{eq:app_breakdown_inclusion} remains valid for dependent subproblem-answer correctness.\looseness=-1 

\subsection{State Reuse: Mathematical Foundations}
\label{app:connected_foundations}

We now derive a product lower bound on success and a sufficient computation budget for state reuse.

\begin{definition}[Retry Loop]
\label{def:retry_loop}
Start from $s_0^{\checkmark}=x$.
At step $t$, let $s=s_{t-1}^{\checkmark}$ be the fixed accepted input state and $\rho_t(s):=P(\cV_t=0\mid s)$ its single-attempt rejection probability.
The loop uses an integer cap $R\geq1$ and repeats the following generation and checking steps:
\begin{enumerate}[nosep,leftmargin=12pt]
    \item Sample a candidate span $z_t^{(r)}\sim p_{\mtheta}(\cdot\mid s)$ from the fixed input state.
    \item Form the candidate intermediate state $\tilde s_t^{(r)}$, using either the full state $[s,z_t^{(r)}]$ or the prescribed extracted intermediate answer.
    \item Check $\cV_t(\tilde s_t^{(r)})$; if accepted, return $s_t^{\checkmark}=\tilde s_t^{(r)}$ and proceed.
    \item If rejected, retry from the same $s$ until $R$ total attempts have been used; if all are rejected, stop with failure.
\end{enumerate}
Let $A_t^{(R)}$ be the event that the loop returns an accepted state, and let $\cG_t$ indicate whether that state is correct. Define \emph{accepted-state error}, the error probability conditional on acceptance, as
\begin{equation}
\alpha_{t}(R;s):=P\!\left(\cG_t=0\mid A_t^{(R)},s\right).
\label{eq:app_contamination_def}
\end{equation}
Assume generation and check outcomes are independent and identically distributed across attempts conditional on $s$. Failure comprises budget exhaustion or an incorrect acceptance:
\begin{equation}
\begin{aligned}
\beta_t(R;s)&:=\rho_t(s)^R+\bigl(1-\rho_t(s)^R\bigr)\alpha_{t}(R;s)\\
&\leq\rho_t(s)^R+\alpha_{t}(R;s).
\end{aligned}
\label{eq:app_atomic_failure}
\end{equation}
The two terms are budget exhaustion and an incorrect state returned as accepted; the formal loop does not return a fallback after all candidates are rejected.
\end{definition}

\begin{remark}[Relation to Correct Candidates before Checking]
\label{rem:zero_floor}
The proofs use the failure bound in Eq.~\eqref{eq:app_atomic_failure}; they do not assume that an observed small unsolved fraction is an exactly zero asymptotic error floor.
Their uniform bounds $\rho<1$ and $\alpha<\delta/N<1$ imply a positive probability of a correct accepted candidate at every relevant correct prefix: at one attempt this probability is at least $(1-\rho)(1-\alpha)>0$.
Thus the probability of generating no correct candidate from a fixed correct input state tends to zero under independent attempts. The assumptions already require each step to be solvable.
The distribution considered in~\appref{app:coverage_diagnostic} may also include instances that do not satisfy these assumptions.
\end{remark}\looseness=-1 

\begin{proposition}[Logarithmic Attempt Requirement]
\label{prop:logarithmic_retry}
Under the assumptions of~\defref{def:retry_loop}, suppose $\rho_t(s)\leq\rho$ for a fixed $0<\rho<1$ and $\alpha_{t}(r;s)\leq\alpha$ for every relevant step, correct accepted input state $s$, and budget $r$. To keep the probability of any step exhausting its budget or accepting an incorrect state at most $\delta$, it suffices that $N\alpha<\delta$ and
\begin{equation}
R \;\geq\;
\left\lceil
\frac{\log\!\bigl(N/(\delta-N\alpha)\bigr)}{\log(1/\rho)}
\right\rceil.
\label{eq:retry_count}
\end{equation}
For fixed $0<\rho<1$ and $\alpha\leq c\delta/N$ with fixed $0\leq c<1$, the sufficient budget above is $\Theta(\log(N/\delta))$; in the separate case $\rho=0$, one attempt suffices for the rejection term.
\end{proposition}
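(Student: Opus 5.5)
The argument is a union bound over the $N$ steps, restricted to the correct-prefix events $H_{t-1}$, followed by solving for $R$.

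\textbf{Decomposing the failure event.} The event ``some step exhausts its budget or accepts an incorrect state'' is contained in $\bigcup_{t=1}^N (H_{t-1}\cap F_t)$. Here $F_t$ is the event that step $t$ is reached from the retained state $s_{t-1}^{\checkmark}$ and either all $R$ attempts are rejected or the first accepted state has $\cG_t=0$. This holds because the first step at which a failure occurs is entered with every earlier retained state correct. Steps after an incorrect acceptance therefore never need to be controlled, and the bounds are only needed at correct accepted input states $s$, which is exactly where the hypotheses apply.

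\textbf{Bounding one step.} Fix $t$ and condition on the complete history at entry to step $t$ on $H_{t-1}$, with correct input state $s$. Under the i.i.d.\ attempt assumption of \defref{def:retry_loop}, Eq.~\eqref{eq:app_atomic_failure} gives
\begin{equation}
P(F_t\mid s)=\beta_t(R;s)\le \rho_t(s)^R+\alpha_t(R;s)\le \rho^R+\alpha .
\end{equation}
Integrating over the entry history yields $P(H_{t-1}\cap F_t)\le (\rho^R+\alpha)P(H_{t-1})\le \rho^R+\alpha$. A union bound then gives total failure probability at most $N\rho^R+N\alpha$. This step uses no independence between steps; it only uses the conditional per-step bound at correct prefixes.

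\textbf{Solving for $R$.} It suffices that $N\rho^R\le \delta-N\alpha$, which requires $N\alpha<\delta$ so that the right side is positive. Since $0<\rho<1$, taking logarithms and dividing by $\log(1/\rho)>0$ gives $R\ge \log(N/(\delta-N\alpha))/\log(1/\rho)$, and any integer $R$ at or above the ceiling works. For the asymptotics, take $\alpha\le c\delta/N$. Then $\delta-N\alpha\ge(1-c)\delta$, so the bound is at most $\lceil\log(N/((1-c)\delta))/\log(1/\rho)\rceil=O(\log(N/\delta))$ for fixed $\rho$ and $c$. A matching lower order holds because $N\alpha\ge 0$ makes the argument of the logarithm at least $N/\delta>1$, so the expression is $\Omega(\log(N/\delta))$. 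This gives $\Theta(\log(N/\delta))$. If $\rho=0$, the exhaustion term $\rho^R$ vanishes already at $R=1$.

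\textbf{Main obstacle.} The delicate step is justifying the first-failure decomposition. Without it, one would need the $\alpha,\rho$ bounds at incorrect states, which the hypotheses do not provide. The argument also needs $\alpha_t(r;s)$ bounded for every budget $r$. By Eq.~\eqref{eq:fixed_check_contamination} this quantity is independent of $r$, so the budget-uniform assumption is automatically consistent.
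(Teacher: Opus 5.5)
Your proposal is correct and follows the paper's proof essentially step for step: a union bound over the disjoint first-failure events $H_{t-1}\cap F_t$, the per-step bound $\rho^R+\alpha$ obtained by conditioning on a correct entry prefix, and then solving $N(\rho^R+\alpha)\le\delta$ for $R$. You also justify details the paper only asserts: the $\Omega(\log(N/\delta))$ direction, via $N/(\delta-N\alpha)\ge N/\delta$, and the $\rho=0$ case. Your $O$ direction holds in the asymptotic regime where $N/\delta$ stays bounded away from $1$, which the paper also implicitly assumes.
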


\begin{proof}
\textbf{Step 1: Union bound.}
Let $H_{i-1}$ be the event that the first $i{-}1$ steps complete with correct accepted states, and let $E_i:=H_{i-1}\cap\{\text{step $i$ fails after its $R$-attempt loop}\}$ be the first-failure event at step $i$. These disjoint first-failure events exhaust trajectory failure, so the union bound gives
\begin{equation}
P(\text{global fail}) = P\!\left(\bigcup_{i=1}^{N} E_i\right) \;\le\; \sum_{i=1}^{N} P(E_i).
\end{equation}
Conditioned on $H_{i-1}$, step $i$ starts from a correct accepted prefix. By Eq.~\eqref{eq:app_atomic_failure}, $P(E_i)\leq P(\text{step $i$ fails}\mid H_{i-1})\leq\rho^R+\alpha$, giving:
\begin{equation}
P(\text{global fail}) \;\leq\; N\bigl(\rho^R+\alpha\bigr).
\label{eq:app_global_failure_contamination}
\end{equation}

\textbf{Step 2: Per-Step Target.}
For $N\alpha<\delta$, the per-step requirement is
\begin{equation}
N(\rho^R+\alpha)\leq\delta
\quad\Longleftrightarrow\quad
\rho^R\leq\frac{\delta-N\alpha}{N}.
\end{equation}

\textbf{Step 3: Solve for $R$.}
Taking logarithms (note $\log\rho<0$, so the inequality flips):
\begin{align}
R\log\rho &\leq \log\!\left(\frac{\delta-N\alpha}{N}\right) \\[4pt]
R &\geq \frac{\log\!\bigl(N/(\delta-N\alpha)\bigr)}{\log(1/\rho)}.
\end{align}
Because the attempt budget is an integer, taking the ceiling gives a sufficient choice:
\begin{equation}
R \;\geq\;
\left\lceil
\frac{\log\!\bigl(N/(\delta-N\alpha)\bigr)}{\log(1/\rho)}
\right\rceil.
\end{equation}
If, for example, $N\alpha\leq c\delta$ for a fixed $c<1$, then $R=\Theta(\log(N/\delta))$. More attempts alone cannot establish this guarantee when the bound on accepted-state errors already reaches $\delta$.
\end{proof}

\subsection{Accepted-State Error under a Fixed Check}
\label{app:fixed_check}

Fix a correct input state $s$ and a step $t$.
Suppose the same candidate-generation law and the same single-candidate checking rule are used independently on every attempt, irrespective of the attempt cap $R$.
Let $\rho=P(\cV_t=0\mid s)<1$ and $w=P(\cG_t=0,\cV_t=1\mid s)$.
Returning an incorrect state on attempt $r$ requires rejection on the first $r-1$ attempts and an incorrect acceptance on attempt $r$.
Summing over the possible first-acceptance attempts gives the joint and acceptance probabilities
\begin{align}
P(\text{incorrect returned state},A_t^{(R)}\mid s)
&=\sum_{r=1}^{R}\rho^{r-1}w
=\frac{w(1-\rho^R)}{1-\rho},\\
P(A_t^{(R)}\mid s)&=1-\rho^R.
\end{align}
Dividing by the acceptance probability yields the conditional error and the total local failure:
\begin{equation}
\alpha_t(R;s)=\frac{w}{1-\rho},\qquad
\beta_t(R;s)=\rho^R+(1-\rho^R)\frac{w}{1-\rho}.
\end{equation}
The conditional error among accepted states is independent of $R$ in this fixed-check setting, and the local failure probability approaches that error as $R$ grows.
This calculation includes false rejections through $\rho$; $w/(1-\rho)$ is the error conditional on acceptance, not the probability of acceptance conditional on an incorrect candidate.
Changing the voting pool or acceptance threshold changes the checking rule and can therefore change the error among accepted states.\looseness=-1 

A batch selector could be treated as one local attempt only after its returned-state law, rejection event, and computation cost are specified.
Independent batches would then need the corresponding conditional bounds at every correct prefix.
The empirical protocol's multiple retained states and fallback continuations are not obtained merely by substituting $M$ for $R$ in the single-state theorem.

\subsection{Product Lower Bound Under State Reuse}
\label{app:compositionality_restoration}

The next proposition composes stage reliability guarantees and adds their computation budgets.

\begin{proposition}[Product Lower Bound for State Reuse]
\label{prop:restoration}
For subproblems $x_1,x_2$ with $N_1,N_2$ steps, assume the loop of~\defref{def:retry_loop}. At stage $i$, let $\rho_{i,t}(s)\leq\rho_i$ with $0<\rho_i<1$ and $\alpha_{i,t}(r;s)\leq\alpha_i$ for every step, correct accepted input state $s$, and budget $r$. For stage 2, these bounds must hold for every correct state passed from stage 1.
Let $\delta_1, \delta_2 > 0$ be \emph{target failure probabilities} with $\delta_1 + \delta_2 = \delta$.
Assume each stage reaches a well-defined final state after its prescribed $N_i$ accepted steps. Let $S_i$ denote completion of stage $i$ with every accepted state correct. Completing both stages correctly must yield the target under answer extraction:
\begin{equation*}
S_1\cap S_2\subseteq\{y=a^\star(x_1\oplus x_2)\}.
\end{equation*}
When $N_i\alpha_i<\delta_i$, allocate the following sufficient attempt budget to each step in stage $i$:
\begin{equation}
R_i\;\geq\;
\left\lceil
\frac{\log\!\bigl(N_i/(\delta_i-N_i\alpha_{i})\bigr)}{\log(1/\rho_i)}
\right\rceil,
\label{eq:app_stage_retry_count}
\end{equation}
By \propref{prop:logarithmic_retry}, stage $i$ then fails to complete with all accepted states correct with probability at most $\delta_i$.
Under these assumptions, state reuse satisfies:
\begin{equation}
A_{\mathrm{vsr}}(x_1 \oplus x_2)
\;\geq\;(1-\delta_1)(1-\delta_2)
\;=\;1-\delta+\delta_1\delta_2
\;\geq\;1-\delta.
\label{eq:app_product_bound_statement}
\end{equation}
This exact lower bound uses target stage guarantees rather than standalone accuracies.
\end{proposition}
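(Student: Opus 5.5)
The plan is to reduce \propref{prop:restoration} to two applications of \propref{prop:logarithmic_retry}, one per stage, and then combine the two stage guarantees by the chain rule. Conditioning on stage 1 avoids any independence assumption between the stages.

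\textbf{Step 1 (stage 1).} Start from $s_0^{\checkmark}=x_1$ (or the composite prompt restricted to stage 1). The hypotheses $\rho_{1,t}(s)\leq\rho_1<1$, $\alpha_{1,t}(r;s)\leq\alpha_1$ and $N_1\alpha_1<\delta_1$ match those of \propref{prop:logarithmic_retry} with $(N,\rho,\alpha,\delta)=(N_1,\rho_1,\alpha_1,\delta_1)$. The cap in Eq.~\eqref{eq:app_stage_retry_count} meets that proposition's threshold, so $P(S_1^c)\leq\delta_1$, that is, $P(S_1)\geq1-\delta_1$. This is the union bound over disjoint first-failure events, each bounded by $\rho_1^{R_1}+\alpha_1$.

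\textbf{Step 2 (stage 2 conditioned on stage 1).} Condition on the history $\mathcal F$ at the end of stage 1, restricted to $S_1$. On $S_1$ the state handed to stage 2 is a correct accepted state. By assumption, the stage-2 bounds $\rho_2,\alpha_2$ hold at every correct state passed from stage 1 and at every later correct accepted state. Attempts are i.i.d.\ conditional on the entry history, so the argument of \propref{prop:logarithmic_retry} can be rerun under the conditional law $P(\cdot\mid\mathcal F)$. This gives $P(S_2^c\mid\mathcal F)\leq N_2(\rho_2^{R_2}+\alpha_2)\leq\delta_2$ almost surely on $S_1$. Integrating over $S_1$ yields $P(S_1\cap S_2)\geq(1-\delta_2)P(S_1)\geq(1-\delta_1)(1-\delta_2)$.

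\textbf{Step 3 (extraction and algebra).} By the extraction assumption $S_1\cap S_2\subseteq\{y=a^\star(x_1\oplus x_2)\}$, so $A_{\mathrm{vsr}}(x_1\oplus x_2)\geq P(S_1\cap S_2)$. Expanding gives $(1-\delta_1)(1-\delta_2)=1-\delta+\delta_1\delta_2\geq1-\delta$, using $\delta_1\delta_2\geq0$.

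The main obstacle is Step 2. \propref{prop:logarithmic_retry} is stated for an unconditional process started at a fixed problem, whereas stage 2 starts from a random state produced by stage 1. I would handle this by noting that the proof of \propref{prop:logarithmic_retry} uses only the per-step bound $P(\text{step }i\text{ fails}\mid H_{i-1})\leq\rho^R+\alpha$. Under the conditional law this bound still holds at every correct prefix, because the hypotheses are uniform over correct input states. The union bound therefore carries over verbatim for each fixed realization of the stage-1 output. A secondary check is that "stage 2 starts" is well defined only on $S_1$. I would define $S_2$ as the event that stage 2 completes correctly after a correct stage-1 completion, so that $S_1\cap S_2$ is precisely the event of correct completion of both stages.
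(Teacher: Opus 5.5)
Your proposal is correct and follows the paper's proof essentially step for step. Stage 1 is a union bound over first-failure locations with per-step failure at most $\rho_1^{R_1}+\alpha_1\leq\delta_1/N_1$. Stage 2 applies the same bound conditionally from every correct stage-1 output and then integrates over $S_1$; you condition on the full stage-1 history rather than only the output state $s_{N_1}^{\checkmark}$, which is slightly more careful but amounts to the same argument. The proof then concludes with the chain rule, the extraction inclusion, and the expansion $(1-\delta_1)(1-\delta_2)=1-\delta+\delta_1\delta_2\geq1-\delta$.
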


\paragraph{Computation budget.}
Set $\delta_i=\delta N_i/(N_1+N_2)$ and use the common error bound $\alpha_i=\alpha$. Both subproblems then have the same per-step target failure probability:
\begin{equation}
\frac{\delta_1}{N_1} = \frac{\delta_2}{N_2} = \frac{\delta}{N_1+N_2}.
\label{eq:app_uniform_perstep}
\end{equation}
If $N\alpha<\delta$ for $N=N_1+N_2$ and $\rho_i\leq\rho<1$, choose the common attempt count
\begin{equation}
R^*=\left\lceil\frac{\log\!\bigl(N/(\delta-N\alpha)\bigr)}{\log(1/\rho)}\right\rceil
\end{equation}
This choice satisfies both stage constraints. Let $c_{\text{gen}}$ and $c_{\text{ver}}$ uniformly bound generation and checking costs per attempt in a common unit, for all reached states. The per-step computation budget is $\bar c_{\text{step}}(R^*)=R^*(c_{\text{gen}}+c_{\text{ver}})$. It is an upper bound on actual cost and sums across stages:
\begin{align}
\overline C_{\mathrm{vsr}}(x_1) + \overline C_{\mathrm{vsr}}(x_2) &= N_1\,\bar c_{\text{step}}(R^*) + N_2\,\bar c_{\text{step}}(R^*) \nonumber\\
&= (N_1+N_2)\,\bar c_{\text{step}}(R^*) \;=\; \overline C_{\mathrm{vsr}}(x_1\oplus x_2).
\label{eq:app_cost_additivity}
\end{align}

\begin{proof}[Proof of~\propref{prop:restoration}]

\medskip\noindent\textbf{Step 1: Reliability of $x_1$.}
For $R_1$ from Eq.~\eqref{eq:app_stage_retry_count}, every correct accepted prefix $s$ has local failure bounded by the allocated per-step target for the first stage:
\begin{equation}
\beta_{1,t}(R_1;s)\;\leq\;\rho_1^{R_1}+\alpha_{1}
\;\leq\;\frac{\delta_1}{N_1}.
\label{eq:app_x1_perstep}
\end{equation}
A union bound over the $N_1$ possible first-failure locations gives the first-stage guarantee:
\begin{equation}
P(S_1^c)
\;\leq\;N_1(\rho_1^{R_1}+\alpha_{1})
\;\leq\;\delta_1,
\qquad\text{hence}\quad P(S_1)\;\geq\;1-\delta_1.
\label{eq:app_x1_success}
\end{equation}

\medskip\noindent\textbf{Step 2: Reliability of $x_2$ from correct accepted $x_1$ states.}
With $R_2$ satisfying Eq.~\eqref{eq:app_stage_retry_count}, each attempt is drawn from the model given its fixed accepted input state. The bounds $\rho_2$ and $\alpha_2$ must hold for every correct accepted state passed from $x_1$; acceptance alone does not establish this requirement. Conditioning on an arbitrary such state $s_{N_1}^{\checkmark}$ yields the same second-stage bound:
\begin{equation}
P\!\left(S_2^c\;\middle|\;s_{N_1}^{\checkmark},\,S_1\right)
\;\leq\;N_2(\rho_2^{R_2}+\alpha_{2})
\;\leq\;\delta_2.
\label{eq:app_x2_conditional}
\end{equation}
Marginalizing over correct accepted states (i.e., conditioning on $S_1$):
\begin{equation}
P(S_2^c \mid S_1) \;=\;
\mathbb{E}\!\left[P\!\left(S_2^c\mid s_{N_1}^{\checkmark},S_1\right)\;\middle|\;S_1\right]
\;\le\; \delta_2,
\label{eq:app_x2_success}
\end{equation}
Thus the conditional second-stage reliability satisfies the allocated target, $P(S_2\mid S_1)\geq1-\delta_2$.

\medskip\noindent\textbf{Step 3: Composite accuracy via chain rule.}
By assumption, $S_1\cap S_2$ implies that the prescribed extraction returns a correct composite answer. The probability chain rule gives:
\begin{equation}
A_{\mathrm{vsr}}(x_1\oplus x_2) \;\geq\; P(S_1\cap S_2)
= P(S_1)\cdot P(S_2\mid S_1)
\;\ge\; (1-\delta_1)(1-\delta_2).
\label{eq:app_composite_accuracy}
\end{equation}

\medskip\noindent\textbf{Step 4: Target level product bound.}
Expanding gives $(1-\delta_1)(1-\delta_2)=1-\delta+\delta_1\delta_2\geq1-\delta$.
No small-$\delta_i$ approximation is needed for this lower bound because the cross term is nonnegative.
Equation~\eqref{eq:app_composite_accuracy} is an exact lower bound; equality with actual standalone accuracies is not implied.
\end{proof}

\paragraph{Comparison with independent sampling.}
The two propositions compare different quantities. Under \propref{prop:breakdown_product}, independent sampling yields composite accuracy below the product of standalone accuracies, with a ratio of approximately $1/K$ when $Ka,Kb\lll1$. Under \propref{prop:restoration}, reuse from accepted states yields a product lower bound in stage reliability guarantees; it concerns target guarantees rather than an equality of actual standalone accuracies.\looseness=-1 

\subsection{Multi-Stage Generalization}
\label{app:multistage_complete}

We extend \propref{prop:restoration} by induction to arbitrary multi-stage workflows with $J$ subproblems $x = x_1 \oplus \cdots \oplus x_J$, each requiring $N_m$ steps, for a total of $N = \sum_{m=1}^J N_m$.

\paragraph{Step 1: Proportional budget allocation.}
Allocate failure budgets proportionally to step counts:
\begin{equation}
\delta_m \;:=\; \delta\,\frac{N_m}{N}, \qquad \sum_{m=1}^J \delta_m = \delta.
\label{eq:app_unified_budget_allocation}
\end{equation}
This yields a uniform per-step target across all subproblems:
\begin{equation}
\frac{\delta_m}{N_m} = \frac{\delta}{N} \quad \text{for all } m \in [J].
\label{eq:app_perstep_uniformity}
\end{equation}

\paragraph{Step 2: Uniform attempt count.}
Assume $\rho_t(s)\leq\rho$ with $0<\rho<1$ and $\alpha_t(r;s)\leq\alpha$ for all stages, relevant steps, correct accepted input states $s$, and attempt budgets $r$, with $N\alpha<\delta$. \propref{prop:logarithmic_retry} gives a common sufficient retry budget by imposing the constraint $\rho^R+\alpha\leq\delta/N$ at each step:
\begin{equation}
R\;\geq\;
\left\lceil
\frac{\log\!\bigl(N/(\delta-N\alpha)\bigr)}{\log(1/\rho)}
\right\rceil
\;=\;\Theta\!\left(\log\tfrac{N}{\delta}\right)
\label{eq:app_unified_retry}
\end{equation}
when $N\alpha\leq c\delta$ for a fixed $0\leq c<1$.
Since all stages receive the same error allowance per step, this common attempt budget meets the reliability requirement for every stage length $N_m$.

\paragraph{Step 3: Summing the computation budget.}
Let $R^*$ be the smallest integer budget in Eq.~\eqref{eq:app_unified_retry}. Each attempt costs at most $c_{\text{gen}}+c_{\text{ver}}$, so the per-step computation budget is
\begin{equation}
\bar c_{\text{step}}(R^*) \;=\; R^*\,(c_{\text{gen}}+c_{\text{ver}}).
\label{eq:app_unified_step_cost}
\end{equation}
Since $R^*$ is the same for all subproblems, the computation budget sums across stages. Actual cost is no larger because each loop stops at first acceptance:
\begin{align}
\overline C_{\mathrm{vsr}}(x) &= \sum_{m=1}^J \overline C_{\mathrm{vsr}}(x_m)
= \sum_{m=1}^J N_m\cdot \bar c_{\text{step}}(R^*) \nonumber\\
&= \Bigl(\sum_{m=1}^J N_m\Bigr)\,\bar c_{\text{step}}(R^*)
= N\cdot \bar c_{\text{step}}(R^*),
\label{eq:app_compute_compo_multistage}
\end{align}
The additive budget upper-bounds the expected total cost of reuse.

\paragraph{Step 4: Success probability across stages.}
Assume each stage reaches a well-defined final state after its prescribed $N_m$ accepted steps. Let $S_m$ denote stage-$m$ completion with every accepted state correct. Assume that completing all stages correctly yields a correct final answer under the prescribed extraction. The probability chain rule gives:
\begin{equation}
P\!\left(\bigcap_{m=1}^J S_m\right)
= P(S_1)\prod_{m=2}^J P\!\left(S_m \;\middle|\; \bigcap_{j=1}^{m-1} S_j\right).
\label{eq:app_accuracy_chain}
\end{equation}
As in Step 2 of the proof of \propref{prop:restoration}, completion of all earlier stages with correct states gives stage $m$ a correct accepted input state. The uniform bounds give per-step failure probability at most $\rho^R+\alpha$. Applying \propref{prop:logarithmic_retry} with target failure probability $\delta_m$ and $N_m$ steps yields:
\begin{equation}
P\!\left(S_m \;\middle|\; \bigcap_{j=1}^{m-1} S_j\right)
\;\geq\; 1 - \delta_m, \quad \text{for all } m.
\label{eq:app_conditional_reliability}
\end{equation}
Substituting these conditional stage guarantees into Eq.~\eqref{eq:app_accuracy_chain} gives
\begin{equation}
A_{\mathrm{vsr}}(x) \;\geq\; P\!\left(\bigcap_{m=1}^J S_m\right)
\;\geq\; \prod_{m=1}^J (1-\delta_m),
\label{eq:app_accuracy_product}
\end{equation}

\paragraph{Step 5: Product bounds.}

The product of stage guarantees obeys the following exact inequality:
\[
\prod_{m=1}^J(1-\delta_m)\geq1-\sum_m\delta_m.
\]
For $J=1$ there is equality. Suppose the inequality holds for $J-1$ factors; the induction step gives
\begin{align*}
\prod_{m=1}^J(1-\delta_m) &= \prod_{m=1}^{J-1}(1-\delta_m)\cdot(1-\delta_J) \geq \Bigl(1-\sum_{m=1}^{J-1}\delta_m\Bigr)(1-\delta_J) \\
&= 1 - \sum_{m=1}^J \delta_m + \delta_J\sum_{m=1}^{J-1}\delta_m \;\geq\; 1 - \sum_{m=1}^J \delta_m,
\end{align*}
where the last step drops the nonnegative cross term. Therefore:
\begin{equation}
\prod_{m=1}^J (1-\delta_m)
\;\geq\; 1 - \sum_{m=1}^J \delta_m = 1 - \delta.
\label{eq:app_union_bound}
\end{equation}
The exact product inequality therefore gives the desired guarantee $A_{\mathrm{vsr}}(x)\geq1-\delta$.

\emph{Exponential lower bound:} We use $1-x\geq e^{-x/(1-x)}$ for $x\in[0,1)$, which follows from $\log(1-x)\geq-x/(1-x)$. To verify it, let $f(x)=\log(1-x)+x/(1-x)$; then $f(0)=0$ and $f'(x)=x/(1-x)^2\geq0$.
For $\delta_m = \delta N_m/N \lll 1$, we have $1/(1-\delta_m) = 1+\delta_m+O(\delta_m^2) \approx 1$, so $\delta_m/(1-\delta_m) \approx \delta_m + \delta_m^2$. Substitution gives the following exponential comparison for the product of stage guarantees:
\begin{equation}
\prod_{m=1}^J (1-\delta_m)
\;\geq\; \exp\!\left(-\sum_{m=1}^J \frac{\delta_m}{1-\delta_m}\right)
\;\approx\; \exp\!\left(-\delta - \sum_{m=1}^J \delta_m^2\right).
\label{eq:app_exponential_bound}
\end{equation}
For $\delta_m=O(\delta/J)$, the second-order term is $\sum_m\delta_m^2=O(\delta^2/J)$. As the largest stage budget vanishes, we obtain the exponential bound below; the linear approximation also requires $\delta\to0$:
\begin{equation}
A_{\mathrm{vsr}}(x) \;\geq\; \exp\!\bigl(-\delta - o(\delta)\bigr) \;\approx\; 1-\delta.
\label{eq:app_final_accuracy_bound}
\end{equation}

Together with Step 3, this establishes $A_{\mathrm{vsr}}(x)\geq\prod_m(1-\delta_m)\geq1-\delta$ and an additive computation budget under proportional failure allocation, provided $N\alpha<\delta$.

\subsection{Complete-Sampling Background}
\label{app:sampling_background}
\subsubsection{Accuracy Compositionality under Independent Sampling}
\label{sec:passatk_fails}

At budget $K$, \emph{accuracy compositionality} means that composite accuracy equals $A_{\mtheta,K}(x_1)A_{\mtheta,K}(x_2)$, the probability of solving both subproblems in separate, independent $K$-attempt experiments.
In this established term, accuracy refers to Pass@$K$ coverage.
For the following comparison, write $a_i=A_{\mtheta,1}(x_i)\in(0,1)$ for $i\in\{1,2\}$.
Assume that each subproblem has the same probability of being answered correctly inside a composite run as in its standalone run, and that the two subproblem correctness indicators are independent within each composite run.
Together with independence across the $K$ runs, these assumptions give:
\begin{equation}
\begin{aligned}
A_{\mtheta,1}(x_1\oplus x_2)&=a_1a_2,\\
A_{\mtheta,K}(x_1\oplus x_2)&=1-(1-a_1a_2)^K<A_{\mtheta,K}(x_1)A_{\mtheta,K}(x_2),\quad K\geq2.
\end{aligned}
\label{eq:passk_compositionality_breakdown}
\end{equation}
For $K=1$, the corresponding product relation holds with equality.
The strict gap for $K\geq2$ arises because each composite success requires one run to solve both subproblems, whereas separately covered subproblems may be solved by different runs.
For example, when both subproblems have single-run accuracy $0.5$ and $K=2$, complete-solution success is $0.4375$, compared with $0.5625$ for separate sampling.
This requirement that one run solve both subproblems is the \emph{single-run requirement} under independent sampling.
The event inclusion behind it holds without the product assumptions: a pool containing a jointly correct run also covers each subproblem separately, while the converse need not hold (\propref{prop:breakdown}).
The comparison fixes attempts per subproblem; it does not equate measured token costs across prompt formats.
This bottleneck is a property of sampling and aggregation, and would also occur for modular stochastic solvers with independent errors.

\subsubsection{Sampling Cost as Task Length Grows}
\label{sec:brittle_chain}

To compare computational cost as task length grows at a fixed target success rate, we consider decompositions indexed by $N$ and adopt the idealized scaling assumptions of~\citet{zhang2025reasoning}.
For a fixed model and decoding configuration, assume:
\begin{equation}
C_{\mtheta,1}(x)=\Theta(N),\qquad A_{\mtheta,1}(x)=e^{-\lambda_{\mtheta}N},\qquad \lambda_{\mtheta}>0.
\label{eq:ideal_law}
\end{equation}
$\lambda_{\mtheta}$ is the per-step accuracy decay rate, and linear cost means expected generated tokens scale linearly with $N$ up to constant factors.
For example, the exponential accuracy relation holds when steps are correct independently with probability $1-\varepsilon$, where $0<\varepsilon<1$, and the final answer is correct if and only if all $N$ steps are correct~\citep{barlow1996mathematical}.
In this case:
\begin{equation}
A_{\mtheta,1}(x)=(1-\varepsilon)^N=e^{-\lambda_{\mtheta}N},\qquad\lambda_{\mtheta}=-\log(1-\varepsilon).
\label{eq:single_chain_success}
\end{equation}
For single-run accuracy $p=A_{\mtheta,1}(x)\in(0,1)$, target reliability $1-\delta$ requires at least
\begin{equation}
K_\delta(N)=\left\lceil\frac{\log\delta}{\log(1-p)}\right\rceil.
\label{eq:prelim_target_budget}
\end{equation}
At fixed $\delta$, the exponential accuracy law gives the asymptotic total generation cost
\begin{equation}
C_{\mathrm{indep}}(N,\delta)=K_\delta(N)\,C_{\mtheta,1}(x)=\Theta\!\left(N e^{\lambda_{\mtheta}N}\log(1/\delta)\right).
\label{eq:passk_exponential_cost}
\end{equation}
This restart cost motivates the following analysis of reliability and cost under state reuse.

\subsection{Sufficient Budgets and Stage Composition}
\label{app:prior_budgets}
The following bounds give sufficient retry budgets and stage-composition guarantees.
They follow from the local failure decomposition and remain valid under the execution assumptions of Section~\ref{sec:connected_solution}.
The expected-work theorem sharpens the bound based on the reserved budget while retaining these guarantees.
Summing the first-failure probabilities gives the trajectory bound:
\begin{equation}
P(\text{trajectory failure})\leq N\bigl(\rho^R+\alpha\bigr).
\label{eq:vsr_global_failure}
\end{equation}
For $0<\rho<1$ and $N\alpha<\delta$, a sufficient common cap is:
\begin{equation}
R^*=\left\lceil\frac{\log\!\bigl(N/(\delta-N\alpha)\bigr)}{\log(1/\rho)}\right\rceil.
\label{eq:vsr_attempt_budget}
\end{equation}
For stages of $N_1,N_2$ steps, allocate $\delta_i=\delta N_i/(N_1+N_2)$. Let $S_i$ denote correct completion of stage $i$. If every correct stage-1 output supports the stage-2 guarantee and $S_1\cap S_2$ yields the target answer, then:
\begin{equation}
A_{\mathrm{vsr}}(x_1\oplus x_2)\geq P(S_1\cap S_2)=P(S_1)P(S_2\mid S_1)\geq(1-\delta_1)(1-\delta_2)\geq1-\delta.
\label{eq:accuracy_compositionality}
\end{equation}
Let $c_{\mathrm{gen}},c_{\mathrm{ver}}$ bound per-attempt generation and checking cost at all reached states, with other processing included in these units. The expected total cost is bounded by the deterministic budget:
\begin{equation}
C_{\mathrm{vsr}}(N,\delta)\leq\overline C_{\mathrm{vsr}}(N,\delta):=NR^*(c_{\mathrm{gen}}+c_{\mathrm{ver}}).
\label{eq:vsr_cost}
\end{equation}
For fixed $\rho\in(0,1)$, $N\alpha\leq c\delta$ with fixed $c<1$, and bounded per-attempt costs, the sufficient scaling remains:
\begin{equation}
R^*=\Theta\!\bigl(\log(N/\delta)\bigr),\qquad C_{\mathrm{vsr}}=O\!\bigl(N\log(N/\delta)\bigr).
\label{eq:vsr_scaling}
\end{equation}
At a fixed correct input state $s$, let $T_t$ count local attempts. The truncated geometric identity gives:
\begin{equation}
\mathbb E[T_t\mid s]=\sum_{r=1}^{R}\rho_t(s)^{r-1}
=\frac{1-\rho_t(s)^R}{1-\rho_t(s)}\leq\frac{1}{1-\rho}.
\label{eq:expected_attempts}
\end{equation}
The product factors are conditional stage guarantees, rather than standalone accuracies under different prompts. If per-attempt cost grows, its dependence on task length must remain in the cost bound.

\subsection{Proofs for Reliability and Expected Work}
\label{app:new_theory}

\subsubsection{Exact Reliability and Minimal Common Cap}
\label{app:new_reliability}

Let $\mathcal F_t$ contain the complete history at entry to step $t$, including the event that the step is reached.
At a fixed correct entry history, put $q=\rho_t(s)$ and $w=\Pr(\cG_t=0,\cV_t=1\mid\mathcal F_t)$.
The probability of first accepting an incorrect state on attempt $r$ is $q^{r-1}w$.
Summing over $r\leq R$ and dividing by $1-q^R$ gives $\alpha_t(R;s)=w/(1-q)$ whenever $q<1$.
This establishes Eq.~\eqref{eq:fixed_check_contamination} for every finite cap under the fixed retry law.
The probability of correct advancement is exactly $(1-q^R)(1-\alpha_t(s))$.

On $H_{t-1}$, all entry histories are correct, so their conditional average has correct-advancement probability at least $(1-\rho^R)(1-\alpha)$.
Multiplying these conditional lower bounds proves \thmref{thm:reliability}.
No independence across entry states is needed.
The homogeneous construction in that theorem realizes equality and, by making errors irreversible, equates final-answer and trajectory success.
Consequently the lower bound is also the infimum of final-answer success over the specified model class.

Write $d=(1-\delta)^{1/N}$.
For $0<\rho<1$, a common cap certifies the target over the entire class exactly when
\begin{equation}
(1-\rho^R)(1-\alpha)\geq d.
\end{equation}
For finite $R$, $1-\rho^R<1$, so this is possible if and only if $\alpha<1-d$.
In that case its minimal integer solution is
\begin{equation}
R_{\min}=\max\left\{1,\left\lceil
\frac{\log\bigl(1-d/(1-\alpha)\bigr)}{\log\rho}
\right\rceil\right\}.
\label{eq:exact_retry_budget}
\end{equation}
Both logarithms are negative; division by $\log\rho$ reverses the inequality used to solve for $R$.
If $\alpha=1-d$, the target is attained only in the limit of unbounded retries.
If $\alpha>1-d$, even that limit cannot give a uniform guarantee.
When $\rho=0$, the exact condition is $(1-\alpha)^N\geq1-\delta$, and $R=1$ suffices.
At $\alpha=1$, no positive reliability guarantee follows.\looseness=-1 

A convenient sufficient choice comes from
\begin{equation}
1-\big[(1-\rho^R)(1-\alpha)\big]^N
\leq N\big[\rho^R+(1-\rho^R)\alpha\big]
\leq N(\rho^R+\alpha).
\label{eq:new_union_recovery}
\end{equation}
The first inequality is $1-(1-u)^N\leq Nu$ for $u\in[0,1]$.
Solving $\rho^R\leq(\delta-N\alpha)/N$ gives Eq.~\eqref{eq:vsr_attempt_budget} whenever $N\alpha<\delta$.
As $N\to\infty$ at fixed $\delta$,
$1-(1-\delta)^{1/N}=-\log(1-\delta)/N+O(N^{-2})$.
Replacing the numerator by $\delta$ additionally requires a small-$\delta$ approximation.

\subsubsection{Heterogeneous Steps and Multiple Stages}

Suppose step $t$ has predetermined cap $R_t$ and bounds $\rho_t<1$ and $\alpha_t<1$ valid at every correct entry history.
Conditioning on $H_{t-1}$ as above gives
\begin{equation}
\Pr(H_N)\geq\prod_{t=1}^N(1-\rho_t^{R_t})(1-\alpha_t).
\label{eq:heterogeneous_reliability}
\end{equation}
A bound averaged over standalone inputs need not hold for states produced by earlier stages.

Partition the steps into stages of sizes $N_1,\ldots,N_m$.
If stage $j$ completes correctly with conditional probability at least $1-\delta_j$ from every correct entry state, then the probability of correct completion of all stages is at least
\begin{equation}
\prod_{j=1}^m(1-\delta_j)\geq1-\sum_{j=1}^m\delta_j.
\label{eq:new_stage_product}
\end{equation}
For the common bounds of the main text, allocating $\delta_j=\delta N_j/N$ and using Eq.~\eqref{eq:vsr_attempt_budget} supplies such guarantees.
These factors need not equal standalone success probabilities.
The second stage can meet its target with or without a correct preceding state, despite different success rates.

\subsubsection{Expected Attempts: Full Conditional Proof}
\label{app:work_proof}
Let $I_t$ indicate that step $t$ is reached, and define its conditional single-attempt rejection probability $\rho_t$ given $\mathcal F_t$ when $I_t=1$.
This abbreviation records rejection conditional on the entry history, at both correct and incorrect states.
Assign $\rho_t=0$ on unreachable histories, which do not contribute to the sums below.
Let $T_t=0$ when $I_t=0$ and otherwise count the attempts at this step.
Conditional i.i.d.\ retries imply
\begin{equation}
\mathbb E[T_t\mid\mathcal F_t]=I_t\sum_{r=0}^{R-1}\rho_t^r.
\end{equation}
This identity includes $\rho_t=1$, for which the sum is $R$.
For any $\theta\in(0,1)$, the low-rejection contribution satisfies
\begin{equation}
\sum_{t=1}^N\mathbb E[T_t\mathbf1\{\rho_t\leq\theta\}]
\leq\frac{1}{1-\theta}\sum_{t=1}^N\Pr(I_t=1,\rho_t\leq\theta)
\leq\frac{N}{1-\theta}.
\end{equation}
Let $F_t$ denote exhaustion at step $t$ and $Z=\sum_t I_t\mathbf1\{\rho_t>\theta\}$.
Exhaustion has conditional probability $\rho_t^R$ at a reached step, hence
\begin{equation}
\theta^R\mathbb E[Z]
\leq\sum_{t=1}^N\mathbb E[I_t\mathbf1\{\rho_t>\theta\}\rho_t^R]
=\sum_{t=1}^N\Pr(F_t,\rho_t>\theta)\leq1.
\end{equation}
The last inequality uses mutual exclusivity of exhaustion events under global stopping.
Since a high-rejection visit costs at most $R$, its total expected contribution is at most $R\theta^{-R}$.
Adding the two contributions gives
\begin{equation}
\mathbb E[W]\leq\frac{N}{1-\theta}+\frac{R}{\theta^R}.
\label{eq:threshold_work}
\end{equation}
This estimate requires no bound on $\rho_t$, including at states reached after an incorrect acceptance.

For the uniform upper bound, put $\ell_N=\log(N+1)$ and $\theta=e^{-\ell_N/(2R)}$.
The inequality $e^u\geq1+u$ yields $(1-e^{-u})^{-1}\leq1+1/u$, so
\begin{equation}
\mathbb E[W]\leq N+\frac{2NR}{\ell_N}+R\sqrt{N+1}
\leq N+\frac{3NR}{\ell_N}.
\end{equation}
To verify the last step for every $N\geq1$, set $x=\sqrt{N+1}\geq1$.
The function $x-x^{-1}-2\log x$ is zero at one and has derivative $(x-1)^2/x^2\geq0$.
Thus $\log(N+1)\sqrt{N+1}\leq N$.
Independently of this expectation bound, the deterministic cap gives $\mathbb E[W]\leq NR$.

For the lower bound, fix $q=(N+1)^{-1/R}$ at every reached state and always advance on acceptance.
Let $v=1-q^R=N/(N+1)$.
The probability of reaching step $t$ is $v^{t-1}$, and its conditional expected attempts equal $v/(1-q)$.
Therefore
\begin{equation}
\mathbb E[W]=\frac{v}{1-q}\sum_{t=1}^Nv^{t-1}
=\frac{N(1-v^N)}{1-e^{-\ell_N/R}}.
\end{equation}
The binomial inequality $(1+1/N)^N\geq2$ gives $1-v^N\geq1/2$.
Also $1-e^{-u}\leq\min\{1,u\}$, so
\begin{equation}
\mathbb E[W]\geq\frac{N}{2}\max\left\{1,\frac{R}{\ell_N}\right\}
\geq\frac14\left(N+\frac{NR}{\ell_N}\right).
\end{equation}
Combining the upper and lower estimates yields the finite bounds
\begin{equation}
\frac14\left(N+\frac{NR}{\log(N+1)}\right)
\leq\sup_{P\in\mathcal P_{N,R}}\mathbb E_P[W]
\leq N+\frac{3NR}{\log(N+1)}.
\end{equation}
This proves the matching order for the entire class $\mathcal P_{N,R}$.
It is a worst-case expectation over the class, not a lower bound on every individual process.\looseness=-1 

\subsubsection{Combining Reliability and Work}
\label{app:reliable_cost}
For fixed $\rho\in(0,1)$ and $\alpha\leq c\delta/N$ with $c<1$, the cap in Eq.~\eqref{eq:vsr_attempt_budget} is at most
$1+\log(N/((1-c)\delta))/\log(1/\rho)$.
At fixed $\delta$, this is $O(\log N)$ for $N\geq2$.
Theorems~\ref{thm:reliability} and~\ref{thm:work} therefore imply $\Pr(H_N)\geq1-\delta$ and $\mathbb E[W]=O(N)$.
Since $W\geq N\mathbf1_{H_N}$, its expectation is also at least $(1-\delta)N$.
A uniform per-attempt cost bound $g_N$ on all reached states gives expected total cost $O(Ng_N)$ and a deterministic total-cost cap for the entire run of $O(N\log N\,g_N)$.\looseness=-1 

If $\delta_N\geq N^{-d}$ for a fixed $d>0$, the cap remains $O(\log N)$ and the same expectation upper bound follows.
For smaller targets, substitute the actual $R$ into \thmref{thm:work}.
For example, $\delta_N=e^{-\kappa N}$ can require $R=\Theta(N)$ under the sufficient allocation, yielding an $O(N^2/\log N)$ work bound from the unrestricted process class.
This is an upper bound for reliable processes; the matching witness for the unrestricted class is not asserted to meet that exponentially small failure target.\looseness=-1 

\paragraph{Proof of the acceptance-sensitive bound.}
For the cap in Eq.~\eqref{eq:coverage_cap}, every correct prefix exhausts with probability at most $(1-c)\delta/N$.
Together with $\alpha\leq c\delta/N$, the first-failure bound gives $\Pr(H_N)\geq1-\delta$.
For fixed $c,\delta$, $\log(N/((1-c)\delta))/\log(N+1)$ is bounded uniformly for $N\geq1$.
The ceiling and $h_N\leq1$ therefore give $R_N/\log(N+1)=O(1/h_N)$.
Applying \thmref{thm:work} proves $\mathbb E[W]=O(N/h_N)$.
For the lower construction, set $q=1-h_N$ and accept only correct states.
Writing $v=1-q^{R_N}$, each step's expected attempts equal $v/h_N$ and the success probability is $v^N\geq1-\delta$.
Thus $v\geq1-\delta$ and the probability of reaching any step is at least $1-\delta$, giving $\mathbb E[W]\geq N(1-\delta)^2/h_N$.
This also covers $h_N=1$, where exactly $N$ attempts are executed.
If every attempt has cost $g_N$, the cost upper bound has a matching construction in those units as well.\looseness=-1 

\subsubsection{Proofs for the Binary-Checking Examples}
\label{app:checking_proofs}
\paragraph{Binary-checking model.}
At every correct prefix, a new candidate is correct with fixed probability $p\in(0,1)$.
Given its correctness, repeated checks are independent: a correct candidate passes with probability $a$, an incorrect one with probability $b$, where $0<b<a<1$.
The parameters are fixed across correct prefixes and task lengths.
Each candidate and its checks form one conditionally independent attempt at a fixed entry history.
Different random seeds do not by themselves establish this noise model; shared candidate difficulty or systematic errors can violate it.
For unanimity among $k$ checks, the acceptance probability and accepted-state error are
\begin{equation}
h_k=pa^k+(1-p)b^k,\qquad
\alpha_k=\frac{(1-p)b^k}{pa^k+(1-p)b^k}.
\label{eq:unanimity}
\end{equation}
All $k$ checks are executed when full-check costs are counted below.

\paragraph{Unanimity: precision and acceptance probability.}
The probability that all $k$ checks accept a correct candidate is $a^k$, and it is $b^k$ for an incorrect candidate.
Bayes' rule gives Eq.~\eqref{eq:unanimity}.
For a target $\epsilon\in(0,1)$, the precision condition is
\begin{equation}
\alpha_k\leq\epsilon
\quad\Longleftrightarrow\quad
\left(\frac ba\right)^k\leq\frac{\epsilon p}{(1-\epsilon)(1-p)}.
\end{equation}
For the smallest integer $k\geq1$ satisfying this with $\epsilon=c\delta/N$, where $p,a,b,c,\delta$ are fixed and $0<c<1$, we have
$k=\log N/\log(a/b)+O(1)$.
Hence $a^k=\Theta(N^{-\gamma})$ for $\gamma=-\log a/\log(a/b)>0$.
Moreover, $h_k=pa^k/(1-\alpha_k)=\Theta(N^{-\gamma})$.
To make $(1-h_k)^R$ of order $1/N$, one needs
$R=\Theta(\log N/[-\log(1-h_k)])=\Theta(N^\gamma\log N)$.
Together with the accepted-state error allocation, a sufficiently large constant in this cap guarantees $\Pr(H_N)\geq1-\delta$.

\paragraph{Unanimity: expected-cost lower bound.}
Use the homogeneous law at every correct prefix and make any incorrect acceptance irreversible.
Write $h=h_k$, $\alpha=\alpha_k$, and $u=(1-(1-h)^R)(1-\alpha)$.
Then $\Pr(H_N)=u^N$.
If $u^N\geq1-\delta$, each correct-prefix reach probability $\Pr(H_{t-1})=u^{t-1}$ is at least $1-\delta$.
Also $1-(1-h)^R\geq u\geq1-\delta$.
The expected attempts from each such prefix are
$[1-(1-h)^R]/h\geq(1-\delta)/h$.
Thus
\begin{equation}
\mathbb E[W]\geq\sum_{t=1}^N\Pr(H_{t-1})\frac{1-(1-h)^R}{h}
\geq\frac{N(1-\delta)^2}{h}=\Omega(N^{1+\gamma}).
\label{eq:unanimity_work_lower}
\end{equation}
Counting $k$ unit-cost checks on every attempt gives $\Omega(N^{1+\gamma}\log N)$ checking cost.
If checking stops at the first failed check, the same candidate lower bound applies, but multiplying by $k$ is no longer justified.
The lower bound counts work on correct prefixes only, so arbitrary behavior after an error cannot invalidate it.
For the matching upper bound, choose $R_N$ from Eq.~\eqref{eq:coverage_cap} with $h_N=h_k$.
All $k$ unit-cost checks are executed on each candidate and generation has uniformly bounded cost, so $g_N=\Theta(k)$ on all reached states.
\corref{cor:coverage_cost} gives $O(Nk/h_k)=O(N^{1+\gamma}\log N)$, matching the lower bound.
The upper bound permits arbitrary rejection at incorrect states, subject to the same conditional i.i.d.\ local retry and global stopping rules.\looseness=-1 

\paragraph{Majority: precision with nonvanishing acceptance probability.}
For odd $k$, let $u_k$ be the probability of majority acceptance given a correct candidate and $v_k$ the same probability given an incorrect candidate.
Under $a>1/2>b$, Hoeffding's inequality yields
$1-u_k\leq e^{-2k(a-1/2)^2}$ and $v_k\leq e^{-2k(1/2-b)^2}$.
For all sufficiently large $k$, $u_k\geq1/2$, so
\begin{equation}
h_k=pu_k+(1-p)v_k\geq p/2,\qquad
\alpha_k=\frac{(1-p)v_k}{h_k}
\leq\frac{2(1-p)}p e^{-2k(1/2-b)^2}.
\end{equation}
Choose an odd $k$ of order $\log(N/\delta)$ with a sufficiently large constant to get $\alpha_k\leq c\delta/N$.
The correct-prefix rejection bound $\rho=1-p/2<1$ is independent of $N$.
A cap $R=O(\log(N/\delta))$ now certifies reliability.
For fixed $\delta$, conditional i.i.d.\ candidate attempts at every reached state and uniform per-check cost bounds give $\mathbb E[W]=O(N)$ and $\mathbb E[\mathrm{cost}]=O(N\log N)$.
The pathwise cap is $NRk=O(N\log^2N)$ in the same units.
The precision claim uses the noise law only at correct prefixes; the expected-work bound also requires the fixed retry law at incorrect states.\looseness=-1 

\paragraph{Dependence and boundary cases.}
If repeated checks share a latent blind spot, their false-acceptance probability need not decrease exponentially with $k$.
Independent random seeds do not remove dependence caused by a shared candidate or systematic misunderstanding.
The conditional Bernoulli law above fixes $a$ and $b$ within each correctness class and excludes such additional latent variation.
The unanimity exponent assumes $a<1$ and $b>0$; perfect true acceptance or zero false acceptance needs a separate calculation.
Majority's fixed threshold at one half additionally requires $a>1/2>b$.

\subsubsection{Relation to Classical Stopped Testing}
\label{app:stopping_context}

\paragraph{The stopping pattern.}
Each attempt is a binary trial: acceptance completes a step; rejection leaves it unfinished.
The run ends at the $N$th acceptance or the first run of $R$ consecutive rejections.
With constant rejection probability, this is a special case of total-success/consecutive-failure stopping in start-up demonstration testing.
\citet{gera2013startup} studies such criteria under a previous-sum-dependent model; \citet{balakrishnan2014startup} review i.i.d., Markov-dependent, exchangeable, and multistate formulations.

We connect worst-case expected work over $\mathcal P_{N,R}$ to correctness at correct prefixes.

\paragraph{A prescribed homogeneous law.}
For rejection probability $q\in(0,1)$ at every step, let
$a_R(q)=\sum_{j=0}^{R-1}q^j=(1-q^R)/(1-q)$.
A step takes $a_R(q)$ attempts in expectation and advances with probability $1-q^R$, so
\begin{equation}
\mathbb E_q[W]
=a_R(q)\sum_{t=0}^{N-1}(1-q^R)^t
=\frac{1-q^R}{1-q}\frac{1-(1-q^R)^N}{q^R}.
\label{eq:homogeneous_work}
\end{equation}
At $q=0$ the work is $N$; at $q=1$ it is $R$.
This fixed-law calculation supplies the lower construction in \thmref{thm:work}; it does not bound every process in $\mathcal P_{N,R}$.

\paragraph{Exact recursion for the process class.}
Let $V_n$ be the supremum of expected remaining attempts with $n$ steps left and the common cap $R$.
Then $V_0=0$ and
\begin{equation}
V_n=\max_{q\in[0,1]}\left\{
a_R(q)+(1-q^R)V_{n-1}\right\}.
\label{eq:work_bellman}
\end{equation}
Condition on the complete entry history.
Current-step expected work is $a_R(q)$; acceptance leaves at most $V_{n-1}$ expected attempts, while exhaustion ends the run.
Continuity on $[0,1]$ guarantees a maximum.
Choose a maximizing $q$ and, after acceptance, repeat for $n-1$ steps.
Induction gives equality and an attaining process whose rejection probability depends only on the number of remaining steps.
History dependence alone is not the distinction: the unrestricted supremum reduces to this scalar recursion.
\thmref{thm:work} gives its order uniformly in $N,R$ without solving the successive maximizations.
This characterizes the process class; a model need not realize its maximizing law.

\paragraph{What is needed for reliable reuse.}
A uniform acceptance lower bound $h$ on all reached states immediately gives $\mathbb E[W]\leq N/h$ by summing geometric waiting times.
In \corref{cor:coverage_cost}, the acceptance lower bound holds only at correct prefixes.
An incorrect acceptance may lead to states with arbitrary rejection probabilities.
Global stopping bounds their work without extending the correct-prefix acceptance assumption to those states.
If rejection changes the retry law, rejecting the first $R-1$ candidates and accepting the last at every step gives $W=NR$.
If exhaustion permits fallback and continuation, always rejecting likewise gives $W=NR$.
These constructions show why the execution assumptions matter.
The lower witness for \thmref{thm:work} completes all steps with probability $(N/(N+1))^N$, approaching $e^{-1}$; \corref{cor:coverage_cost} supplies a separate matching construction within its reliable class.\looseness=-1 

\paragraph{Restart schedules and checking costs.}
\citet{luby1993optimal} optimize schedules for repeated complete runs of Las Vegas algorithms.
Our cap is fixed, and exhaustion ends the task; the result does not optimize restart schedules or decomposition.
Selective classification studies the risk--coverage tradeoff~\citep{elyaniv2010foundations}.
Here its coverage corresponds to acceptance probability, distinct from answer coverage in our experiments.
MAKER already accounts for voting cost and the probability that a generated response survives red-flagging~\citep{meyerson2025solving}.
Precision and acceptance probability are thus established concerns.
Here they enter a first-acceptance process with a finite cap through $g_N/h_N$ and the accepted-state error requirement in \corref{cor:coverage_cost}.
Standard Bayes and concentration calculations~\citep{hoeffding1963probability} show why low error need not imply low cost.\looseness=-1

\newpage

\section{Implementation Details}
\label{app:implementation_details}

This section records the implementation details used for the LoRe-Compo experiments in~\secref{sec:experiments}.

The protocol first generates candidate answers to $x_1$, selects intermediate answers, and uses them when generating answers to $x_2$. It also retains fallback candidates that did not pass the check. This manuscript package documents the protocol but does not include the implementation.

The finite-budget arithmetic experiments in~\secref{sec:two_component} use a separate complete-answer retry setup, described after the LoRe-Compo details; they do not perform digit-level state reuse.

\subsection{LoRe-Compo Implementation}
\label{app:lore_compo_impl}

\paragraph{Dataset format and connector prompt.}
The LoRe-Compo data file contains 500 unique MATH500 subproblems and 250 composite pairs.
Readers can obtain the benchmark online from the LoRe authors' repository at \url{https://github.com/ASTRAL-Group/LoRe}, under \path{LoRe-Bench/LoRe-Compo/compo_bench_math500.jsonl}; this paper does not redistribute the records.
Each subproblem record stores a MATH subject label, problem text, solution, and ground-truth answer.
A pair links two subproblems, their answers, and this prompt:
\begin{mdframed}[
    linewidth=0.4pt,
    linecolor=black!18,
    backgroundcolor=black!2,
    innerleftmargin=8pt,
    innerrightmargin=8pt,
    innertopmargin=5pt,
    innerbottommargin=5pt,
    roundcorner=3pt]
\noindent\textbf{Composite pair prompt.}
\begin{verbatim}
Answer the following questions in order:
Q1. {question 1}
Q2. {question 2}
\end{verbatim}
\end{mdframed}
The Pass@$K$ baseline uses this pair prompt directly for independent sampling over complete runs.
The subject labels are used only as a proxy for reduced concept overlap across subproblems; they do not imply probabilistic independence of model outputs.\looseness=-1 

\paragraph{Independent sampling prompts.}
For each standalone or composite problem, independent sampling uses the template \texttt{Question: \{problem\}} followed by \texttt{Answer:}.
For composite pairs, \texttt{\{problem\}} is the pair prompt above.
We sample $K\in\{1,2,4,8,16,32,64,128\}$ complete runs with a maximum generation length of 10,240 tokens.
The reported runs use vLLM with seed 42, repetition penalty 1.05, and family-specific temperatures 0.6/0.7/0.8 for DeepSeek/Qwen3/GPT-OSS.
For $K{=}1$, the executed parameters use top-$p=1.0$ and top-$k=-1$; the \texttt{do\_sample} field is not consumed by the vLLM call, so Pass@1 is not greedy decoding in this code path.
For $K{>}1$, the executed parameters use top-$p=0.7$ and top-$k=50$.
The LoRe-Compo scripts pass these prompt strings directly to vLLM; model-specific chat templates are not applied in this implementation.

\paragraph{Two-stage protocol prompts.}
Stage 1 samples $M$ answers to $x_1$ with its standalone prompt.

The main two-stage comparison generates $M=8$ candidates and retains up to $B=4$ intermediate answers for continuation. These answers pass the selection rule when it succeeds; otherwise they are fallback candidates.
The budget-allocation sweep also evaluates $(M,B)\in\{(4,2),(8,2),(8,4),(16,8)\}$.
Stages 1 and 2 use the same family-specific temperatures, top-$p=0.7$, top-$k=50$, seed 42, repetition penalty 1.05, and 10,240-token cap.
The main stage-2 prompt supplies the retained answer through the following \texttt{answer\_only} template:
\begin{mdframed}[
    linewidth=0.4pt,
    linecolor=black!18,
    backgroundcolor=black!2,
    innerleftmargin=8pt,
    innerrightmargin=8pt,
    innertopmargin=5pt,
    innerbottommargin=5pt,
    roundcorner=3pt]
\noindent\textbf{Two-stage protocol: stage-2 prompt (\texttt{answer\_only}).}
\begin{verbatim}
Question 1 has been solved. The answer was: {q1 answer}

========================================
Now solve Question 2:
========================================

{question 2}

Important: Question 2 is a different problem.
Think step by step and provide the answer for Question 2.
\end{verbatim}
\end{mdframed}

The prompt ablation additionally evaluates \texttt{no\_context}, which passes only $x_2$, and \texttt{full\_context}, which passes the generated $x_1$ reasoning text and retained $x_1$ answer; retained fallback answers are not accepted.
For \texttt{full\_context}, the implementation truncates the retained $x_1$ context to the most recent portion when it exceeds the configured context budget.
\texttt{no\_context} uses the same single-subproblem prompt for $x_2$, while \texttt{full\_context} adds the retained $x_1$ reasoning and answer before Question 2.

\paragraph{Answer extraction and matching.}
Before extraction, generated text is split at \texttt{</think>} when a reasoning tag is present, and answer extraction is applied first to the post-thinking answer text.
For math-style answers, extraction first uses the final \texttt{\textbackslash boxed\{...\}} expression if present, then explicit answer patterns such as \texttt{Answer:}, \texttt{The answer is}, and \texttt{Final answer:}, and finally falls back to the last numeric expression.
For composite answers, extraction first tries to split the generated text into \texttt{Q1} and \texttt{Q2} sections and extract one answer from each section.
If this split fails, it falls back to the first two numeric expressions when available.
Correctness uses numeric comparison with tolerance $10^{-6}$ when both prediction and target are numeric; otherwise strings are stripped, lowercased, spaces are removed, and exact string equality is used.
No computer algebra system (CAS) checks algebraic equivalence.

\paragraph{Oracle and self-consistency selection.}

Oracle selection compares each stage-1 candidate's extracted $x_1$ answer with the reference answer.
If reference-matching candidates exist, the first $B$ matches are retained and each receives a continuation.\looseness=-1 

If none matches, the fallback keeps the first $B$ candidates so that continuation tokens and second-subproblem success are still measured. All such paths use fallback routing, which the formal loop excludes by terminating without returning a state whenever selection fails.

Self-consistency selection counts the extracted $x_1$ answers among the $M$ candidates.
If the most common answer has vote share at least 0.5, it keeps up to $B$ candidates with that answer.

If no answer reaches this threshold, the empirical protocol keeps the first $B$ candidates as a fallback for measurement; the formal loop would instead stop without returning a state.
Ties are not broken by a separate randomized procedure: the implementation uses the first most common answer returned by the counter over generated candidates.\looseness=-1 

Comparing accepted SC answers against reference answers measures the error rate among accepted states in the evaluated sample. It does not estimate the uniform theoretical bound $\alpha$ across all steps, correct input states, and budgets. An error estimate that includes fallbacks instead describes all retained candidates, including those rejected by the selection rule.\looseness=-1 

\paragraph{Retained-path correctness fields.}
The implementation defines path-level fields for first-answer correctness, second-answer correctness, and their conjunction.

The implementation field \texttt{final\_correct} denotes second-subproblem success: whether any retained continuation solves $x_2$, regardless of the retained $x_1$ answer's correctness. Joint success requires both answers to be correct on a shared continuation path.\looseness=-1 

Joint success must be computed from the path-level field requiring both answers to be correct. Estimating the error rate among accepted states additionally requires excluding fallback paths.

The distinction matters when fallback candidates are retained or the check accepts an incorrect answer. With Oracle selection and at least one reference match, the reused intermediate answers are correct according to the answer matcher; this does not verify the full reasoning traces.

\begin{mdframed}[
    hidealllines=true,
    backgroundcolor=color4!18,
    innerleftmargin=10pt,
    innerrightmargin=10pt,
    innertopmargin=4pt,
    innerbottommargin=4pt,
    roundcorner=4pt]
\noindent\textcolor{color3}{\textbf{Case Example: One LoRe-Compo Pair.}}
Consider the pair with MATH500 indices $(477,21)$.
The first subproblem asks for the number of possible tens digits $B$ such that the two-digit number ``$B6$'' is a square, with answer $2$.
The second asks for the greatest integer less than $(\sqrt{7}+\sqrt{5})^6$, with answer $13535$.
The independent-sampling baseline presents both subproblems in one prompt and succeeds only when a single sampled run returns the correct pair $[2,13535]$.

The two-stage protocol samples $M$ solutions for the first subproblem and selects extracted answers using Oracle matching or SC voting. If an answer passes the selection rule, the model receives that accepted intermediate answer when solving the second subproblem; otherwise it continues from fallback candidates to measure second-subproblem coverage.

The example contrasts complete-solution sampling with separate continuation prompts. The second subproblem does not mathematically depend on the first answer.
\end{mdframed}

\begin{mdframed}[
    hidealllines=true,
    backgroundcolor=color4!18,
    innerleftmargin=10pt,
    innerrightmargin=10pt,
    innertopmargin=4pt,
    innerbottommargin=4pt,
    roundcorner=4pt]
\noindent\textcolor{color3}{\textbf{Case Example: Answer Matching and Selection.}}
The pair with MATH500 indices $(277,222)$ contains a first subproblem asking for $\sqrt{53}$ in simplest radical form, with target answer \texttt{\textbackslash sqrt\{53\}}.
For nonnumeric answers, the checker strips whitespace, lowercases strings, removes spaces, and then applies exact string matching.
Thus \texttt{\textbackslash sqrt\{53\}} is accepted, while a semantically equivalent expression may not be accepted unless extracted into the same \mbox{normalized string}.
For generated composite answers, if \texttt{Q1}/\texttt{Q2} section extraction fails, the extractor falls back to the first two numeric expressions in the post-\texttt{</think>} answer text.

For Oracle selection, if no stage-1 candidate matches the target, the empirical fallback keeps the first $B$ candidates so that continuation budget is still measured; these paths are neither accepted states nor part of the formal product theorem.
Consider these $M{=}8$ candidate answers:
\[
200,\;200,\;200,\;200,\;210,\;300,\;2000,\;\texttt{PARSE\_ERROR}.
\]
The answer $200$ has vote share $4/8=0.5$, meeting the self-consistency threshold, so up to $B=4$ candidates with that answer are kept.
If the top vote share were $3/8$, the protocol would retain the first $B$ candidates as a fallback.
Ties use the first most common answer encountered by the counter; no randomized tie-breaking rule is applied.
\end{mdframed}\looseness=-1 

\subsection{Metric Definitions and Limits of the Reported Aggregates}
\label{app:metric_alignment}

\paragraph{Joint versus second-subproblem coverage.}
Let $i=1,\ldots,D$ index the $D=250$ composite pairs, and let $\mathcal R_i$ be the retained paths that receive continuations.
Write $G^{(1)}_{ib}$ and $G^{(2)}_{ib}$ for the correctness indicators of the first and second answers on path $b\in\mathcal R_i$.
The distinction between second-subproblem coverage and joint coverage is expressed by the statistics
\begin{align}
\widehat A_2^{\mathrm{ret}}
&=\frac1D\sum_{i=1}^{D}\mathbf1\{\exists b\in\mathcal R_i:G^{(2)}_{ib}=1\},\\
\widehat A_{\mathrm{joint}}^{\mathrm{ret}}
&=\frac1D\sum_{i=1}^{D}\mathbf1\{\exists b\in\mathcal R_i:G^{(1)}_{ib}=G^{(2)}_{ib}=1\}.
\end{align}
Only the first statistic is reported for the two-stage experiments. The reported aggregates cannot replace missing path-level records.
The second cannot be reconstructed by multiplying marginal accuracies or from $\widehat A_2^{\mathrm{ret}}$ alone.
Restricting $\mathcal R_i$ to paths that passed selection would define accepted-path coverage; its denominator must still contain all $D$ pairs, with an empty path set contributing zero.
These definitions supply no new measurements and do not make the multiple-path protocol equivalent to the formal loop.

\paragraph{Acceptance and fallback.}
Error among accepted intermediate answers requires comparing those answers to the reference, excluding every fallback that did not pass selection.
A calculation must specify whether it weights individual retained answers or one selected state per pair; multiple retained copies of a modal answer need not define the same distribution as the theorem's returned state.
An empirical average under either choice would not establish a uniform bound over all correct prefixes.
The reported continuation aggregates do not determine this error rate, the frequency of fallback, or the joint contribution of accepted and fallback paths.
Oracle reference matching checks an extracted answer; in the full-context ablation, the retained reasoning text is not separately checked.\looseness=-1 

\paragraph{Partial successes in a common candidate pool.}
For $K$ complete composite runs, let $G_{i1r}$ and $G_{i2r}$ denote subproblem-answer correctness in run $r$.
Define
\begin{align}
J_i(K)&=\mathbf1\{\exists r\leq K:G_{i1r}=G_{i2r}=1\},\\
U_i(K)&=\mathbf1\{\exists r\leq K:G_{i1r}=1\}
          \mathbf1\{\exists r\leq K:G_{i2r}=1\}.
\end{align}
Then $J_i(K)\leq U_i(K)$ for every realized pool, without an independence assumption.
The average $D^{-1}\sum_i[U_i(K)-J_i(K)]$ would measure pools containing separately correct subproblem answers but no jointly correct run.
It is an oracle coverage diagnostic, not a deployable answer-selection rule.
The reported Pass@$K$ values average $J_i(K)$; they cannot identify the difference $U_i(K)-J_i(K)$.

\paragraph{Comparability and uncertainty.}
The decoding differences and unspecified cross-budget nesting in \appref{app:lore_compo_impl} prevent interpreting all composite rows as one controlled budget curve.
Single-subproblem and paired-answer extraction also follow different fallback rules: the paired extractor can return the first two numeric expressions when section parsing fails, while semantic equivalence is not checked by a CAS.
The available aggregate results do not quantify extraction errors or their differential effect between protocols.
Scores depend on the specified prompts, answer matching, and extraction.

There are 250 pairs per model, giving 0.4-percentage-point resolution for each model's pair-level accuracy.
SC--Oracle differences of at most 1.6 points correspond to a net difference of at most four successful pairs; the number of pairs on which the selectors disagree cannot be inferred from these marginal scores.
Paired uncertainty cannot be recovered from the rounded marginal scores alone; the shared test pairs also make models' aggregate outcomes dependent.
We report the observed differences without asserting statistical equivalence, significance, or robustness across decoding seeds.

\subsection{Controlled Arithmetic Implementation}
\label{app:arithmetic_impl}

The Base 7 and Base 10 experiments use the same complete-answer sampling structure. Digit count indexes problem size; no digit-position state is checked or reused.
For each base, task, and digit count, the scripts pre-generate 50 problems using seed $42\cdot 10000 + 100\cdot j+n$, where $j=0$ for addition, $j=1$ for multiplication, and $n$ is the digit count.
The evaluated digit counts are $n\in\{1,\ldots,10\}$ and the tasks are addition and multiplication.
Each problem receives $R_{\max}=32$ independent attempts in one vLLM call, with checkpoints at $R\in\{2,4,8,16,32\}$.\looseness=-1 

Thus each model is evaluated on $2$ bases $\times$ $2$ tasks $\times$ $10$ digit counts $\times$ $50$ problems, with 32 attempts per problem; each combination of model, base, task, and digit count contains 1,600 completions.

\paragraph{Arithmetic prompts.}
The Base 7 template specifies its carry rule and answer format:
\begin{mdframed}[
    linewidth=0.4pt,
    linecolor=black!18,
    backgroundcolor=black!2,
    innerleftmargin=8pt,
    innerrightmargin=8pt,
    innertopmargin=5pt,
    innerbottommargin=5pt,
    roundcorner=3pt]
\noindent\textbf{Base 7 addition prompt.}
\begin{verbatim}
Task: Perform addition in Base-7 (septenary system).
Calculate {a_b7} + {b_b7} in Base-7.
Rules: In Base-7, digits are 0-6. When a digit-sum reaches 7,
carry 1 to the next position.
Think step by step and give your final answer
in Base-7 inside \boxed{}.
\end{verbatim}
\end{mdframed}

Base 7 multiplication uses the analogous prompt with multiplication-specific carry instructions.
Base 10 uses the corresponding decimal arithmetic prompts, asking for the sum or product of two decimal operands and a final boxed numerical answer.\looseness=-1 

\paragraph{Arithmetic sampling and extraction.}
For DeepSeek-family models, the arithmetic scripts use temperature 0.6 and top-$p=0.95$.
For Qwen3 models, they use temperature 0.6, top-$p=0.95$, top-$k=20$, and min-$p=0.0$.
For GPT-OSS models, they use temperature 1.0 and top-$p=1.0$.
All arithmetic runs use seed 42, maximum generation length 10,240 tokens, and \texttt{n=32} completions per prompt.
Answer extraction first uses the final \texttt{\textbackslash boxed\{...\}} expression if present.
Base 7 falls back to the last digit string with commas and spaces removed; Base 10 removes commas and takes the last signed integer.

\paragraph{Unsolved fraction.}
For each problem, the retry matrix stores a binary correctness value for each of the 32 attempts.
The checkpoint statistic $\hat E(R)$ averages failure indicators over the first $R$ attempts:
\[
\hat E(R)=\frac{1}{|\mathcal{D}|}\sum_{x\in\mathcal{D}}\mathbf{1}\{\text{attempts }1,\ldots,R\text{ all fail on }x\}.
\]

The unsolved fraction after 32 attempts is $\hat E(32)$, and empirical Pass@$R$ is $1-\hat E(R)$. Each value of $R$ uses the first $R$ of the same 32 attempts.
In this arithmetic probe, Pass@1 is the first sampled attempt among the 32 returned completions, not a greedy decoding run.

\begin{mdframed}[
    hidealllines=true,
    backgroundcolor=color4!18,
    innerleftmargin=10pt,
    innerrightmargin=10pt,
    innertopmargin=4pt,
    innerbottommargin=4pt,
    roundcorner=4pt]
\noindent\textcolor{color3}{\textbf{Case Example: Base 7 Addition.}}
For a one-digit Base 7 addition problem, the script samples decimal operands in $[1,6]$ and converts them to Base 7.
If the sampled operands are decimal $5$ and $6$, the prompt asks for $5+6$ in Base 7.
The target is decimal $11$, written as \texttt{14} in Base 7.
A final boxed answer of \texttt{14} is correct; a decimal answer \texttt{11} is incorrect because the output must be in Base 7.
\end{mdframed}

\begin{mdframed}[
    hidealllines=true,
    backgroundcolor=color4!18,
    innerleftmargin=10pt,
    innerrightmargin=10pt,
    innertopmargin=4pt,
    innerbottommargin=4pt,
    roundcorner=4pt]

\noindent\textcolor{color3}{\textbf{Case Example: Counting Unsolved Problems.}}
Suppose a condition has 50 problems and, for a given problem, the 32-attempt correctness row is
\[
0,0,0,1,0,\ldots.
\]
This problem contributes to $\hat E(1)$, $\hat E(2)$, and $\hat E(3)$ because the first $R$ attempts all fail for those checkpoints.
It does not contribute to $\hat E(4)$ or later checkpoints because an attempt has succeeded by $R=4$.
Problems left unsolved within this budget are not necessarily impossible to solve.
\end{mdframed}

\section{Arithmetic Results by Base and Digit Count}
\label{app:base_comparison}

Figure~\ref{fig:base7_laws} replots the first-attempt accuracies and mean token counts shown in Figures~\ref{fig:app_pass1_heatmap} and~\ref{fig:app_token_heatmap}.
Figure~\ref{fig:base_comparison} gives the corresponding unsolved fractions for each model and operation.
Digit count indexes input size, not measured reasoning steps; the numerical ranges differ across bases.

\begin{figure}[t]
    \vspace{-4pt}
    \centering
    \captionsetup{skip=4pt}
    \includegraphics[width=\linewidth]{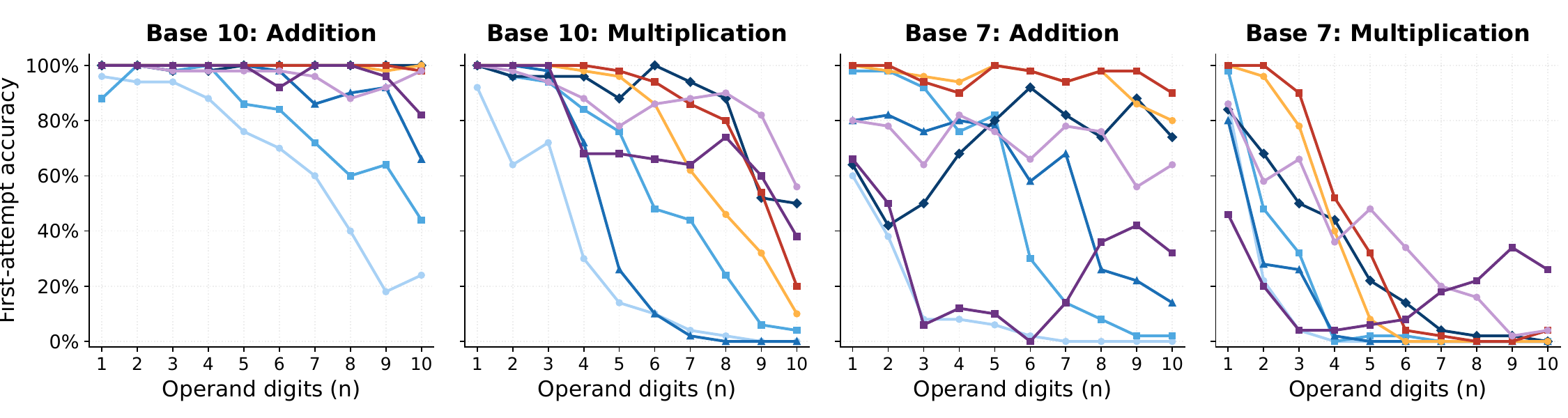}
    \includegraphics[width=0.99\linewidth]{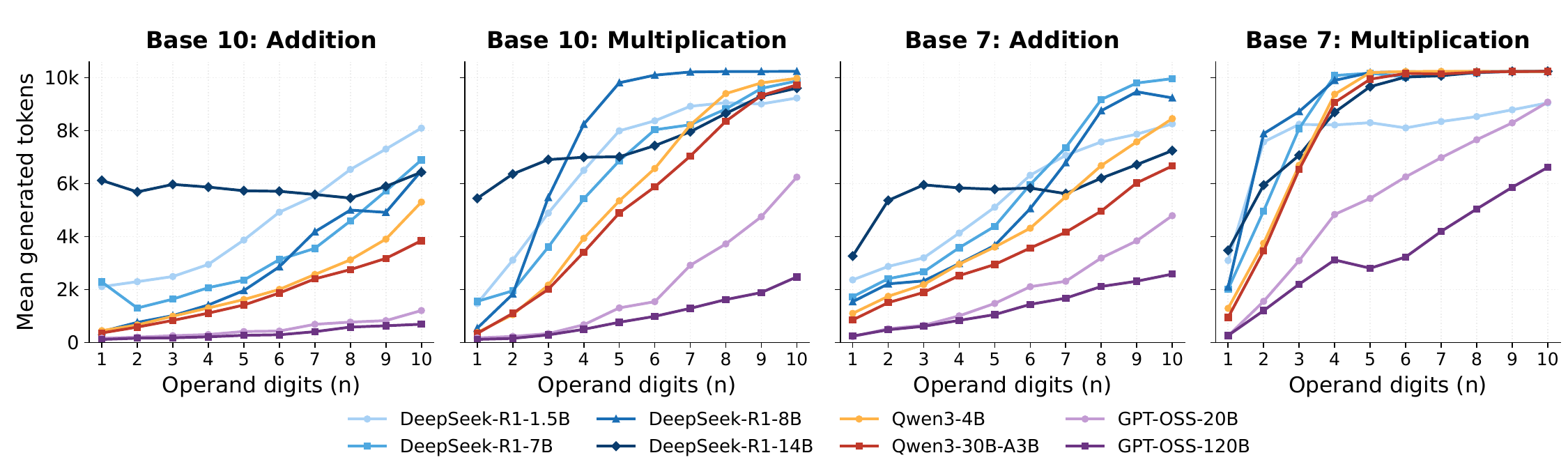}
    \caption{\textbf{Accuracy and Generated Tokens by Digit Count.} Top: first-attempt accuracy from Figure~\ref{fig:app_pass1_heatmap}. Bottom: mean generated tokens per attempt from Figure~\ref{fig:app_token_heatmap}. Both rows plot the reported heatmap cells under the protocol in Appendix~\ref{app:arithmetic_impl}; colors identify models.}
    \label{fig:base7_laws}
    \vspace{-6pt}
\end{figure}

\begin{figure}[h]
    \centering
    \includegraphics[width=\linewidth]{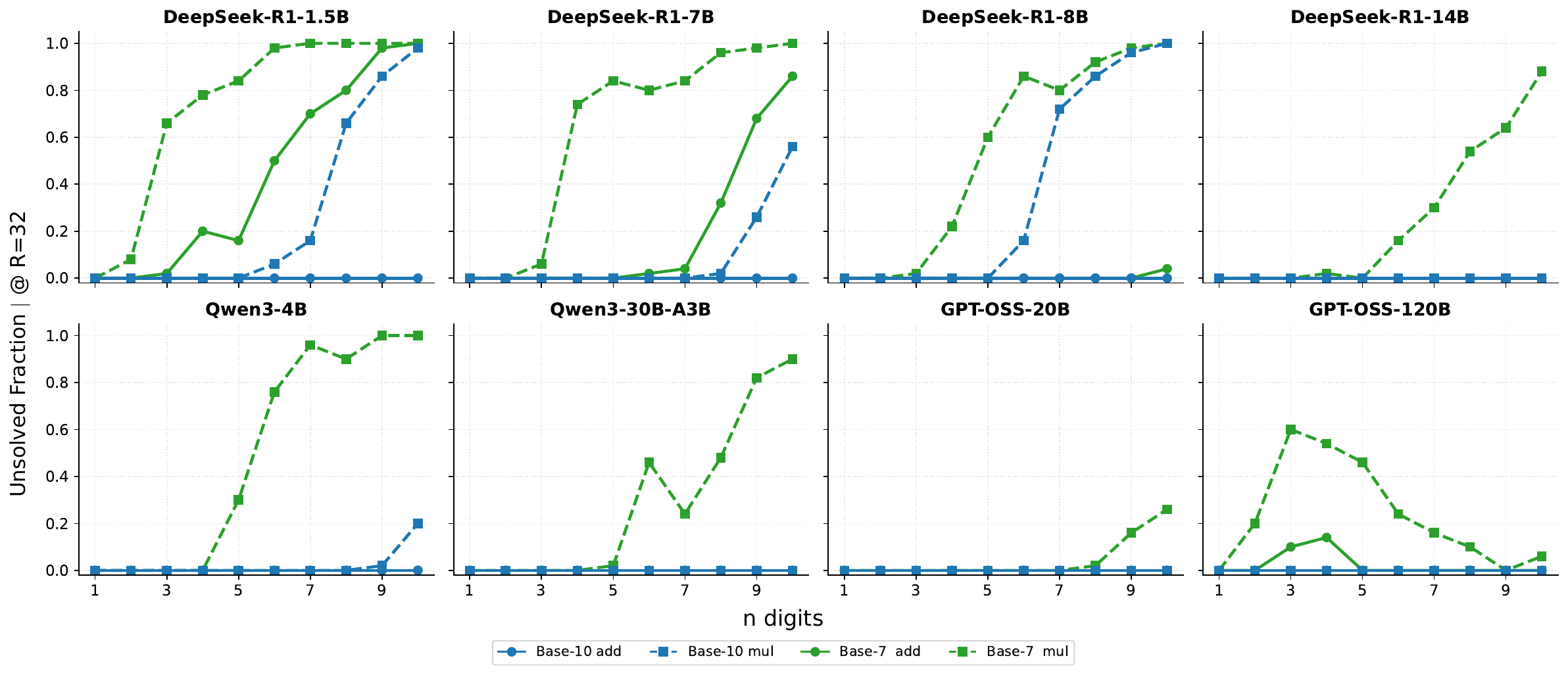}
	    
    \caption{\textbf{Unsolved Fraction in Base 7 and Base 10.} Each curve reports $\hat E(32)$ by digit count for one model and operation, comparing the two bases under the stated sampling protocol.}
    \label{fig:base_comparison}
\end{figure}

\section{Detailed Results of Repeated Arithmetic Attempts}
\label{app:floor_analysis}

The arithmetic protocol in Appendix~\ref{app:arithmetic_impl} yields 320 combinations of model, base, operation, and digit count, each with 50 problems and 32 complete-answer attempts per problem.
The following figures show the per-condition curves, single-attempt scores, and token measurements.

\subsection{Unsolved-Fraction Curves}
\label{app:er_curves}

Figure~\ref{fig:app_er_curves} shows $\hat E(R)$ for all 320 conditions.
At $R=32$, 235 conditions have no unsolved problems and 85 have at least one.
All 80 Base 10 addition conditions reach zero unsolved problems. Base 7 multiplication has the largest mean unsolved fraction, 0.40, with 33 of 80 conditions above 0.5.
The curves vary across models and digit counts, including GPT-OSS-120B's nonmonotonic low-digit Base 7 multiplication results.
Appendix~\ref{app:mixture_fit} defines the two-component fits shown as dashed curves.

\begin{figure}[h]
    \centering
    \includegraphics[width=0.965\linewidth]{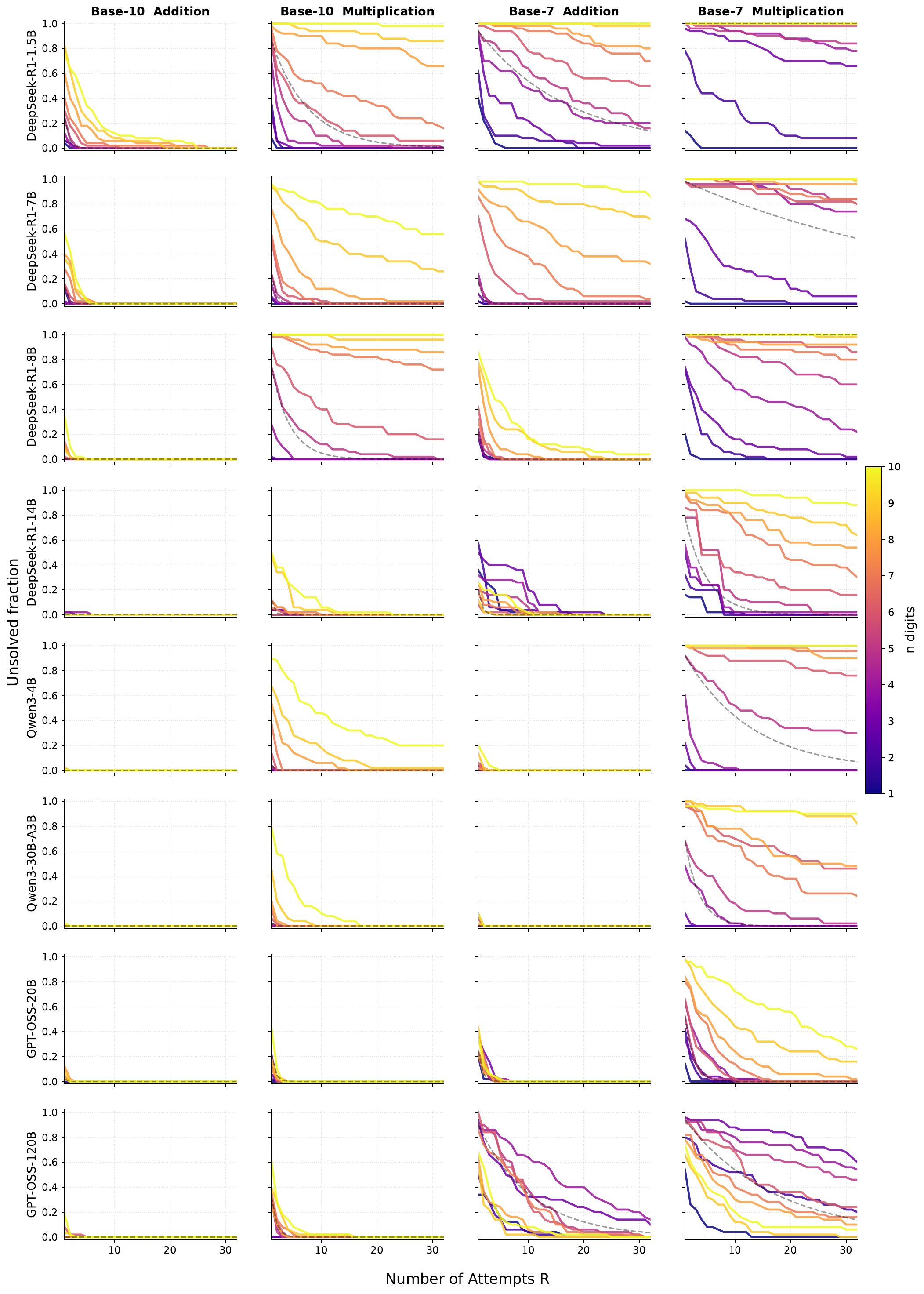}
    \caption{\textbf{Empirical Unsolved-Fraction Curves.} Rows are models; columns are Base 10 addition, Base 10 multiplication, Base 7 addition, and Base 7 multiplication. Colors encode $n=1,\ldots,10$. Solid curves show $\hat E(R)$ and dashed gray curves show fitted two-component models. The panels retain both zero and nonzero unsolved fractions within 32 attempts, including GPT-OSS-120B's nonmonotonic Base 7 multiplication results.}
    \label{fig:app_er_curves}
\end{figure}

\subsection{First-Attempt Accuracy}
\label{app:pass1_heatmap}

Figure~\ref{fig:app_pass1_heatmap} complements the unsolved-fraction heatmap by showing Pass@1.
Low Pass@1 can coexist with high Pass@32: R1-1.5B on Base 7 addition at $n=3$ has Pass@1$=0.08$ and Pass@32$=0.98$, with $\hat E(32)=0.02$.
Across the 320 conditions, 12 have a gain from additional attempts, Pass@32 minus Pass@1, greater than 0.80.
For R1-1.5B on Base 7 multiplication at $n\geq7$, Pass@1 remains zero and $\hat E(32)=1$.
These cases distinguish first-attempt accuracy from finite-pool answer coverage, without identifying how an intermediate state would affect later steps.

\begin{figure}[h]
    \centering
    \includegraphics[width=\linewidth]{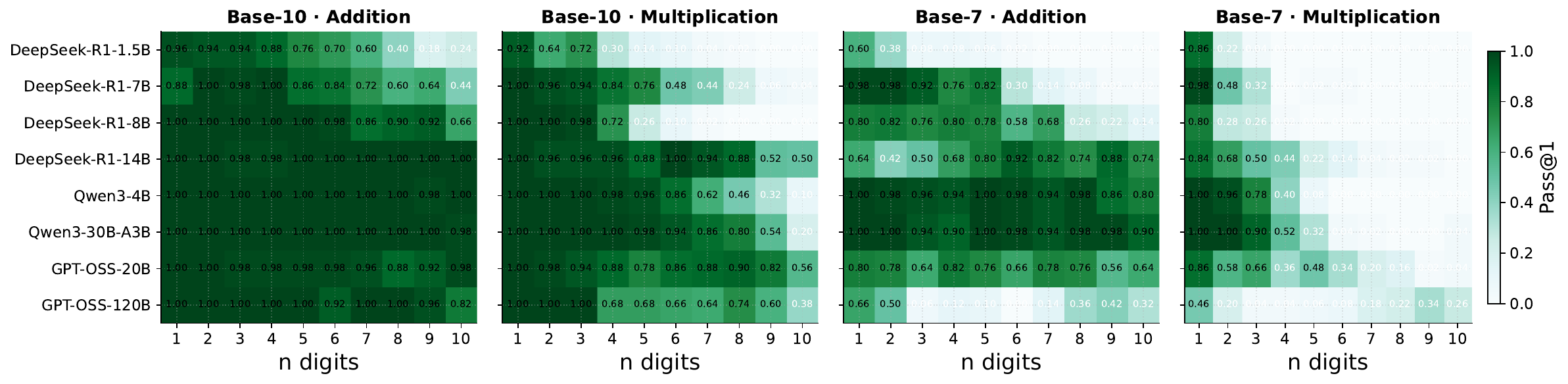}
    \caption{\textbf{First-Attempt Accuracy.} For Base 7 multiplication, R1-14B ($n=6$) has Pass@1=0.14 and $\hat E(32)=0.16$ (84\% solved within 32 attempts); R1-1.5B ($n=7$) has Pass@1=0 and $\hat E(32)=1$.}
    \label{fig:app_pass1_heatmap}
\end{figure}

\subsection{Unsolved Fraction at Different Attempt Budgets}
\label{app:checkpoint_convergence}

Figure~\ref{fig:app_checkpoint_convergence} reports $\hat E(R)$ at $R\in\{2,4,8,16,32\}$.
Across 320 conditions, the mean absolute gap between $\hat E(16)$ and $\hat E(32)$ is 0.031, with a maximum of 0.32.
These differences describe changes within the observed budget; they do not establish convergence beyond it.
Because every checkpoint uses a prefix of the same pool, solved problems remain covered at later checkpoints.

\begin{figure}[h]
    \centering
    \includegraphics[width=\linewidth]{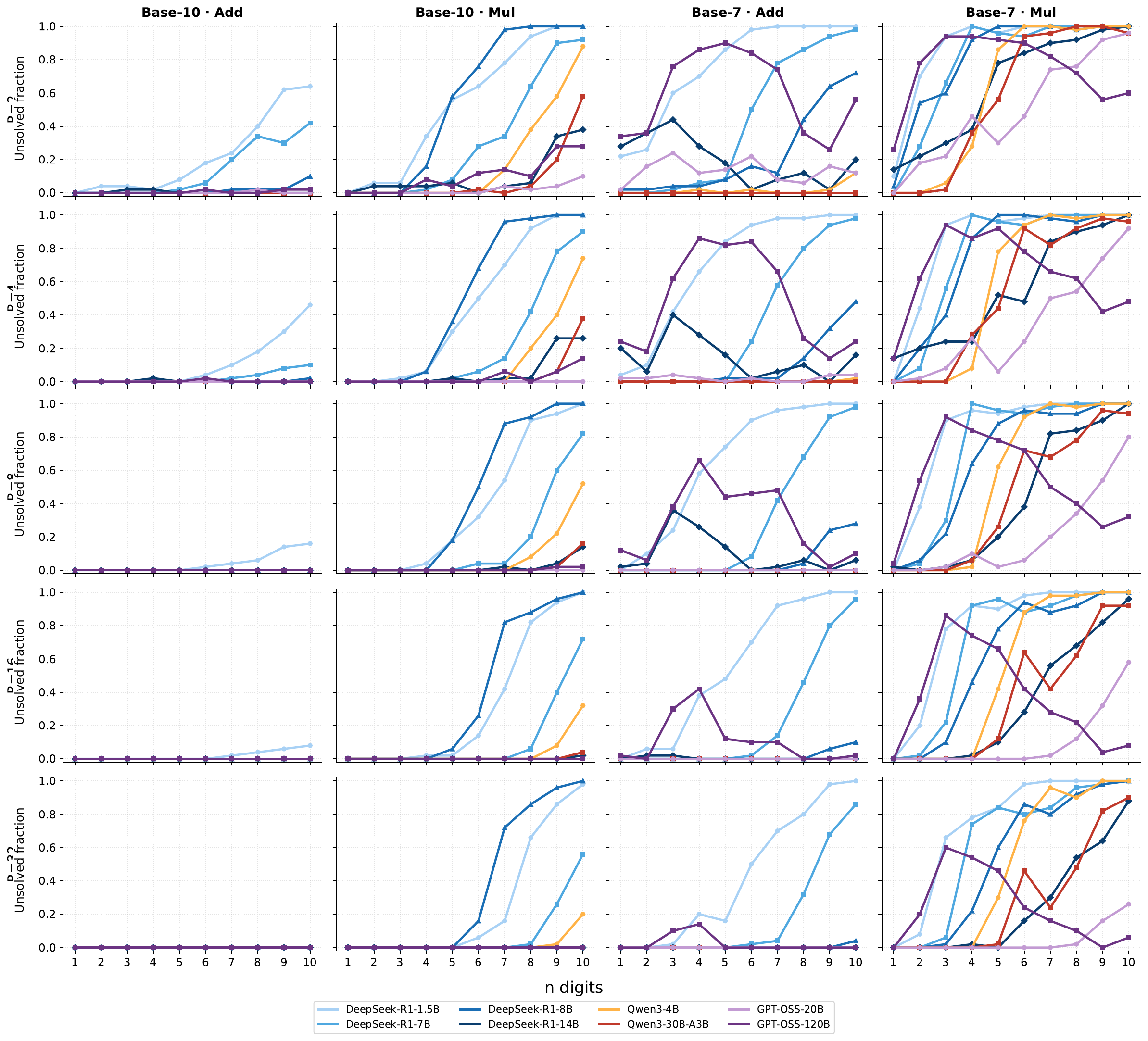}
    
    \caption{\textbf{Unsolved Fraction at Different Attempt Budgets.} Rows correspond to $R\in\{2,4,8,16,32\}$ and columns to the four base--operation combinations. Curves show $\hat E(R)$ against digit count; colors identify models. The reported mean absolute gap between $R=16$ and $R=32$ is 0.031 across 320 conditions, with a maximum of 0.32; every checkpoint uses the same nested attempt pool.}
    \label{fig:app_checkpoint_convergence}
\end{figure}

\subsection{Two-Component Mixture Fits}
\label{app:mixture_fit}

We fit the empirical curves by nonlinear least squares using
\begin{equation}
    \hat E(R)\approx \pi_s\varepsilon_{\mathrm{good}}^R
    +(1-\pi_s)\varepsilon_{\mathrm{bad}}^R,
    \label{eq:empirical_retry_mixture}
\end{equation}
Here $\pi_s$ is the weight of the faster-decaying component and $\varepsilon_{\mathrm{good}}$ its per-attempt failure rate. The remaining weight $1-\pi_s$ belongs to a slowly decaying component with $\varepsilon_{\mathrm{bad}}$ near one.
Figures~\ref{fig:app_mixture_fit_base7} and~\ref{fig:app_mixture_fit_base10} display all 128 conditions with $n\in\{3,5,7,9\}$.
The weight $1-\pi_s$ of the slowly decaying component tracks $\hat E(32)$ in the reported comparisons with $\hat E(32)>0.1$, but can exceed it when few problems remain unsolved.
The fit summary also reports $\varepsilon_{\mathrm{good}}<0.1$ for R1-14B in conditions where R1-1.5B has $\varepsilon_{\mathrm{good}}>0.5$.
The weight $1-\pi_s$ describes the fitted component over the observed window; if both fitted failure rates are below one, the fitted curve tends to zero.

\begin{figure}[htbp]
    \centering
    \includegraphics[width=\linewidth]{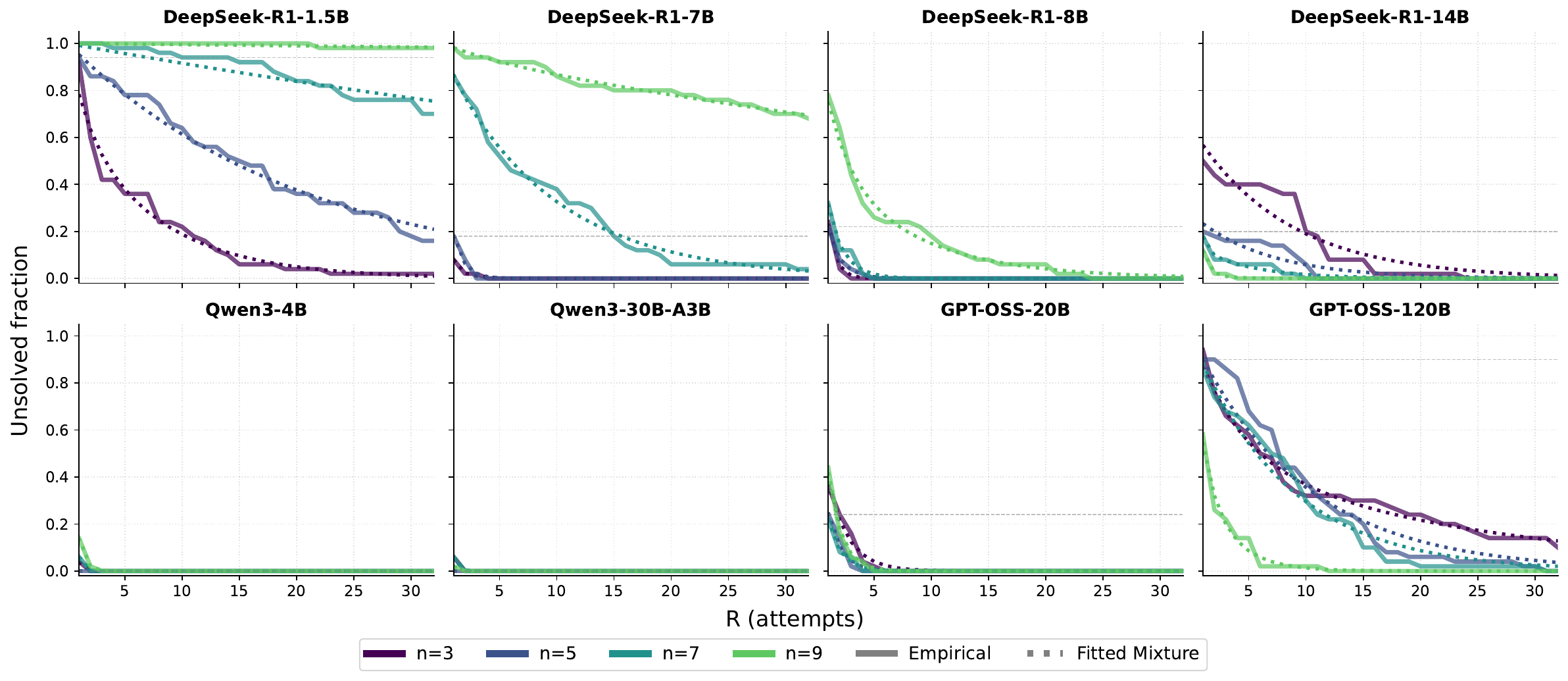}
    \includegraphics[width=\linewidth]{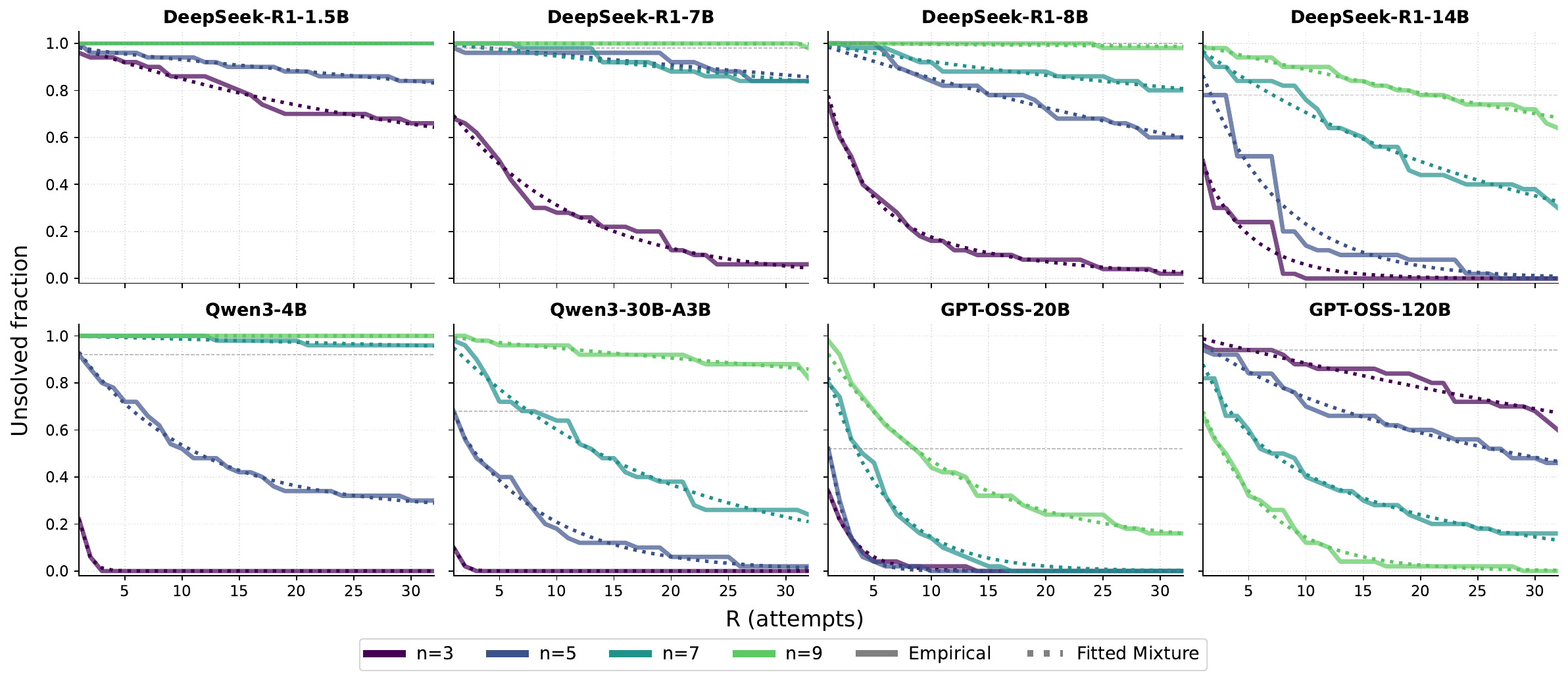}
    \caption{\textbf{Two-Component Fits for Base 7.} Solid curves show empirical $\hat E(R)$; dashed curves show nonlinear least-squares fits of $\pi_s\varepsilon_{\mathrm{good}}^R+(1-\pi_s)\varepsilon_{\mathrm{bad}}^R$ at $n\in\{3,5,7,9\}$. The slowly decaying component's weight $1-\pi_s$ is compared with the unsolved fraction in Appendix~\ref{app:mixture_fit}.}
    \label{fig:app_mixture_fit_base7}
\end{figure}

\begin{figure}[htbp]
    \centering
    \includegraphics[width=\linewidth]{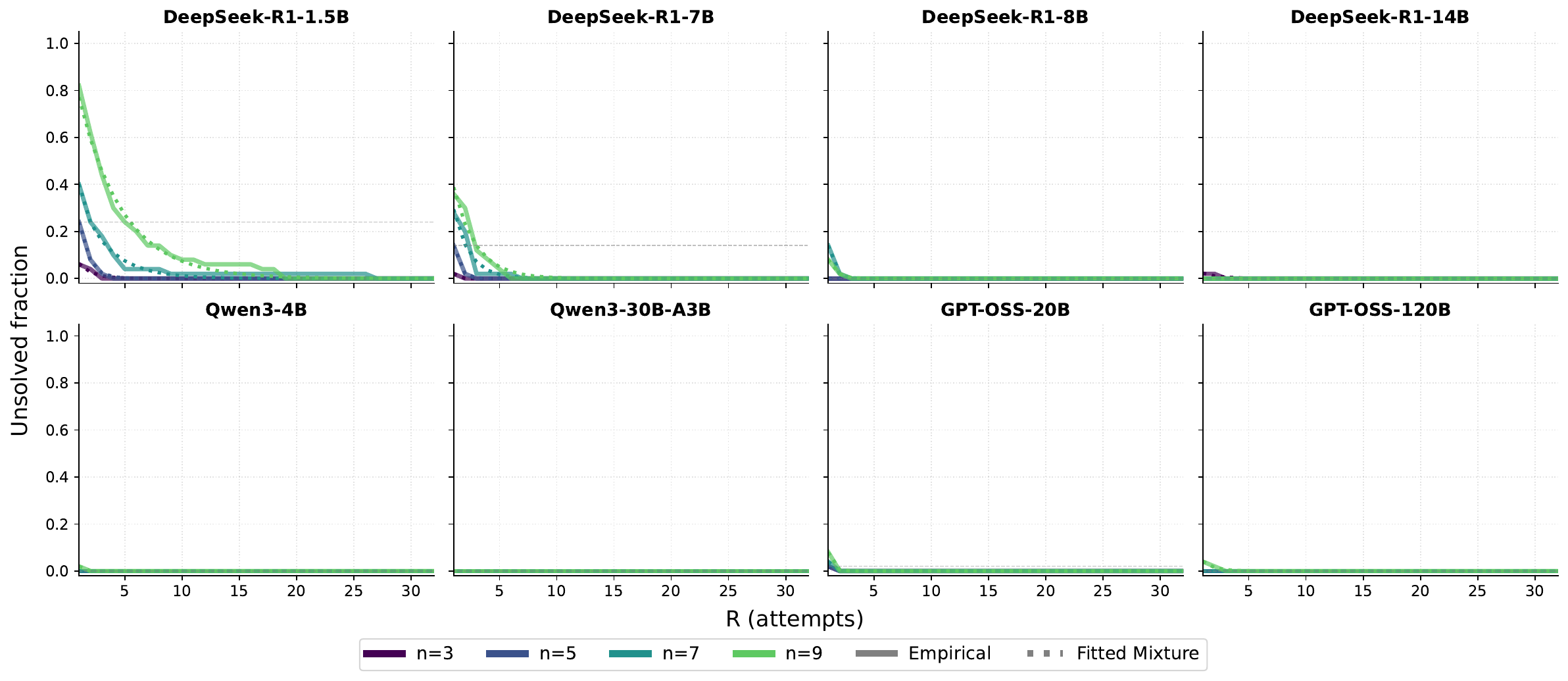}
    \includegraphics[width=\linewidth]{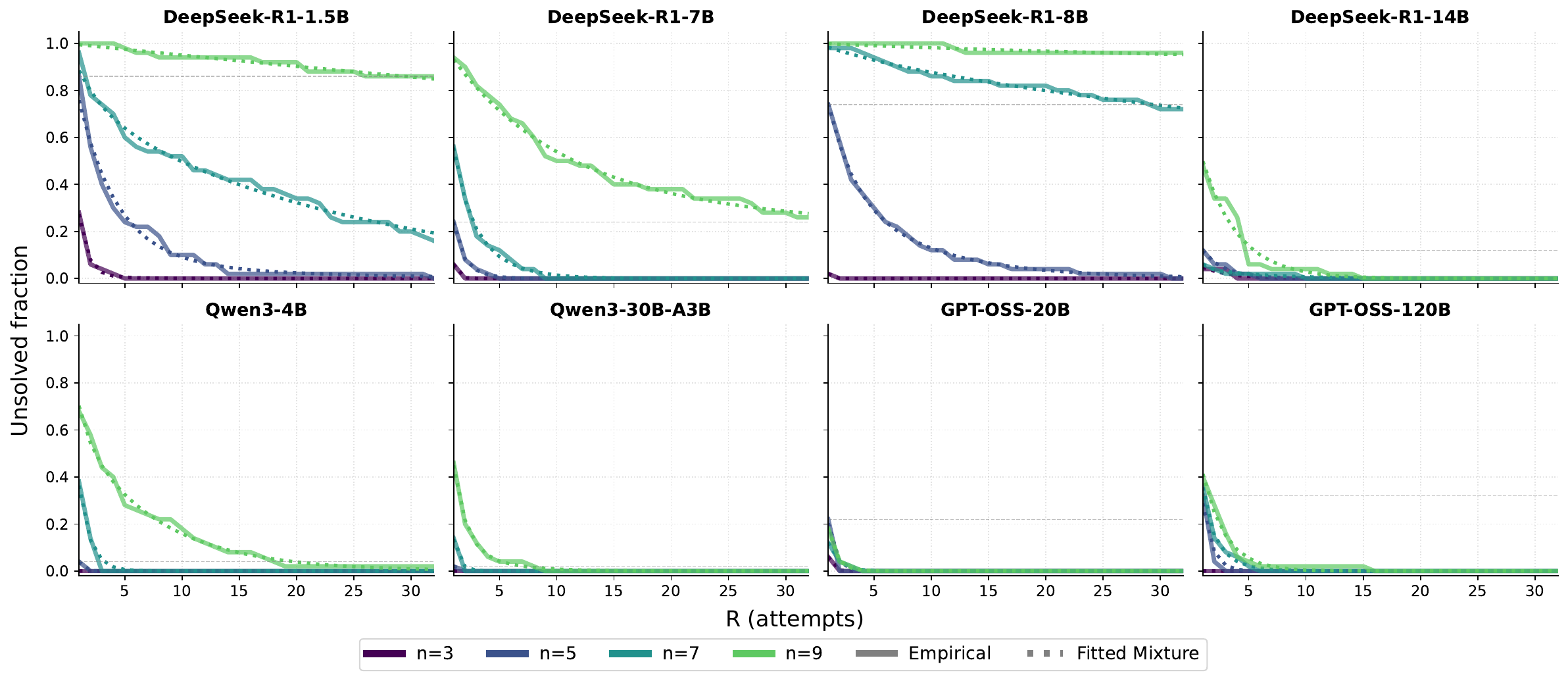}
    \caption{\textbf{Two-Component Fits for Base 10.} The layout, checkpoints, and curve styles follow Figure~\ref{fig:app_mixture_fit_base7}, with Base 10 addition and multiplication shown in the two panels.}
    \label{fig:app_mixture_fit_base10}
\end{figure}

\subsection{Gain from Additional Attempts by Digit Count}
\label{app:pass_gain}

Figure~\ref{fig:app_pass_gain} plots Pass@$R$ minus Pass@1 against digit count at $R\in\{4,8,16,32\}$, separately for addition and multiplication.
The gain is at most $1-\mathrm{Pass}@1$; its distance from this maximum equals $\hat E(R)$.
For R1-7B on Base 7 multiplication at $n=4$, Pass@1$=0$ and Pass@32$=0.26$, leaving $\hat E(32)=0.74$.
The vertical axis spans the full gain range from zero to one.

\begin{figure}[h]
    \centering
    \includegraphics[width=\linewidth]{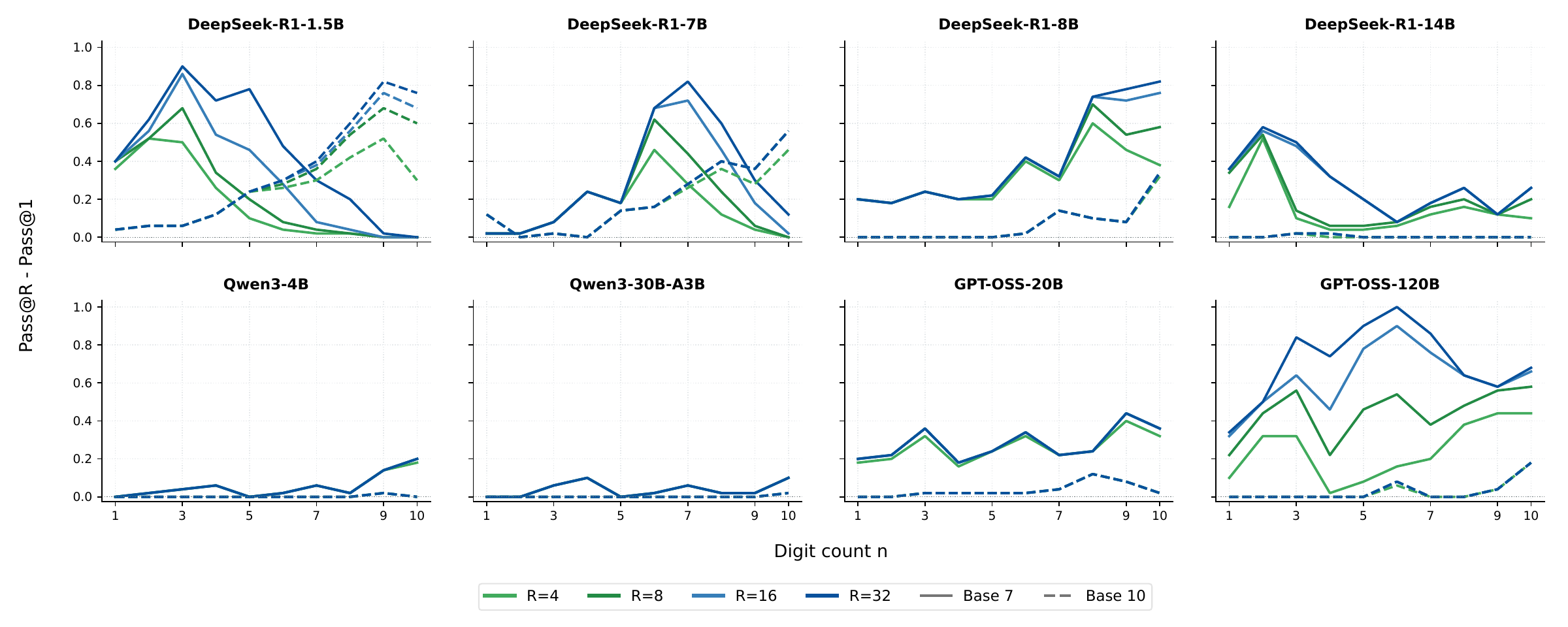}
    \vspace{0.5em}
    \includegraphics[width=\linewidth]{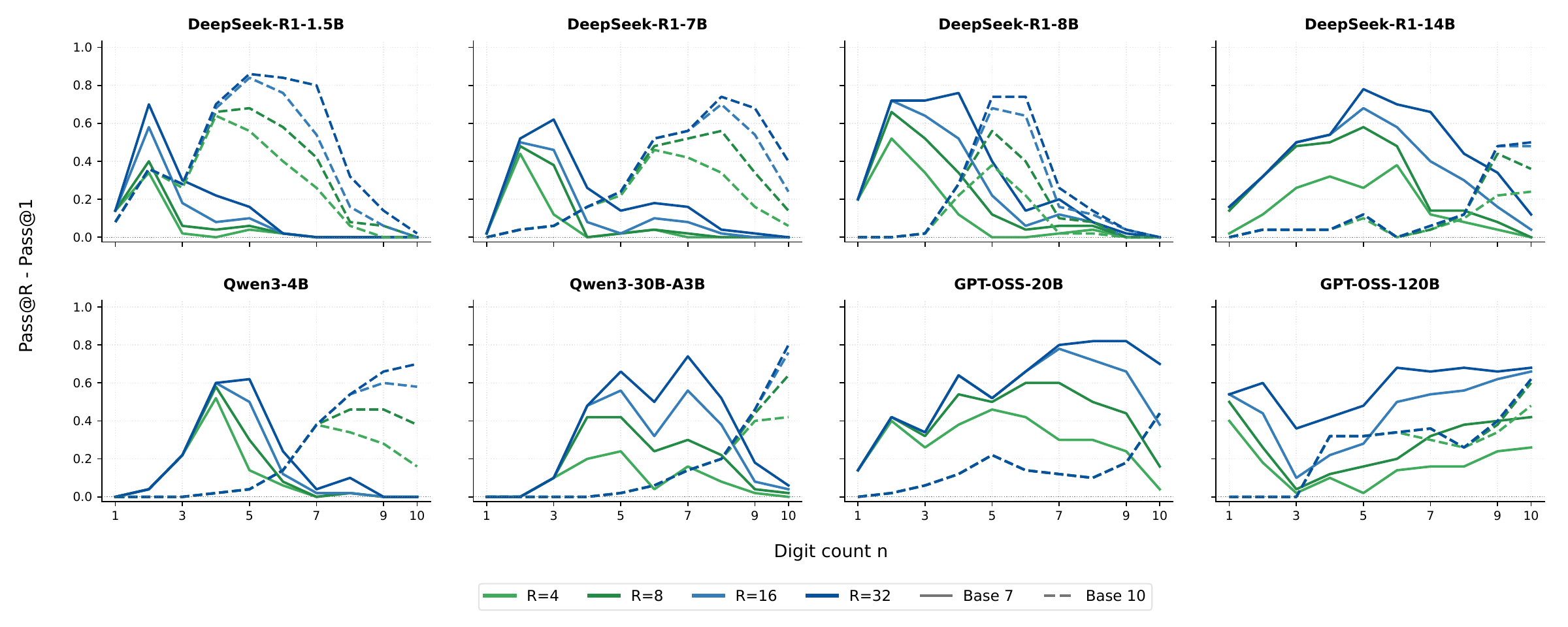}
    
    \caption{\textbf{Gain from Additional Attempts by Digit Count.} Top: addition; bottom: multiplication. Panels identify models, the horizontal axis is digit count, colors encode $R\in\{4,8,16,32\}$, and line styles distinguish bases. The plotted quantity is Pass@$R$ minus Pass@1; its gap to $1-\mathrm{Pass}@1$ equals $\hat E(R)$.}
    \label{fig:app_pass_gain}
\end{figure}

\begin{samepage}
\subsection{Eight-Digit Multiplication Trajectories}
\label{app:floor_convergence}

Figure~\ref{fig:app_floor_convergence} selects eight-digit multiplication in each base and plots the full $\hat E(R)$ trajectory for every model.
Several trajectories flatten around 8--16 attempts, while others continue to decrease through the observed window.
Figure~\ref{fig:app_checkpoint_convergence} reports the corresponding checkpoints across all conditions.

\end{samepage}
\begin{figure}[h]
    \centering
    \includegraphics[width=\linewidth]{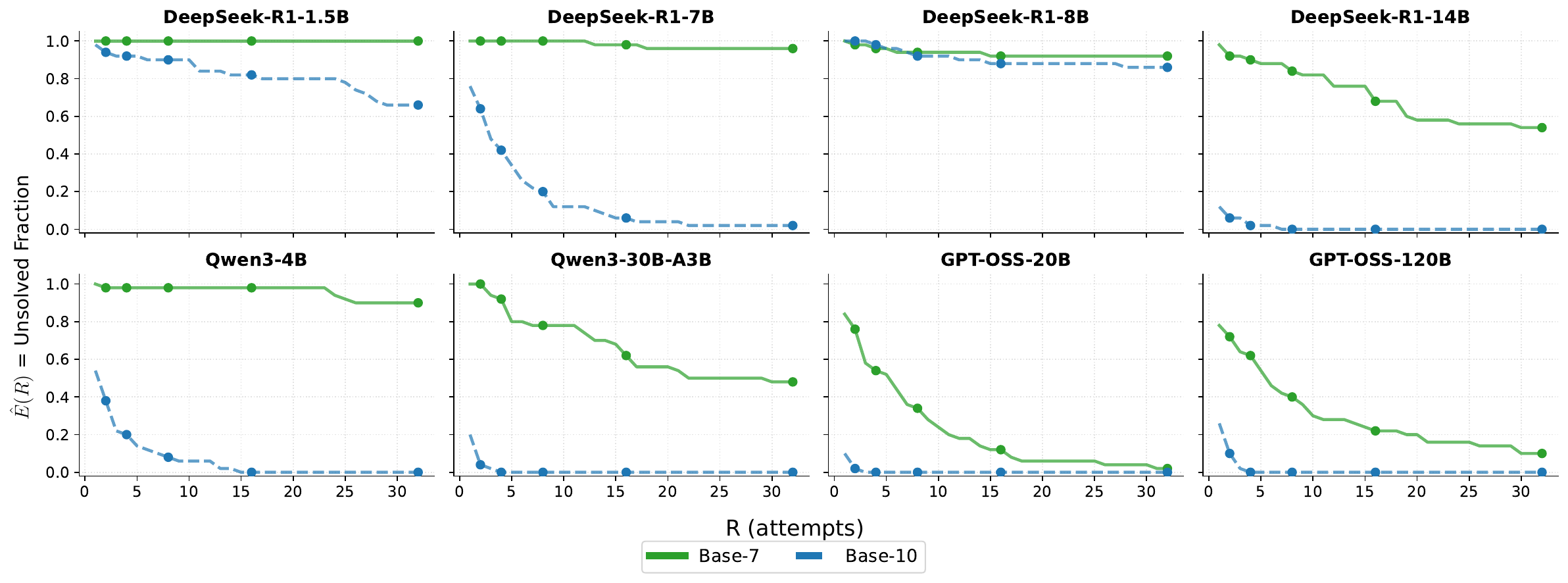}
    
    \caption{\textbf{Unsolved Fraction for Eight-Digit Multiplication.} Each model panel shows $\hat E(R)$ for Base 7 (solid green) and Base 10 (dashed blue) at $n=8$. Markers identify $R\in\{2,4,8,16,32\}$. The curves retain the full observed range from 1 to 32 attempts for both numeral bases.}
    \label{fig:app_floor_convergence}
\end{figure}

\subsection{Pass@1 and Pass@32 by Digit Count}
\label{app:pass1_vs_pass32}

Figure~\ref{fig:app_pass1_vs_pass32} pairs Pass@1 and Pass@32 at each digit count for all 320 conditions.
Panels identify models, colors distinguish bases, and the two marker types identify the attempt budgets.
Within-pair gaps measure retry gain; gaps above Pass@32 measure the unsolved fraction.

\begin{figure}[h]
    \centering
    \includegraphics[width=\linewidth]{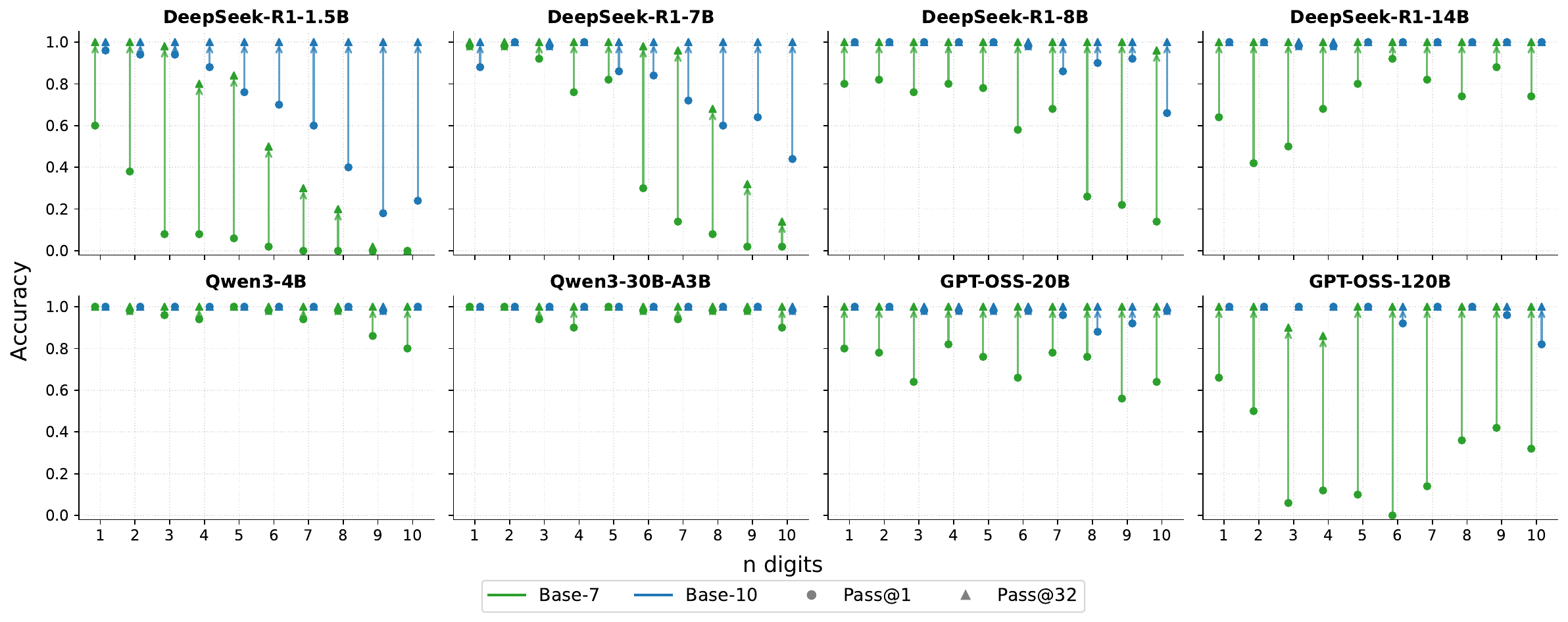}
    \vspace{0.5em}
    \includegraphics[width=\linewidth]{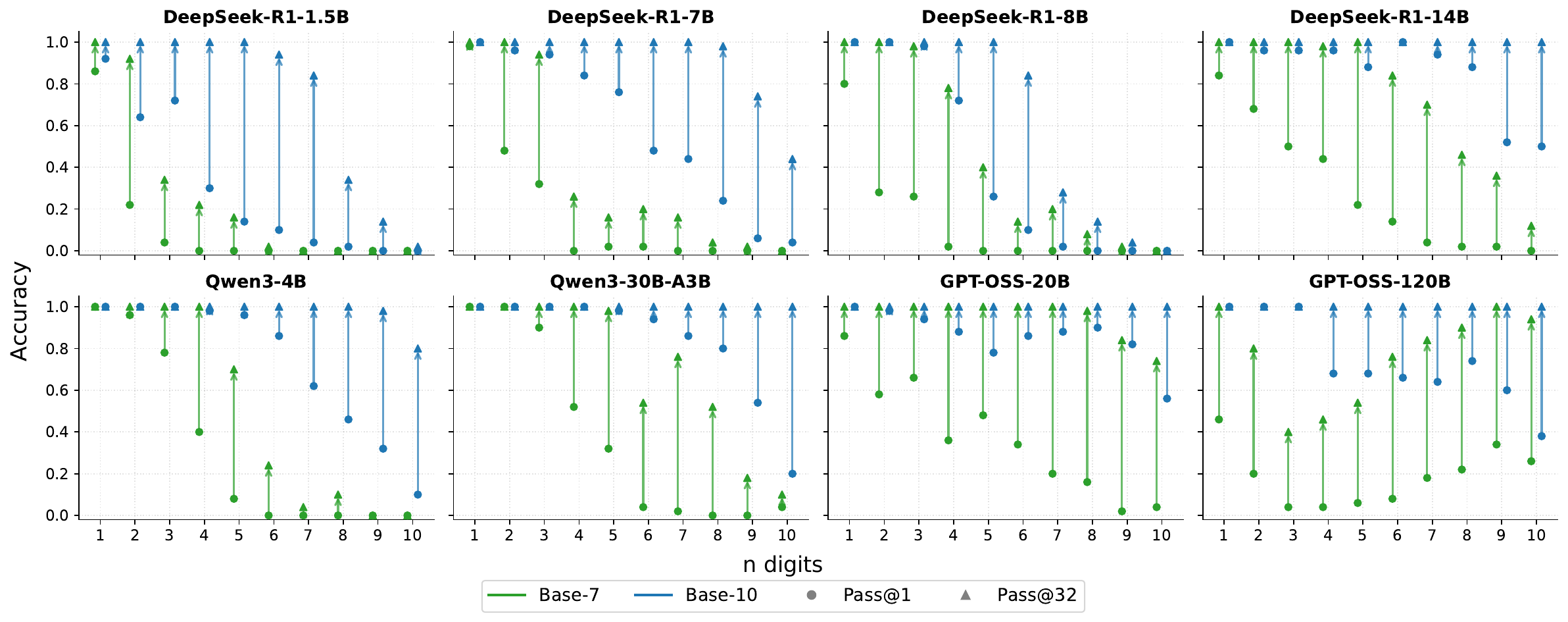}
    \caption{\textbf{Pass@1 and Pass@32 by Digit Count.} Top: addition; bottom: multiplication. Panels identify models, colors distinguish Base 7 and Base 10, and markers distinguish Pass@1 from Pass@32. The vertical separation at each digit count is the gain from additional attempts.}
    \label{fig:app_pass1_vs_pass32}
\end{figure}

\subsection{Generated Token Counts}
\label{app:token_heatmap}

Figure~\ref{fig:app_token_heatmap} reports average generated tokens per attempt.
The displayed range is 115--10,240 tokens, with the upper value set by the generation cap.
Base 7 multiplication averages 7,391 tokens, compared with 2,702 for Base 10 addition, a ratio of about $2.7\times$.
This contrast varies both the base and the operation.
GPT-OSS-120B uses 115 tokens for one-digit Base 10 addition and 116--153 for one- and two-digit Base 10 multiplication; these short decimal outputs do not explain the different coverage pattern observed for Base 7 arithmetic under the same attempt budget.\looseness=-1 

\begin{figure}[h]
    \centering
    \includegraphics[width=\linewidth]{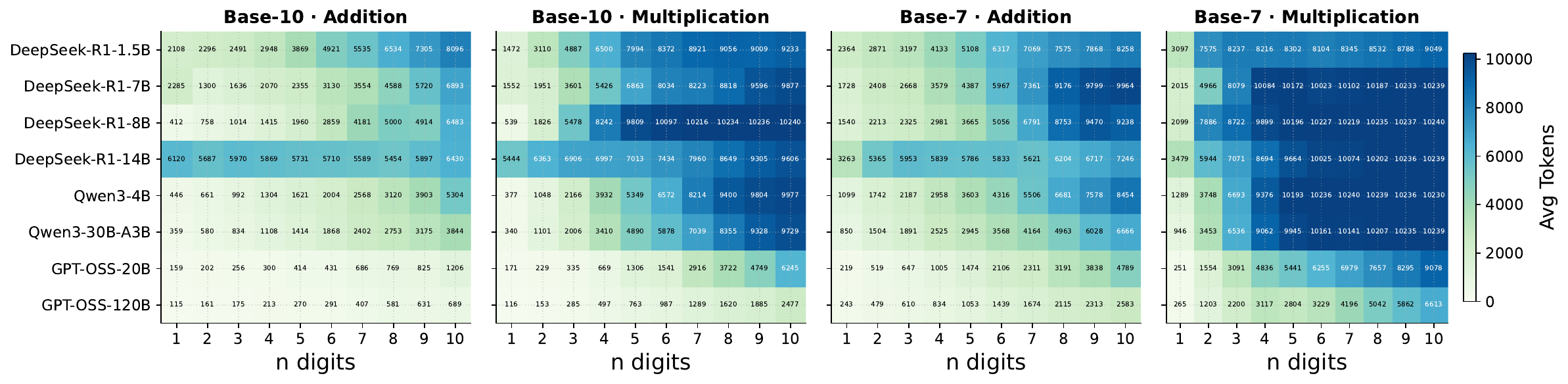}
    \caption{\textbf{Average Generated Tokens per Attempt.} Cells show mean output length for each combination of model, base, operation, and digit count under a 10,240-token generation cap. The displayed range is 115--10,240 tokens; averages are 7,391 for Base 7 multiplication and 2,702 for Base 10 addition.}
    \label{fig:app_token_heatmap}
\end{figure}

\FloatBarrier
\section{Agent Score Prediction: Derivation and Details}
\label{app:agent_prediction}

The forecasting comparison in \secref{sec:agent_prediction} uses a curve motivated by a special case of the repeated-attempt model: every instance is solvable and checks accept exactly the correct candidates.

We evaluate agent score prediction using each task--model pair's scores at 2 and 4 hours, together with priors fitted to other tasks in the same category, excluding the target task.
At 12 hours, the retry curve and the sigmoid curve in log time obtain median relative errors of 10.7\% and 10.9\%, respectively, over 248 pairs with positive observed scores.\looseness=-1 

\subsection{From Repeated Attempts to a Score Curve}
\label{app:agent_derivation}

We extend the model for generating correct candidates in~\appref{app:coverage_diagnostic} to continuous time.
Equation~\eqref{eq:effective_failure_rate} approximates the probability of no correct candidate after $R$ attempts:
\begin{equation*}
    q_t^{\mathrm{clean}}(R) \;\approx\; q_t^{\mathrm{floor}} + (1 - q_t^{\mathrm{floor}})\cdot\varepsilon_{\mathrm{loc},t}^{\,R}.
    \tag*{\ref*{eq:effective_failure_rate}, restated}
\end{equation*}

In the special case $q_t^{\mathrm{floor}}=0$ and $\varepsilon_{\mathrm{loc},t}=\varepsilon$, every instance has the same per-attempt failure probability. A perfect check gives $\alpha=0$ and $\rho=\varepsilon$, with success probability
\begin{equation}
    1 - q_t^{\mathrm{clean}}(R) \;=\; 1 - \varepsilon^R.
    \label{eq:app_step_success}
\end{equation}

The general loop also includes incorrect acceptance in its failure bound (Eq.~\eqref{eq:app_atomic_failure}).

\paragraph{Mapping to agent tasks.}

For the prediction model, $t$ is elapsed time in hours. We replace $R$ with $t/T$, where the positive time scale $T$ is fitted together with the remaining parameters.
The fitted score $S(t)\in[0,S_{\max}]$ is a continuous-time analogue of success probability, scaled by the task-specific ceiling $S_{\max}$ to represent cumulative task credit.\looseness=-1 

Scaling the resulting success curve by the task-specific ceiling $S_{\max}$ gives:
\begin{equation}
    S(t) = S_{\max} \cdot \bigl(1 - \varepsilon^{t/T}\bigr).
    \label{eq:app_agent_simplified}
\end{equation}
Equation~\eqref{eq:app_agent_simplified}, the \emph{retry curve}, substitutes $R=t/T$ and scales the result by the task ceiling.

\paragraph{Comparison curve.}
We also evaluate a sigmoid curve in log time:
\begin{equation}
    S(t) = \frac{S_{\max}}{1 + (t_{\mathrm{mid}} / t)^{\beta}},
    \label{eq:app_agent_logsig}
\end{equation}
Here $t_{\mathrm{mid}}>0$ is the time when $S=S_{\max}/2$, and $\beta>0$ controls curve steepness; this $\beta$ is unrelated to the local failure probability $\beta_t(R;s)$. The expression is a sigmoid function of $\log t$, labeled ``Sigmoid (log time)'' throughout the figures and tables for this forecasting comparison.

Both models are monotone and bounded and approach their fitted ceilings as $t$ increases.

\subsection{Experimental Setup}
\label{app:agent_setup}

\paragraph{Data.}
We use publicly reported scores from the EdgeBench long-horizon agent benchmark~\citep{edgebench2026}, which evaluates 5 frontier models across 51 tasks spanning 6 categories (Formal Mathematics, Games, Systems Engineering, Optimization, Scientific Computing, and Knowledge) at time budgets $t \in \{2, 4, 6, 8, 10, 12\}$ hours.
All six scores are available for 252 task--model pairs.

\paragraph{Prediction protocol.}

For each target (task, model) pair, we use $S(2\text{h})$ and $S(4\text{h})$ as its observed scores, fit model parameters using the priors below, and predict $S(t)$ for $t \in \{6, 8, 10, 12\}$h.
Both curves use category-level priors, meaning parameter estimates obtained from other tasks before fitting the target pair. The retry curve uses $(\varepsilon_{\mathrm{cat}},c_{\mathrm{cat}})$, the median fitted failure parameter and score-ceiling ratio; the sigmoid curve in log time uses $\beta_{\mathrm{cat}}$, the median slope.
For each target task, priors are medians of fits to all six observations from other tasks in the same category. Every model trajectory belonging to the target task is excluded when computing these priors.\looseness=-1 

\paragraph{Parameter estimation.}
For the retry curve (Eq.~\ref{eq:app_agent_simplified}), we jointly optimize $(\varepsilon,T,S_{\max})$ using maximum a posteriori (MAP) estimation: squared error on the two observed points plus a penalty for deviation from $(\varepsilon_{\mathrm{cat}},c_{\mathrm{cat}})$.
Because $\varepsilon^{t/T}$ depends on $\varepsilon$ and $T$ only through $-\log(\varepsilon)/T$, we do not interpret the two parameters separately.
For the sigmoid curve in log time (Eq.~\ref{eq:app_agent_logsig}), we optimize $(S_{\max}, t_{\mathrm{mid}}, \beta)$ with regularization toward $\beta_{\mathrm{cat}}$.
For both curves, fitting uses regularization coefficient $2.0$ and Nelder--Mead optimization with at most 2,000 iterations and $10^{-4}$ parameter and objective tolerances. The retry curve uses $0.01<\varepsilon<0.999$, $0.05<T\leq10$, and $0<S_{\max}\leq105$; the sigmoid curve in log time uses $0.01<t_{\mathrm{mid}}\leq500$, $0.05<\beta\leq6$, and $0<S_{\max}\leq105$.
MAP optimization produces numerical predictions for all 252 pairs, including the 53 cases where an analytic closed-form solution for the sigmoid curve in log time does not exist (i.e., when the observed growth ratio $S(4\text{h})/S(2\text{h})$ is inconsistent with the prior slope $\beta_{\mathrm{cat}}$).\looseness=-1

\paragraph{Fitting inputs and regularization.}
For fitting only, let $u=\max\{S(2),0\}$ and let $v=S(4)$ when $S(4)>u$, or $v=1.001u+0.01$ otherwise. Both procedures use these same adjusted early observations; evaluation uses the original later scores.
The ceiling ratio for each source-task fit is $S_{\max}/\max\{S(4),0.1\}$, using that source task's observed four-hour score. The prior $c_{\mathrm{cat}}$ is the median of these ratios after excluding the target task.
Let $\sigma_\varepsilon,\sigma_c,\sigma_\beta$ be the population standard deviations of the corresponding source-fit parameters, with
\[
s_\varepsilon=\max\{\sigma_\varepsilon,0.05\},\qquad
s_c=\max\{\sigma_c,0.1\},\qquad
s_\beta=\max\{\sigma_\beta,0.15\}.
\]
Writing $f_{\mathrm{retry}}$ and $f_{\mathrm{sig}}$ for the two score curves, the implemented objectives are
\begin{align*}
\mathcal L_{\mathrm{retry}}
&=[f_{\mathrm{retry}}(2)-u]^2+[f_{\mathrm{retry}}(4)-v]^2\\
&\quad+2\left[\left(\frac{\varepsilon-\varepsilon_{\mathrm{cat}}}{s_\varepsilon}\right)^2
+\left(\frac{S_{\max}/\max\{v,0.1\}-c_{\mathrm{cat}}}{s_c}\right)^2\right],\\
\mathcal L_{\mathrm{sig}}
&=[f_{\mathrm{sig}}(2)-u]^2+[f_{\mathrm{sig}}(4)-v]^2
+2\left(\frac{\beta-\beta_{\mathrm{cat}}}{s_\beta}\right)^2.
\end{align*}
The sigmoid penalizes slope deviations. Both optimizers start with ceiling $\min\{v c_{\mathrm{cat}},100\}$. A coarse parameter grid supplies the starting point; the optimizer's output replaces it only when the objective improves.
Prior construction uses unregularized six-observation fits with up to 3,000 iterations, $S_{\max}\leq110$, and, for the retry curve, $T\leq8$. The bounds above apply to the target-pair prediction fits.

\subsection{Results}
\label{app:agent_results}

Table~\ref{tab:agent_prediction} reports median relative errors and the fractions of predictions below the 10\% and 20\% error thresholds at all four prediction times.
Median errors are similar, with neither curve having a smaller error at every prediction time.
The retry curve has at least as many predictions below 10\% relative error, and more below 20\%, at every prediction time.\looseness=-1 

\begin{table}[h]
    
    \caption{\textbf{Agent Score Prediction from 2- and 4-Hour Observations.} Both curves use priors from other tasks as specified in Appendix~\ref{app:agent_setup}. Median relative errors and fractions below 10\% and 20\% use 248 pairs with positive observed scores at each prediction time. All 252 pairs receive predictions; four pairs with zero observed scores are excluded from relative errors at each prediction horizon.}
    \label{tab:agent_prediction}
    \centering
    \footnotesize
    \renewcommand{\arraystretch}{1.15}
    \setlength{\tabcolsep}{2.0mm}
    \begin{tabular}{lcccccc}
    \toprule[0.95pt]
    \multirow{2}{*}[-0.5ex]{\textbf{Time}} & \multicolumn{3}{c}{\textbf{Retry Curve} (Eq.~\ref{eq:app_agent_simplified})} & \multicolumn{3}{c}{\textbf{Sigmoid (log time)} (Eq.~\ref{eq:app_agent_logsig})} \\
    \cmidrule(lr){2-4}\cmidrule(lr){5-7}
    & Median & ${<}10\%$ & ${<}20\%$ & Median & ${<}10\%$ & ${<}20\%$ \\
    \midrule[0.6pt]

    6 hours  & 3.8\% & 71\% & 87\% & 3.9\% & 67\% & 81\% \\
    8 hours  & 6.5\% & 61\% & 78\% & 5.8\% & 58\% & 73\% \\
    10 hours & 8.9\% & 53\% & 69\% & 8.8\% & 52\% & 66\% \\
    12 hours & 10.7\% & 48\% & 65\% & 10.9\% & 48\% & 62\% \\
    \bottomrule[0.95pt]
    \end{tabular}
\end{table}

Figures~\ref{fig:agent_scatter} and~\ref{fig:agent_horizon} show the 12-hour predictions and increasing median error at later times. Both curves have median error below 11\% at all four prediction times.

\begin{figure}[h]
    \centering
    \includegraphics[width=0.65\linewidth]{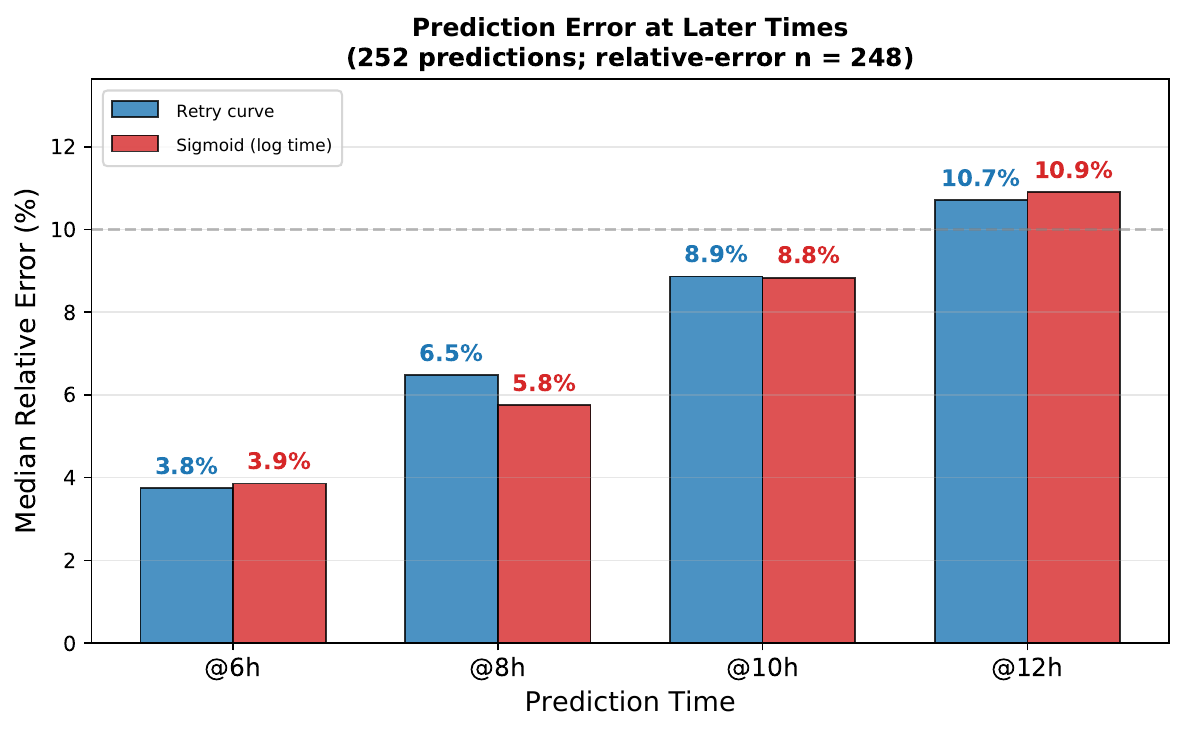}
    \caption{\textbf{Median Relative Error by Prediction Time.} Both curves use the protocol in Appendix~\ref{app:agent_setup}. Relative errors use 248 pairs with positive observed scores at each prediction time; the dashed line marks 10\%.}
    \label{fig:agent_horizon}
\end{figure}

\subsection{Interpretation}
\label{app:agent_interpretation}

The two curves have similar median errors when fitted to early observations with priors from other tasks in the same category.
These predictions use early scores and information from other tasks; they do not test the checks assumed by the reuse theorem.
Although the source tasks are shared, the penalties constrain different parameters: failure and ceiling ratio for the retry curve, and slope for the sigmoid. Both procedures also use the ceiling-ratio prior for initialization.
The differences compare the fitted procedures; they do not isolate curve shape under identical parameter information.
Predictive fit does not identify agents' internal checking or state-reuse behavior.
The experiment predicts later scores; it does not evaluate early stopping or compute allocation.\looseness=-1